\setlist[enumerate]{leftmargin=.5in}
\setlist[itemize]{leftmargin=.5in}
\definecolor{p1color}{RGB} {200,  16,  46} 
\definecolor{p2color}{RGB} {  0, 179, 136} 
\definecolor{p3color}{RGB} {246, 190,   0} 
\definecolor{p4color}{RGB} {136, 139, 141} 
\definecolor{p5color}{RGB} {255, 249, 217} 
\definecolor{p6color}{RGB} { 51, 102, 155} 
\newtheorem{theorem}{Theorem}
\newtheorem{remark}{Remark}
\newtheorem{lemma}{Lemma}
\newtheorem{definition}{Definition}
\newtheorem{proposition}{Proposition}
\crefname{remark}{remark}{remarks}
\Crefname{remark}{Remark}{Remarks}
\newcolumntype{C}{>{\columncolor{gray!20}}c}
\def\M{\mathcal{M}}
\def\D{\mathcal{D}}
\def\P{\mathcal{P}}
\def\R{\mathbb{R}}
\def\N{\mathbb{N}}
\newcommand{\Id}{\text{Id}}
\DeclareMathOperator*{\argmin}{arg\,min}
\DeclareMathOperator{\Tr}{Tr}
\newcommand{\defeq}{\ensuremath{\mathrel{\mathop:}=}}
\newcommand{\mls}[1]{\mathit{#1}}
\newcommand{\inum}[1]{\num[round-precision=0,group-minimum-digits=3,group-separator={,}]{#1}}
\newcommand\snum[1]{\num[
scientific-notation=true,
round-precision=2,
fixed-exponent=1,
detect-weight=true,
detect-family=true,
round-mode=places,
retain-explicit-plus=true,
mode=text,fixed-exponent=0,
retain-explicit-plus=true,
output-exponent-marker=\text{e}]{#1}}
\newcommand{\tabadjust}{\centering\footnotesize}
\title[Fast $k$-means clustering in Riemannian manifolds via Fr\'{e}chet maps]{Fast $k$-means clustering in Riemannian manifolds via Fr\'{e}chet maps: Applications to large-dimensional SPD matrices}
\author{Ji Shi}
\author{Nicolas Charon}
\author{Andreas Mang}
\author{Demetrio Labate}
\author{Robert Azencott}
\def\@setaddresses{} 
\thanks{This work was partly supported by the National Science Foundation (NSF) through the grant DMS-2438562 (NC) and DMS-2145845 (AM) and by the Simons Foundation grant MPS-TSM-00002738 (DL). Any opinions, findings, conclusions, or recommendations expressed herein are those of the authors and do not necessarily reflect the views of the NSF. This work was completed in part with resources provided by the Research Computing Data Core at the University of Houston.}
\date{\today}
\keywords{Clustering, Fr\'{e}chet map k-means, Manifold data,  Symmetric positive definite matrices}
\subjclass[2020]{62H30, 53Z50}
\begin{document}
\maketitle
\begin{center}
\vspace{-0.8cm}
\small\textit{Department of Mathematics, University of Houston, Houston, TX, US}
\end{center}

\begin{abstract}
We introduce a novel, efficient framework for clustering data on high-dimensional, non-Euclidean manifolds that overcomes the computational challenges associated with standard intrinsic methods.  The key innovation is the use of the  $p$-\textit{Fr\'{e}chet map} $F^p : \mathcal{M} \to \mathbb{R}^\ell$ -- defined on a generic metric space $\M$ -- which embeds the manifold data into a lower-dimensional Euclidean space $\mathbb{R}^\ell$ using a set of reference points $\{r_i\}_{i=1}^\ell$, $r_i \in \M$. Once embedded, we can efficiently and accurately apply standard Euclidean clustering techniques such as k-means. We rigorously analyze the mathematical properties of $F^p$ in the Euclidean space and the challenging manifold of $n \times n$ symmetric positive definite matrices $\mls{SPD}(n)$. Extensive numerical experiments using synthetic and real $\mls{SPD}(n)$ data demonstrate significant performance gains: our method reduces runtime by up to two orders of magnitude compared to intrinsic manifold-based approaches, all while maintaining high clustering accuracy, including scenarios where existing alternative methods struggle or fail.
\end{abstract}

\section{Introduction}\label{s:intro}

Cluster analysis is one of the most common tasks in modern data science. Given a set of observations in some data space, being able to identify specific subgroups based on similarity patterns across the dataset is often a key stepping stone for data exploration and further analysis. Unlike classification, clustering is a fundamentally unsupervised problem as no labels are available a priori. Thus, the extraction of clusters can only be achieved by relying on some notion of proximity between data points, which is typically measured via a certain distance defined on the data space. Perhaps the most widely used general clustering method is the well-known \textit{k-means} algorithm, in which the partition of the dataset is built to minimize its resulting dispersion. Although this problem cannot be solved exactly in polynomial time, several greedy approximate schemes have been proposed, in particular Lloyd's celebrated k-means algorithm \cite{lloyd1982}, which is widely used in applications. One clear upside of Lloyd's algorithm is its simplicity: the whole scheme consists of a cluster assignment step followed by an update of the cluster centroids, which are iterated until stabilization of the clusters. The first step is typically performed by assigning each data point to the closest current cluster centroid as measured by the distance in the data space. The second step involves the computation of the centroid of each of the clusters at the current iteration. For data living in an Euclidean space, the latter simply reduces to computing the mean of each cluster, and thus both steps of Lloyd's scheme have closed-form updates, making the algorithm very fast to run even on spaces of large dimensions.

However, some obvious difficulties arise when, instead, data points live in a non-Euclidean manifold $\M$, for which the computation of the distance between two given points may require solving a geodesic boundary value problem. More importantly, even though theoretical extensions of the notion of centroid or mean (known as Fr\'{e}chet or Karcher mean) exist for a certain class of manifolds $\M$, these are highly non-trivial and almost always involve numerically costly optimization on $\M$. Due to the need for repeated evaluations of such distances and means in Lloyd's approach, its direct adaptation to the manifold setting, often referred to as intrinsic k-means~\cite{tan2024intrinsic}, can easily become prohibitively expensive, even more so on high-dimensional manifolds. Yet manifold data is increasingly common in many applications. In robotics, for instance, pose configurations are typically modeled as products of elements in the manifold of 3D rotations. In signal/image analysis, one is often interested in correlation or covariance matrices of signals, which naturally live in the manifold of symmetric positive definite matrices $\mls{SPD}(n)$. The fields of shape analysis and computer vision also consider data points (such as curves or surfaces) in inherently nonlinear spaces. We refer the reader to recent surveys such as~\cite{pennec2019riemannian} for more highlights on the growing importance of manifold data, and its associated challenges.

This paper introduces an efficient approach for k-means clustering of potentially large-dimensional non-Euclidean data, which we coin \emph{Fr\'{e}chet Map Clustering} ({\bf FMC}). The core idea is to map the data into some (preferably smaller) Euclidean space in which standard Euclidean clustering techniques can be applied at much lower cost compared to the original manifold $\M$. Specifically, our proposed approach relies on the special family of $p$-\textit{Fr\'{e}chet maps} $F^p : \mathcal{M} \to \mathbb{R}^\ell$ defined on a generic metric space $\M$ with distance $d_\M$ and parameterized by a set of reference points in $\M$. While this is an established strategy in modern machine learning with architectures such as autoencoders that are designed to learn a latent space data embedding tailored to a given task, we follow here a different paradigm by focusing on a particular and more restrictive, albeit more interpretable, class of mappings for which no prior training phase is needed.

Although in principle FMC can be applied to broad classes of manifolds or metric spaces, our main focus in this work is on the Riemannian manifold of $n \times n$ symmetric positive definite ({\bf SPD}) matrices $\mls{SPD}(n)$. SPD matrices, in particular, correlation matrices, appear in a range of applications, including high-dimensional statistics, image analysis, multi-sensor monitoring, and communication networks. In neuroimaging, for instance, correlation matrices of high dimensions derived from diffusion tensor imaging or functional magnetic resonance imaging are often employed to model the strength of neural connections between different brain sites and to assess brain function in normal and disease states~\cite {dryden2009non, hekmati2020, you2021}. In many of these problems, one is faced with the task of clustering large sets of SPD matrices, for example, high-dimensional correlation matrices, and we shall validate the potential of the FMC approach in this setting.

\subsection{Contributions}

We present a new method for efficiently clustering large-dimensional non-Euclidean data by embedding them into a Euclidean space using $p$-Fr\'echet maps, with a focus on the cases $p=1$ and $p=2$. The key contributions of this work are the following.
\begin{itemize}
\item We proved several fundamental properties of the $p$-Fr\'echet map $F^p$ (for general $p$ and, in particular, for the specific choices of $p=1,2$) when defined on a Euclidean space. These include the differentiability of $F^p$ and control bounds on the induced distortion, as well as sufficient conditions under which $F^p$ is a diffeomorphism onto its image. We further analyze the question of separability of the image of balls under Fr\'{e}chet maps, a key property when performing clustering in the image space.
\item
We examined the extension of this theoretical analysis to the significantly more challenging case of the $\mls{SPD}(n)$ manifold, highlighting in particular the important questions remaining open in this setting.
\item
We proposed a principled, yet practical strategy for selecting the reference points that control the map $F^p$ while maximizing stability and clustering accuracy.
\item
We empirically evaluated the proposed method against existing approaches for clustering data on SPD manifolds using both synthetic data and experimental data.
\end{itemize}

Our results confirm that {\bf the proposed methodology can efficiently cluster data in Riemannian manifolds without significantly sacrificing accuracy}. The proposed approach is up to {\bf two orders of magnitude faster} than the standard intrinsic $k$-means on the manifold $\M$, and we obtain {\bf runtimes that are competitive with similar strategies} operating in the tangent space, while {\bf producing high-accuracy clustering results more consistently}.

\subsection{Limitations}

Despite having established some clear advantages of the FMC framework, there are unresolved issues that will be the subject of future work, including the following.
\begin{itemize}
\item
Although we have a rather complete picture of the mathematical properties of Fr\'echet maps in the Euclidean case, there are important theoretical gaps to generalize some of those results to $\mls{SPD}(n)$ and to the broader class of \textit{Cartan--Hadamard manifolds}.
\item
We empirically observed that the choice of reference points has a significant impact on the performance of FMC. Although we have established a principled heuristic for selecting reference points, which yields very competitive results, additional work is required to derive strategies with rigorous theoretical guarantees.
\end{itemize}

\subsection{Related Work}

In recent years, there has been significant interest in the design of geometry-aware methods for classifying or clustering data in non-Euclidean, high-dimensional spaces. Due to the potential intricacies and numerical cost associated with some basic operations in manifolds (the estimation of a mean, for instance), a common workaround consists of embedding data points into some Euclidean space in a way that preserves as much of the original structure in the dataset as possible. Some very popular strategies involve the use of metric multidimensional scaling (MDS) \cite{mead1992review} or t-distributed stochastic neighbor embedding (t-SNE) \cite{hinton2002stochastic}. A downside of both of these methods, however, is that they require the computation of all distances between every pair of data points, which can represent a non-negligible amount of computations in the manifold case. Furthermore, recent works such as \cite{bergam2025t} have suggested that t-SNE embeddings may, in some situations, generate artefactual clusters.

Another approach, which is more closely related to ours, especially with regard to $\mls{SPD}(n)$, is the \textit{log-Euclidean} framework of~\cite{arsigny2007geometric, pennec2020manifold}. In its standard form, the log-Euclidean setting consists of a linearized tangent space approximation of the metric at a specific template point (usually the Fr\'{e}chet mean of the dataset). This results in a fast-to-compute and often efficient data embedding strategy to perform clustering. We demonstrate that FMC is competitive with the log-Euclidean framework in terms of runtime and clustering accuracy. More importantly, we show empirically that it maintains good clustering accuracy in settings where the performance of the log-Euclidean framework deteriorates.

Lastly, we will also compare our approach to the generic \emph{Intrinsic Riemannian Clustering}, i.e., the direct transposition of Lloyd's algorithm to the SPD manifold setting. For the computation of the clusters' Fr\'{e}chet means, in addition to the standard gradient descent method from~\cite{afsari2013convergence}, we also consider alternative faster approximation schemes such as the \emph{iterative centroid method proposed} in~\cite{ho2013recursive,cury2013template} based on the recursive scheme from~\cite{sturm2003probability}.

\subsection{Layout}

We provide background material in \Cref{s:background}. We start by briefly discussing the problem of $k$-means clustering for manifold data in \Cref{s:k_means_manifold}. We introduce the general notion of Fr\'echet mapping in \Cref{s:frechetmap}. We explore the properties of the Fr\'echet mapping for $\mls{SPD}(n)$ in \Cref{s:spd-mat}. This includes the structure of $\mls{SPD}(n)$ (see \Cref{s:spd_affmetric}), properties of $F^p$ for $\mls{SPD}(n)$ (see \Cref{s:frechet_spd_properties}), a comparison of different clustering methods in $\mls{SPD}(n)$ (see \Cref{s:clustering_spd}). We describe our approach for selecting reference points for $F^p$ in \Cref{s:ref_selection}. Numerical results are reported in \Cref{s:results}. We conclude with \Cref{s:conclusions}.

\section{Background and general framework}\label{s:background}

Before we provide some theoretical insights, we introduce the problem setting as well as notations, and define the Fr\'echet map more precisely.

\subsection{k-means clustering for manifold data}
\label{s:k_means_manifold}

Clustering is a fundamental problem of machine learning. From a set of observations in a data manifold $\M$ its goal is to partition these observations into a set of meaningful clusters $\P = (\mls{CL}_1,\dots,\mls{CL}_k)$, where each $\mls{CL}_j, j = 1, \ldots, k$, is determined by measuring the proximity between observations using a distance on $\M$.

Among existing clustering approaches, the $k$-means algorithm~\cite{macqueen1967some} remains among the most widely used methods due to the simplicity of its formulation and its ability to easily adapt to different types of data. Given a finite set of observations $\D = \{x_i\}_{i=1}^n \subset \M$ and a fixed number $k \in \mathbb{N}$ of target clusters, the $k$-means algorithm looks for a partition $\P$ that minimizes the \textit{total dispersion} defined by
\begin{equation}
\label{eq:totaldisp}
\operatorname{totdisp}(\P)
= \sum_{i=1}^k \operatorname{disp}(\mls{CL}_i)
= \sum_{i=1}^k \frac{1}{|\mls{CL}_i|}\sum_{x\in \mls{CL}_i} d_{\M}(x,c_i)^2,
\end{equation}
where $d_{\M} : \M \times \M \to \mathbb{R}$ denotes the distance on the data manifold being considered, $c_i \in \M$ is the centroid or ``mean'' of the cluster $\mls{CL}_i$ in the partition $\P$, and $|\mls{CL}_i|$ is the cardinality of the cluster $\mls{CL}_i$. To make the above definition precise, one needs to make more specific assumptions on the structure of the data manifold $\M$. A typical setup is to consider a Riemannian manifold $\M$ with a corresponding Riemannian distance $d_\M$, in which case one can extend the notion of Euclidean mean via the so-called Fr\'{e}chet mean (also known as the Fr\'{e}chet mean). Leaving aside for now the question of the existence and uniqueness of the Fr\'{e}chet mean in a Riemannian manifold, the total dispersion in \Cref{eq:totaldisp} can be interpreted as the sum of the variances of each cluster.

Although there is only a finite number of possible partitions $\P$ of $\D$, this number grows exponentially with the number of observations, so that finding a global minimum of $\operatorname{totdisp}(\P)$ is an NP-hard problem. A practical alternative to address this problem is the greedy iterative approach known as Lloyd's algorithm~\cite{lloyd1982}, which is generically referred to as the $k$-means algorithm. The classical Lloyd's algorithm in $\mathbb{R}^m$ is given as \Cref{algo:lloyd}.

\begin{algorithm}
\caption{Lloyd's algorithm~\cite{lloyd1982}.} \label{algo:lloyd}
\begin{algorithmic}[1]
\STATE {\bf Input: } A set of points $\D = \{x_1,\dots, x_N\} \subset \mathbb{R}^m$ and an initialization of the cluster centroids $C = \{c_1,\ldots,c_k\} \subset \mathbb{R}^m$
\STATE stop $\gets$ false
\WHILE{$\neg$ stop}
\STATE Assign each $x \in \D$ to the cluster with closest center, i.e., $x \in \mls{CL}_i$ for
\begin{equation}\label{i:assignment}
i= \operatorname{arg\,min}_{j=1,\ldots,k} \ d(x,c_j),
\end{equation}
\noindent where $d$ is the Euclidean distance in $\mathbb{R}^m$
\STATE Recalculate the cluster centroids $C$ by setting for each $i=1,\dots,k$,
\begin{equation} \label{i:centroid-update}
c_i \gets \frac{1}{|\mls{CL}_i|} \sum_{x \in \mls{CL}_i} x
\end{equation}
\STATE stop $\gets$ check convergence
\ENDWHILE
\STATE {\bf Output:} Cluster centroids $C = \{c_1,\ldots,c_k\}$ and corresponding clusters $\mls{CL}_1,\dots, \mls{CL}_k$
\end{algorithmic}
\end{algorithm}

\begin{remark}
\label{rem:k_means_hyperplane}
Since the cluster assignment in Algorithm~\ref{algo:lloyd} is based on the proximity to the cluster centroids $c_i$'s, it leads to a partition of the whole space $\R^m$ delimited by $k(k+1)/2$ \emph{mediatrix} hyperplanes $\{x: \|x-c_i\| = \|x-c_j\|\}$ for all pairs of cluster centers $(c_i,c_j)$. This shows that the algorithm separates the different clusters through affine hyperplanes in $\R^m$ that are the \emph{mediatrices} of the cluster centroids.
\end{remark}

Lloyd's algorithm in $\mathbb{R}^m$ can be shown to converge, but not necessarily to a global minimizer of the total dispersion. In practice, it is common to run the algorithm for multiple different initializations of the centroids and ultimately select the solution with the lowest total dispersion, which generally provides a good estimate of the true solution.

However, extending Lloyd's algorithm from $\mathbb{R}^m$ to a more general Riemannian manifold $\M$ is not as straightforward. Specifically, the centroid computation, which is simply an arithmetic mean in \Cref{i:centroid-update}, must be replaced by a Fr\'{e}chet mean on $\M$ and typically requires additional conditions on the manifold or the dataset $\D$.

Indeed, the existence and uniqueness of the Fr\'{e}chet mean of a set of points in $\M$ is not guaranteed for general Riemannian manifolds. This can be ensured either by assuming that data points in $\D$ are sufficiently concentrated or by considering a more specific structure for $\M$. A particularly well-suited class of manifolds in that regard is the \textit{Cartan--Hadamard manifolds}, which are the simply connected complete Riemannian manifolds with non-positive sectional curvature. In this case, one can ensure the existence and uniqueness of geodesics between any two points in $\M$ as well as the existence and uniqueness of Fr\'{e}chet means for any set of points in $\M$, cf. \cite{shiga1984,petersen2006riemannian,afsari2011riemannian}. Cartan--Hadamard manifolds encompass many interesting examples of data manifolds beyond Euclidean spaces that are found in applications, including the space of SPD matrices, which will be the focus of this paper.

Despite the Cartan--Hadamard manifold providing an adequate theoretical setting to extend the $k$-means algorithm to manifolds, there are still important practical difficulties compared to the Euclidean case. For some of those manifolds, an explicit expression of the geodesic distance may not be available, in which case one needs to solve a geodesic search problem to compute a distance between two points. More importantly, even in cases where the distance can be evaluated in a closed form (such as $\mls{SPD}(n)$, c.f. \Cref{s:spd_affmetric} below), there is in general no closed-form solution for the Fr\'{e}chet mean of a given set of points in $\M$. As a result, to find a centroid in~\Cref{i:centroid-update}, one needs to find an approximate solution of
\begin{equation}
\label{eq:Frechet_mean_opt}
    c_i = \underset{p \in \M}{\text{argmin}} \sum_{x \in \mls{CL}_i} d_{\M}(x,p)^2.
\end{equation}

In a Cartan--Hadamard manifold, \eqref{eq:Frechet_mean_opt} is a convex problem which can be tackled using various optimization strategies, the simplest one being the standard gradient descent method, c.f.~\cite{afsari2013convergence}. However, due to the large number of Fr\'{e}chet means that need to be estimated throughout the iterations of $k$-means, this computation could become prohibitively costly for clustering. Some alternative strategies to solve~\cref{eq:Frechet_mean_opt} include stochastic gradient descent~\cite{arnaudon2012stochastic,bonnabel2013stochastic} or the recursive barycenter scheme~\cite{sturm2003probability,ho2013recursive}. Even these approaches are usually not sufficient on their own to make $k$-means clustering efficiently tractable for large-dimensional manifolds. One of the key contributions of our work is a novel methodology that addresses this challenge.

\subsection{The Fr\'echet mapping}\label{s:frechetmap}

In this section, we present the main idea underpinning the proposed approach.

\subsubsection{General setting and basic properties}\label{s:properties-frechet}

The \textit{p-Fr\'{e}chet map} $F^p$ on a generic metric space $\M$ with distance $d_\M$ is defined as follows.

\begin{definition}
Let $(\M,d_\M)$ be a metric space and fix a set of $\ell$ points $\{r_1,\ldots,r_\ell \} \subset \M$. For $p \in \mathbb{R},\ p \geq 1$, we define the $p$-Fr\'{e}chet map associated with the list of reference points $r = (r_1,\dots,r_\ell)$
by
\begin{eqnarray*}
F_r^p: \M &\to& \mathbb{R}_+^\ell\\
x  &\mapsto& (d_\M(r_1,x)^p,\ldots,d_\M(r_\ell,x)^p).
\end{eqnarray*}
\end{definition}
In some cases, when the dependency on the reference point or order $p$ does not need to be emphasized, we shall abbreviate the notation for the Fr\'{e}chet map to $F^p$ or even simply $F$.

In this work, we will be interested in two particular values of $p$, namely $p=1$ and $p=2$. The case $p=1$ corresponds to the standard definition of the Fr\'{e}chet map introduced in the field of discrete geometry~\cite{bourgain1985lipschitz, matousek2013lectures}, where it has played a particular role in the construction of quasi-isometric embeddings of finite metric spaces into Euclidean spaces. The case $p=2$ corresponds to the squared Fr\'{e}chet mapping and offers the advantage of being differentiable everywhere when $\M$ is a smooth Riemannian manifold, together with certain convexity properties. We note that the general $p$-Fr\'{e}chet map is well-defined for any metric space $(\M,d_\M)$ as the only requirement is that the distance between any two points exists.

Following the idea of performing data clustering by applying the k-means algorithm in $\mathbb{R}_+^\ell$ on the images of the data points via $F_r$, it is important to investigate the properties of the $p$-Fr\'{e}chet maps. Our first observation is that, in a general metric space $\M$,  for any choice of reference points, the map $F_r^p$ is locally Lipschitz.

\begin{proposition}\label{prop:Lipschitz_reg_Frechet}
Let $F_r^p$, $p \ge 1$, be a $p$-Fr\'{e}chet map on a metric space $(\M,d_\M)$ associated with a list of reference points $r = (r_1,\dots,r_\ell)$ in $\M$.
For $p=1$, the Fr\'{e}chet map $F_r^1$ is a globally Lipschitz map on $\M$. If for some $x_0 \in \M$ and $\delta >0$, one has $r_1,\ldots,r_\ell \in B(x_0,\delta)$ then for any $x,x' \in B(x_0,\delta)$, it holds that
$\|F_r^p(x) - F_r^p(x')\|_{\infty} \leq p \, 2^{p-1} \, \delta^{p-1} \, d_\M(x,x')$ for any $p \ge 1$.
\end{proposition}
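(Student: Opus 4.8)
The plan is to reduce everything to a componentwise estimate. Since $\|F_r^p(x) - F_r^p(x')\|_{\infty} = \max_{1 \le i \le \ell} |d_\M(r_i,x)^p - d_\M(r_i,x')^p|$, it suffices to bound each term $|d_\M(r_i,x)^p - d_\M(r_i,x')^p|$ uniformly in $i$, and then take the maximum at the end.

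First I would handle $p=1$. Fixing a reference point $r_i$ and applying the reverse triangle inequality in $(\M,d_\M)$ gives $|d_\M(r_i,x) - d_\M(r_i,x')| \le d_\M(x,x')$ for all $x,x' \in \M$, with no restriction whatsoever on where the $r_i$ lie. Taking the maximum over $i$ shows that $F_r^1$ is $1$-Lipschitz for the $\infty$-norm (hence $\sqrt{\ell}$-Lipschitz for the Euclidean norm), globally on $\M$; this is the first assertion. For general $p \ge 1$, set $a = d_\M(r_i,x)$ and $b = d_\M(r_i,x')$. The function $t \mapsto t^p$ is $C^1$ on $[0,\infty)$ with derivative $p\,t^{p-1}$, so by the mean value theorem there is $\xi$ between $a$ and $b$ with $|a^p - b^p| = p\,\xi^{p-1}\,|a-b|$. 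Here is where the hypothesis enters: since $r_i \in B(x_0,\delta)$ and $x \in B(x_0,\delta)$, the triangle inequality gives $a = d_\M(r_i,x) \le d_\M(r_i,x_0) + d_\M(x_0,x) < 2\delta$, and likewise $b < 2\delta$, so $\xi < 2\delta$; because $p-1 \ge 0$, this yields $\xi^{p-1} \le (2\delta)^{p-1} = 2^{p-1}\delta^{p-1}$. Combining with $|a-b| \le d_\M(x,x')$ (reverse triangle inequality again) gives $|a^p - b^p| \le p\,2^{p-1}\delta^{p-1}\,d_\M(x,x')$, and taking the maximum over $i = 1,\dots,\ell$ produces the stated bound.

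The argument is elementary and there is no genuine obstacle; the only points needing a little care are that the $p=1$ claim must be stated separately because it is global (and there the constant $p\,2^{p-1}\delta^{p-1}$ collapses to $1$, consistent with but stronger than the local estimate), and that it is precisely the diameter bound $d_\M(r_i,x) < 2\delta$, which relies on $r_1,\dots,r_\ell \in B(x_0,\delta)$, that controls the factor $\xi^{p-1}$. One could alternatively bypass the mean value theorem via the elementary inequality $|a^p - b^p| \le p\,\max(a,b)^{p-1}\,|a-b|$ valid for $a,b \ge 0$ and $p \ge 1$, but the mean value theorem route is the most transparent.
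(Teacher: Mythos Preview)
Your proof is correct and follows essentially the same approach as the paper: componentwise reduction via the reverse triangle inequality for $p=1$, and for general $p$ the paper uses precisely the inequality $|a^p - b^p| \le p\,\max(a,b)^{p-1}\,|a-b|$ that you mention as an alternative, together with the same $2\delta$ bound on $d_\M(x,r_i)$ and $d_\M(x',r_i)$. Your mean value theorem version is a trivially equivalent rephrasing of that step.
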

\begin{proof}
  From the triangle inequality, we immediately see that $|d(x,r_i) - d(x',r_i)|\leq d(x,x')$ for all $i=1,\ldots,\ell$, and thus $\|F_r^1(x) - F_r^1(x')\|_{\infty} \leq d_\M(x,x')$ which proves that $F_r^1$ is $1$-Lipschitz on $\M$. For $p \ge 1$, we observe that for each $i$:
  \begin{equation*}
      \begin{aligned}
          |d_\M(x,r_i)^p - d_\M(x',r_i)^p| &\leq p \max\{d_\M(x,r_i),d_\M(x',r_i)\}^{p-1} |d_\M(x,r_i) - d_\M(x',r_i)| \\
          &\leq p \max\{d_\M(x,r_i),d_\M(x',r_i)\}^{p-1} d_\M(x,x').
      \end{aligned}
  \end{equation*}
  Since $d_\M(x,r_i) \leq d_\M(x,x_0) + d_\M(x_0,r_i) \leq 2 \delta$ and similarly for $x'$, we obtain the stated upper bound.
\end{proof}

\Cref{prop:Lipschitz_reg_Frechet} guarantees that the distortion induced by $F_r^p$ from the original distance in $\M$ to the Euclidean norm in the image space $\R^\ell_+$ remains controlled for data within a geodesic ball and even globally controlled for $p=1$.

When $\M$ is a smooth Cartan--Hadamard manifold, it is well-known that each squared distance function $x\mapsto d_\M(p,x)^2$, for $p \in \M$, is differentiable on $\M$ and its Riemannian gradient is given by $-2\log_{x} p$, where $\log_x : \M \rightarrow T_x \M$ is the logarithm map of $\M$ at the foot point $x$ (which induces a diffeomorphism between $\M$ and $T_x\M$); c.f.~\cite{pennec2018barycentric}. This implies the following result.

\begin{lemma}\label{lemma:diff_Frechet}
Let $\M$ be a Cartan--Hadamard manifold with Riemannian distance $d_\M$. Any Fr\'{e}chet map $F_r^p$ on $(\M,d_\M)$ with a list of reference points $r=(r_1,\ldots,r_\ell)$ is differentiable on $\M$ for $p\geq 2$ and on $\M\backslash \{r_1,\ldots,r_\ell\}$ for $1\leq p <2$ with the Riemannian Jacobian matrix given, on those respective sets, by:
\begin{equation*}
DF_r^p(x) = (-p d(r_i,x)^{p-2}\log_{x} r_1, \ -p d(r_i,x)^{p-2}\log_{x} r_2, \ldots, \ -p d(r_i,x)^{p-2}\log_{x} r_\ell).
\end{equation*}
\end{lemma}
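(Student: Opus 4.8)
The plan is to reduce the statement to the single-variable fact about the squared distance function cited just above the lemma, namely that for a Cartan--Hadamard manifold $\M$ and any $q \in \M$, the function $f_q : x \mapsto d_\M(q,x)^2$ is smooth with Riemannian gradient $\nabla f_q(x) = -2\log_x q$. Since $F_r^p$ has $\ell$ scalar components $F_{r,i}^p(x) = d_\M(r_i,x)^p$, it suffices to compute the gradient of each component and then stack them to form the Jacobian. So the whole argument is really just a chain-rule computation done component by component.

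First I would write $F_{r,i}^p(x) = \varphi_p(f_{r_i}(x))$ where $\varphi_p(t) = t^{p/2}$, so that each component is a composition of the smooth function $f_{r_i}$ with a real function $\varphi_p$. The chain rule on manifolds gives $\nabla F_{r,i}^p(x) = \varphi_p'(f_{r_i}(x)) \, \nabla f_{r_i}(x) = \tfrac{p}{2}\, d_\M(r_i,x)^{p-2}\cdot(-2\log_x r_i) = -p\, d_\M(r_i,x)^{p-2}\log_x r_i$, which is exactly the $i$-th column claimed. Collecting these columns for $i=1,\dots,\ell$ yields the stated formula for $DF_r^p(x)$ as a matrix whose columns live in $T_x\M$ (identifying the differential of the $\R^\ell$-valued map with the tuple of gradients of its components).

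The only real subtlety — and the step I would be most careful about — is the smoothness/differentiability bookkeeping at the exponent threshold. The function $\varphi_p(t) = t^{p/2}$ is $C^1$ on $[0,\infty)$ precisely when $p/2 \ge 1$, i.e. $p \ge 2$, so for $p \ge 2$ the composition is differentiable on all of $\M$ (and the formula makes sense everywhere, reading $d_\M(r_i,x)^{p-2}$ as $1$ when $p=2$, since $\log_x r_i \to 0$ as $x\to r_i$). For $1 \le p < 2$, $\varphi_p$ fails to be differentiable at $t=0$, which corresponds exactly to the points $x = r_i$; away from those points $f_{r_i}(x) > 0$ and the composition is perfectly smooth, so $F_r^p$ is differentiable on $\M \setminus \{r_1,\dots,r_\ell\}$ with the same formula. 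I would state these two cases explicitly and note that in the second case the apparent singularity $d_\M(r_i,x)^{p-2}\to\infty$ as $x \to r_i$ is genuine, which is why $r_i$ must be excluded.

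One small point worth verifying carefully is that the cited gradient formula $\nabla f_q(x) = -2\log_x q$ is globally valid on a Cartan--Hadamard manifold (no cut locus obstructions), which holds because such manifolds are diffeomorphic to $\R^m$ via $\exp_x$ at any point, so $\log_x$ is globally well-defined and smooth in both arguments away from the diagonal; I would simply invoke the reference \cite{pennec2018barycentric} for this and for the smoothness of $f_q$ away from its minimum. With that in hand, the proof is a two-line chain-rule computation plus the case split on $p$; there is no serious obstacle beyond keeping the regularity statement honest at $x = r_i$.
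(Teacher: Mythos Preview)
Your proposal is correct and follows exactly the route the paper intends: the paper does not give a separate proof but simply states that the lemma ``follows'' from the cited fact that $\nabla_x d_\M(q,x)^2 = -2\log_x q$ on a Cartan--Hadamard manifold, and your chain-rule argument via $\varphi_p(t)=t^{p/2}$ together with the $p\geq 2$ vs.\ $1\leq p<2$ regularity split is precisely the implicit derivation behind that claim. There is nothing to add.
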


\noindent In particular, $F_r^1$ is not differentiable at the reference points as opposed to $F_r^2$. We illustrate in the following section useful important properties of the $p$-Fr\'{e}chet maps with $p=1,2$ in the context of the clustering problem considered in this paper when $\M$ is a Euclidean space.

\subsubsection{$p$--Fr\'{e}chet maps on Euclidean space}\label{s:frechet_euclidean}

In this section, we examine the situation in which $\M$ is the space $\R^m$ equipped with the usual Euclidean metric. Although this case is not of practical interest when it comes to the clustering framework we propose, this analysis provides useful insights in preparation to~\Cref{s:frechet_spd_properties}.

To simplify the exposition, we focus on the squared Fr\'{e}chet map $F_r^2$ and mention, in remarks, the case of $F_r^1$ when relevant. If $r=(r_1,\ldots,r_\ell) \in \R^{m\times \ell}$ are given reference points, the associated Fr\'{e}chet map is $F_r^2(x) = (\|x-r_1\|^2,\ldots,\|x-r_\ell\|^2)^{\mathsf{T}}$ with differential $DF_r^2(x) = 2(x-r_1,\ldots,x-r_\ell)$. From this, we immediately see that the rank of $DF_r^2(x)$ at $x \in \R^m$ is the dimension of $\operatorname{Span}(x-r_1,\ldots,x-r_\ell)$. If $DF^2_r(x)$ is full-rank, by the local inverse function theorem, $F_r^2$ is locally invertible from a neighborhood of $x$ to its image. In this case, $DF_r^2(x)$ is rank deficient if and only if the reference points lie in some affine subspace of dimension strictly smaller than $m$ passing through $x$. It follows, conversely, that $DF_r^2(x)$ is full rank when either one of the following two conditions is satisfied:
\begin{enumerate}
    \item $\ell\geq m+1$ and the affine hull of the reference points is such that $\operatorname{Aff}(r_1,\ldots,r_\ell) = \R^m$.
    \item $\ell=m$, the reference points are affinely independent (i.e., $\operatorname{Aff}(r_1,\ldots,r_\ell)$ is of dimension $m-1$) and $x \notin \operatorname{Aff}(r_1,\ldots,r_\ell)$.
\end{enumerate}

The first situation corresponds to having more reference points than the dimension of the space placed in a ``generic position.'' In this case, $F_r$ is locally invertible at each point $x \in \R^m$. The second situation consists of taking a generic configuration of exactly $m$ reference points. In this case, the Fr\'{e}chet map is locally invertible at each point outside of the $(m-1)$-dimensional affine hull of the reference points.

The next natural question is whether $F_r^2$ is also a globally injective map. We reason by contradiction; if we assume that $F_r^2$ is not injective on $\R^m$, then there exist $x\neq x'$ such that $F_r^2(x) = F_r^2(x')$, i.e., $\|x-r_i\| = \|x'-r_i\|$ for any $i=1,\ldots,\ell$. Geometrically, this means that $r_1,\ldots,r_\ell$ all lie on the mediatrix hyperplane $H = \{z\in \R^m: \|x-z\|=\|x'-z\|\}$. This immediately implies that with $\ell \geq m+1$ and the reference points in generic position, i.e., $\operatorname{Aff}(r_1,\ldots,r_\ell) = \R^m$ as above, the Fr\'{e}chet map $F_r^2$ is necessarily injective. Let $H^0 = \operatorname{Aff}(r_1,\ldots,r_\ell)$ denote the affine hull of affinely independent reference points $r_1,\ldots,r_\ell$.  When $\ell = m$, we know that $H^0$ is an affine subspace of dimension $m-1$ that divides $\R^m$ into two disjoint halfspaces $H^{-}$ and $H^+$. In this case, it is easy to see that any two points $x,x'$, which are symmetric with respect to the hyperplane $H^0$, satisfy $F_r^2(x) = F_r^2(x')$ and thus $F_r^2$ cannot be globally injective on $\R^m$. However, $F_r^2$ is injective on both halfspaces $H^{-}$ and $H^{+}$: if not, using the same mediatrix argument as previously, the reference points would all lie at the intersection of $H_0$ and some other transverse hyperplane, which is impossible by assumption.

\begin{figure}
\centering
\begin{tabular}{ccc}
\includegraphics[height=5cm]{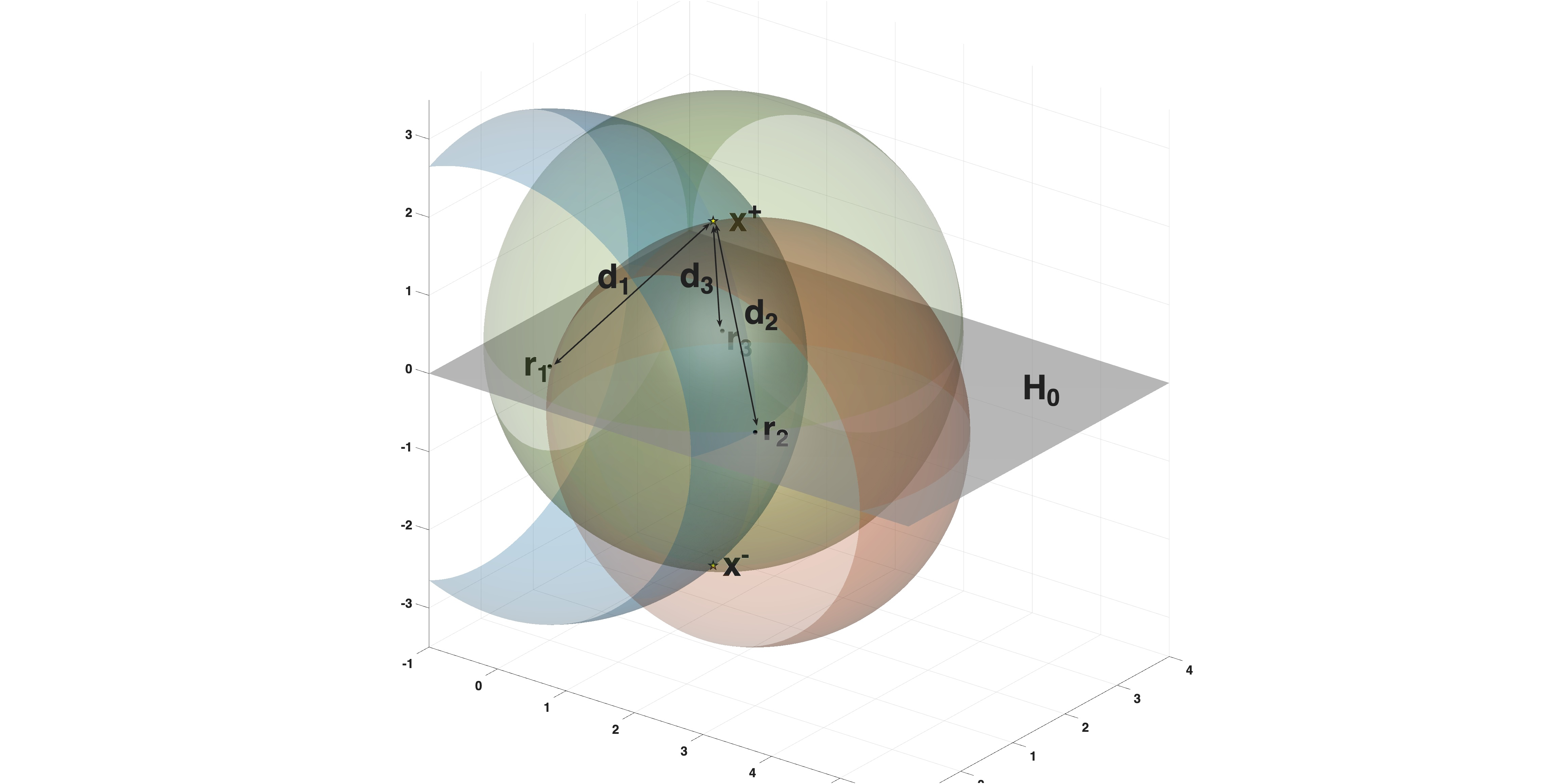}
& &
\includegraphics[height=5cm]{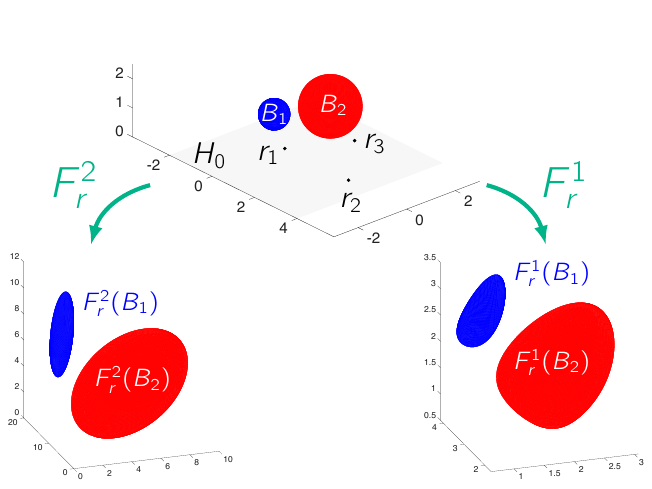} \\
(a) & & (b)
\end{tabular}
\caption{Illustration of the Fr\'{e}chet map in the Euclidean space $\M=\R^3$ with $\ell=3$ reference points. The left panel (a) shows the two symmetric points with the same given image $d=(d_1,d_2,d_3)$ by the Fr\'{e}chet map. The right panel shows the images of two disjoint balls in the upper halfspace by the Fr\'{e}chet maps $F_r^2$ and $F_r^1$.}\label{Frechet_Euc.fig}
\end{figure}

Summarizing the previous paragraphs, we have obtained the following general property for Fr\'{e}chet maps on Euclidean spaces.
\begin{theorem}
\label{thm:invertibility_Frechet_Eucl}
Let $F_r^2$ be a Fr\'{e}chet maps on $\R^m$ with reference points $r=(r_1,\ldots,r_\ell) \in \R^{m\times \ell}$. The following holds:
\begin{enumerate}
\item if $\ell\geq m+1$ and $\operatorname{Aff}(r_1,\ldots,r_\ell) = \R^m$ then $F_r^2$ is a diffeomorphism from $\R^m$ to its image $F_r^2(\R^m)\subset \R^{\ell}$;
\item if $\ell=m$ and the reference points are affinely independent, then $F_r^2$ is a diffeomorphism on each of the two halfspaces delimited by the affine hyperplane $H^0 = \operatorname{Aff}(r_1,\ldots,r_m)$.
\end{enumerate}
\end{theorem}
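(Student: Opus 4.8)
The plan is to assemble \Cref{thm:invertibility_Frechet_Eucl} directly from the ingredients already laid out in the preceding discussion, since the two claims are precisely the ``diffeomorphism'' upgrades of the local-invertibility and global-injectivity observations made there. The strategy for each part is the same two-step argument: first establish that $F_r^2$ is a local diffeomorphism at every relevant point (via the inverse function theorem applied to the rank analysis of $DF_r^2(x) = 2(x-r_1,\ldots,x-r_\ell)$), then establish global injectivity on the relevant domain (via the mediatrix hyperplane argument). A bijective local diffeomorphism onto its image is a diffeomorphism onto its image, which closes each case.

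For part (1), first I would invoke the condition $\operatorname{Aff}(r_1,\ldots,r_\ell) = \R^m$: this forces, for every $x \in \R^m$, the vectors $x - r_1, \ldots, x - r_\ell$ to span $\R^m$ (since the affine hull being all of $\R^m$ means the differences $r_i - r_j$ already span the linear space $\R^m$, and these differences are themselves differences of the $x-r_i$), so $DF_r^2(x)$ has rank $m$ everywhere and $F_r^2$ is a local diffeomorphism onto its image at each point. Then I would run the contradiction argument for injectivity: if $F_r^2(x) = F_r^2(x')$ with $x \neq x'$, then $\|x - r_i\| = \|x' - r_i\|$ for all $i$, so every $r_i$ lies on the mediatrix hyperplane $H = \{z : \|x-z\| = \|x'-z\|\}$, forcing $\operatorname{Aff}(r_1,\ldots,r_\ell) \subseteq H \subsetneq \R^m$, contradicting the hypothesis. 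Hence $F_r^2$ is an injective local diffeomorphism, i.e., a diffeomorphism onto its image.

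For part (2), I would restrict attention to one open halfspace, say $H^+$, and argue identically: for $x \in H^+$ we have $x \notin H^0 = \operatorname{Aff}(r_1,\ldots,r_m)$, and since the $m$ reference points are affinely independent, condition~(2) in the earlier rank discussion applies, giving $DF_r^2(x)$ full rank on $H^+$, hence a local diffeomorphism there. For injectivity on $H^+$: if $x, x' \in H^+$ with $x \neq x'$ and $F_r^2(x) = F_r^2(x')$, the same mediatrix argument puts all $m$ reference points on the hyperplane $H = \{z : \|x-z\| = \|x'-z\|\}$; but they already lie on $H^0$, and $H \neq H^0$ would force them into the $(m-2)$-dimensional intersection $H \cap H^0$, contradicting affine independence, while $H = H^0$ would mean $x$ and $x'$ are mirror images across $H^0$ and hence cannot both lie in the same open halfspace $H^+$. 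Either way we get a contradiction, so $F_r^2$ is injective on $H^+$, and the argument for $H^-$ is symmetric. One should also note that $H^+$ and $H^-$ are open, so ``diffeomorphism on each halfspace'' is literally a diffeomorphism between open manifolds.

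The only mild subtlety — and what I'd treat as the main point to state carefully rather than a genuine obstacle — is the passage from ``local diffeomorphism onto its image plus globally injective'' to ``(global) diffeomorphism onto its image,'' which requires checking that the inverse is continuous; this is standard (injectivity plus openness of the local diffeomorphism maps implies the inverse is continuous, in fact smooth, on the image), but worth a sentence. Everything else is a direct bookkeeping of the rank criterion and the mediatrix observation already derived in the text, so the proof is short.
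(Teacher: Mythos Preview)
Your proposal is correct and follows essentially the same approach as the paper, which presents the theorem as a summary of the preceding paragraphs: the rank analysis of $DF_r^2(x) = 2(x-r_1,\ldots,x-r_\ell)$ for local invertibility, followed by the mediatrix-hyperplane argument for global injectivity on the relevant domain. Your treatment is in fact slightly more explicit than the paper's in handling the $H = H^0$ versus $H \neq H^0$ dichotomy in part~(2), and in flagging the routine passage from ``injective local diffeomorphism'' to ``diffeomorphism onto its image.''
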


Under either of the two configurations of reference points described above, one could ask what is specifically the image of the Fr\'{e}chet map and how to derive its inverse map. This amounts to reconstructing the position of a point given its distances to the fixed set of reference points. This problem is known as the \textit{multilateration} problem in the literature; it is connected to applications in GPS positioning. Geometrically, it can be seen as finding the intersection of $\ell$ spheres centered at the reference points. \Cref{Frechet_Euc.fig}(a) depicts an illustration in $\R^3$ with $\ell = 3$ reference points, and the two resulting solutions that are symmetric with respect to the hyperplane $H^0$. Mathematically, the problem can be cast as a system of quadratic equations on the coordinates of the point $x \in \R^m$. This problem can be approached in various ways~\cite{bancroft2007algebraic}. Since the proposed Fr\'{e}chet map clustering approach does not explicitly require the computation of the inverse map, we do not elaborate further on this point. However, we note that some partial derivation for $\ell =m$ can be found as part of the proof of \Cref{app:proof_image_balls}.

\begin{remark}
\label{rem:invertibility_Frechet_Eucl}
\Cref{thm:invertibility_Frechet_Eucl} and the above statements remain nearly identical for the 1-Fr\'{e}chet map. Specifically, when $\ell \geq m+1$, $F_r^1$ is a homeomorphism from $\mathbb{R}^m$ to $F_r^1(\mathbb{R}^m)$ and a diffeomorphism on the subset $\mathbb{R}^m \backslash\{r_1,\ldots,r_\ell\}$. For $\ell = m$, the result is the same as point 2 of the theorem.
\end{remark}

A direct consequence of \Cref{thm:invertibility_Frechet_Eucl} is that, under the stated assumptions, two disjoint subsets $S_1,S_2 \subset \R^m$ have disjoint images in $\R^\ell$ under the Fr\'{e}chet map. Yet, as it comes to using Fr\'{e}chet maps in combination with $k$-means, the key question is to determine under which conditions $F_r(S_1)$ and $F_r(S_2)$ are in addition separable by a hyperplane of $\R^\ell$. This is a necessary condition for $k$-means applied to $F_r^2(S_1)\cup F_r^2(S_2)$ to have a solution that perfectly clusters those two sets (see \Cref{rem:k_means_hyperplane}). This can be guaranteed in the case of $\ell=m$ reference points thanks to the following result.

\begin{theorem}\label{thm:convex_image_Frechet}
Let $F_r^2$ be a Fr\'{e}chet map on $\R^m$ with reference points $r=(r_1,\ldots,r_m) \in \R^{m\times m}$ and assume that the reference points $\{r_1,\ldots,r_m \}$ are affinely independent in $\R^m$. Then the image $F_r^2(\R^m)$ is the interior of a paraboloid of $\R^m$. Moreover, if $B$ is a ball contained in either $H^-$ or $H^+$, its image $F_r^2(B)$ is an ellipsoid. Consequently, given two disjoint balls $B_1$ and $B_2$ both included in one of the two half-spaces $H^-$ or $H^+$, their images $F_r^2(B_1)$ and $F_r^2(B_2)$ are separable by a hyperplane in $\R^m$.
\end{theorem}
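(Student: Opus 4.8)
The plan is to establish the three assertions in sequence, the first being the crux and the other two following with comparatively little extra work. Throughout I would exploit the translation invariance $F^2_{(r_1+v,\dots,r_m+v)}(x+v)=F^2_{(r_1,\dots,r_m)}(x)$, which moves the image set, the hyperplane $H^0$, the halfspaces $H^\pm$ and any ball only by a common translation; since the $r_i$ span an $(m-1)$-dimensional affine subspace $H^0$, the translated points $r_i+v$ are linearly independent as soon as $0\notin H^0+v$, so after a generic translation I may assume $r_1,\dots,r_m$ are linearly independent. Writing $R=[\,r_1\,|\cdots|\,r_m\,]\in\R^{m\times m}$ (now invertible), $\rho=(\|r_1\|^2,\dots,\|r_m\|^2)^\top$ and $\mathbf 1=(1,\dots,1)^\top$, one has $F_r^2(x)=\|x\|^2\mathbf 1-2R^\top x+\rho$.

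For the image (assertion 1), the key point is that $y\in F_r^2(\R^m)$ iff $y=\|x\|^2\mathbf1-2R^\top x+\rho$ is solvable in $x$. Setting $v=\rho-y$ and $t=\|x\|^2$, invertibility of $R$ gives $x=\tfrac12R^{-\top}(t\mathbf1+v)$, and imposing $\|x\|^2=t$ turns this into the scalar quadratic $(\mathbf1^\top A\mathbf1)t^2-(4-2\mathbf1^\top Av)t+v^\top Av=0$ with $A:=R^{-1}R^{-\top}\succ0$; conversely every real root of this quadratic reconstructs a genuine preimage and is automatically $\ge0$, since it equals $\|x\|^2$ for that preimage. Hence $y\in F_r^2(\R^m)$ iff the discriminant is $\ge 0$, which rearranges to $\big(a\,v^\top Av-(\mathbf1^\top Av)^2\big)+4\mathbf1^\top Av-4\le0$ with $a=\mathbf1^\top A\mathbf1>0$. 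Now Cauchy--Schwarz for $\langle\cdot,\cdot\rangle_A$ shows $q(v):=a\,v^\top Av-(\mathbf1^\top Av)^2$ is a positive semidefinite quadratic form whose zero set is exactly $\R\mathbf1$, while $v\mapsto4\mathbf1^\top Av$ is a linear form nonvanishing on $\mathbf1$; this is precisely the normal form of the inequality cutting out a solid paraboloid, and since $v\mapsto y=\rho-v$ is an affine bijection, $F_r^2(\R^m)$ is a solid paraboloid of $\R^m$. The discriminant vanishes exactly where the two reflection-symmetric preimages coincide, i.e. on $F_r^2(H^0)$; so $F_r^2(H^0)$ is the boundary paraboloid and $F_r^2(H^-)=F_r^2(H^+)$ is its open interior, matching the statement.

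For assertion 2, I would use the decisive observation that $F_r^2$ restricts to an \emph{affine} map on every sphere. On $S=\{x:\|x-c\|^2=\rho_0^2\}$ one has $\|x\|^2=2\langle c,x\rangle-\|c\|^2+\rho_0^2$, hence $F_r^2(x)_i=2\langle c-r_i,x\rangle+(\rho_0^2-\|c\|^2+\|r_i\|^2)$; this affine map $G$ has linear part $x\mapsto2C^\top x$ with $C=[\,c-r_1\,|\cdots|\,c-r_m\,]$, and a one-line check (if $\sum\mu_i(c-r_i)=0$ then either $\sum\mu_i\neq0$, forcing $c\in\operatorname{Aff}(r_1,\dots,r_m)$, or $\sum\mu_i=0$, forcing $\mu=0$ by affine independence of the $r_i$) shows $c-r_1,\dots,c-r_m$ are linearly independent iff $c\notin\operatorname{Aff}(r_1,\dots,r_m)=H^0$ --- which holds here since $B\subset H^-$ forces $c\in H^-$. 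Thus $G$ is an affine isomorphism and $F_r^2(S)=G(S)$ is an ellipsoid. To pass from $S$ to $B$, I would invoke part~(2) of \Cref{thm:invertibility_Frechet_Eucl}: $F_r^2$ is a diffeomorphism of $H^-$ onto the open interior of the paraboloid, so $F_r^2(B^\circ)$ is open, connected, bounded and disjoint from the ellipsoid hypersurface $E:=F_r^2(S)$; since the complement of $E$ in $\R^m$ has a unique bounded component (the open solid ellipsoid), a standard connectedness argument gives that $F_r^2(B^\circ)$ equals that component, whence $F_r^2(B)$ is the closed solid ellipsoid bounded by $E$.

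Assertion 3 is then short: if $B_1,B_2$ are disjoint and both contained in, say, $H^-$, injectivity of $F_r^2$ on $H^-$ (again part~(2) of \Cref{thm:invertibility_Frechet_Eucl}) makes $F_r^2(B_1)$ and $F_r^2(B_2)$ disjoint, each is a compact convex set by assertion 2, and two disjoint compact convex subsets of $\R^m$ are strictly separated by a hyperplane. I expect the main obstacle to be assertion 1: coaxing the discriminant inequality into recognizably-paraboloidal form, and in particular verifying through Cauchy--Schwarz that its quadratic part is positive semidefinite with one-dimensional kernel $\R\mathbf1$ and that its linear part does not vanish there, together with the bookkeeping needed to pin the boundary of the image to $F_r^2(H^0)$ and thereby justify the ``interior of a paraboloid'' phrasing.
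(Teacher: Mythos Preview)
Your proof is correct and takes a genuinely different route from the paper's in two places. For assertion~1, the paper translates so that $r_m=0$, works with the $(m-1)\times(m-1)$ Gram matrix of $r_1,\dots,r_{m-1}$, and decomposes $x=x_{H^0}+sn$ orthogonally to read off the paraboloid condition $u^\top G^{-1}u\le d_m$ directly; you instead translate so that all $m$ reference points are linearly independent, invert the full $m\times m$ matrix $R$, and obtain the paraboloid as a discriminant condition, using Cauchy--Schwarz in $\langle\cdot,\cdot\rangle_A$ to identify its quadratic part. The paper's decomposition is more geometric and yields an explicit formula faster; your approach is more symmetric in the reference points. For assertion~2, the paper writes the ball condition $\|x-x_0\|^2\le\rho_0^2$ explicitly in the $d$-coordinates, squares an inequality, and checks that the resulting quadratic form is positive definite. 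Your observation that $F_r^2$ restricted to a sphere is \emph{affine} (because $\|x\|^2$ becomes linear once $\|x-c\|^2$ is fixed) is more conceptual and immediately gives the boundary ellipsoid; you then need the short topological filling-in argument via \Cref{thm:invertibility_Frechet_Eucl}, whereas the paper's computation handles interior and boundary in one stroke but is less transparent about why an ellipsoid should appear. Both routes arrive at Hahn--Banach for assertion~3.
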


The proof is included in \Cref{app:proof_image_balls}. This result shows that, despite the distortion induced by the Fr\'{e}chet map, $F_r^2$ still preserves the convexity of balls in $\M$ and thus their separability by hyperplanes in the image space. \Cref{Frechet_Euc.fig}(b) shows a particular example for the case $m=3$. We point out that this is specific to the case of the 2-Fr\'{e}chet map: for the 1-Fr\'{e}chet map, the image of a ball no longer has a simple ellipsoidal geometry, as also shown in \Cref{Frechet_Euc.fig}(b). We can still show the following weaker result, which is proved in \Cref{app:proof_convex_image_Frechet1}:
\begin{theorem}\label{thm:convex_image_Frechet1}
Let $A$ be a compact subset of $\R^m$. Assume that $r=(r_1,\ldots,r_m)$ are $m$ reference points in $\R^m$ where $\{r_1,\ldots,r_m \}$ are affinely independent in $\R^m$, and such that either $A\subset H^{-}$ or $A \subset H^{+}$. If $B$ is any ball of radius $\rho>0$ with $B \subset A$ and the reference points satisfy the condition:
\begin{equation*}
    \frac{\rho}{d(r,A)} < \frac{1-(m-1)\mu}{\sqrt{m}}
\end{equation*}
where $d(r,A)$ is the distance from the reference point set to $A$ and $\mu$ denotes the largest mutual coherence over $A$ i.e. $\mu = \max_{x \in A} \max_{i\neq j} \left| \langle\frac{x-r_i}{\|x-r_i\|}, \frac{x-r_j}{\|x-r_j\|}\rangle \right|$, then $F_r^1(B)$ is convex in $\R^m$.
\end{theorem}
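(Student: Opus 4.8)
The plan is to regard $C \defeq F_r^1(B)$ as an open, bounded, connected subset of $\R^m$ with smooth boundary, and to deduce its convexity from \emph{local} convexity via the Tietze--Nakajima theorem (a closed, connected, locally convex subset of $\R^m$ is convex). Because $\overline B \subseteq A$ and $d(r,A)>0$, the reference points lie outside a neighborhood of $\overline B$, so $F_r^1$ is $C^\infty$ there; combining this with the $p=1$ version of the second case of \Cref{thm:invertibility_Frechet_Eucl} recorded in \Cref{rem:invertibility_Frechet_Eucl} (applicable since $A$, and hence $B$, is contained in one of the open half-spaces $H^{\pm}$), the map $F_r^1$ restricts to a diffeomorphism from a neighborhood of $\overline B$ onto its image. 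Consequently $\overline C = F_r^1(\overline B)$ is compact, $\partial C = F_r^1(\partial B)$ is a smooth hypersurface, and it suffices to prove that $\partial C$ is locally convex, i.e. that its second fundamental form with respect to the inner normal is positive semidefinite everywhere.

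To make this explicit I would describe $C$ as a sublevel set. Let $G \defeq (F_r^1|_{H^{\pm}})^{-1}$ and let $c$ be the center of $B$; then $C = \{y : h(y)<0\}$ with $h(y) = \|G(y)-c\|^2 - \rho^2$, and $\nabla h \neq 0$ on $\partial C = \{h=0\}$. Local convexity of $\partial C$ amounts to $w^{\mathsf{T}}\nabla^2 h(y)\,w \geq 0$ for all $y\in\partial C$ and all $w\perp\nabla h(y)$; I would prove the stronger statement that $\nabla^2 h(y)\succeq 0$ for every $y\in\partial C$. A direct differentiation gives $\tfrac12\,w^{\mathsf{T}}\nabla^2 h(y)\,w = \|DG(y)w\|^2 + \langle G(y)-c,\ D^2G(y)(w,w)\rangle$, and on $\partial C$ one has $\|G(y)-c\|=\rho$, so by Cauchy--Schwarz it is enough to show $\|DG(y)w\|^2 \geq \rho\,\|D^2G(y)(w,w)\|$. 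Writing $x=G(y)\in\overline B$ and $v = DG(y)w = DF_r^1(x)^{-1}w$, and invoking the standard identity for the second derivative of an inverse map, $D^2G(y)(w,w) = -DF_r^1(x)^{-1}D^2F_r^1(x)(v,v)$, this reduces to the pointwise estimate
\begin{equation*}
\bignorm{\, DF_r^1(x)^{-1}\,D^2F_r^1(x)(v,v) \,} \;\leq\; \frac{\|v\|^2}{\rho}, \qquad x\in A,\ v\in\R^m. \tag{$\star$}
\end{equation*}

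Finally I would establish $(\star)$ from two elementary bounds. The matrix $DF_r^1(x)$ has rows $u_i(x)^{\mathsf{T}}$ with $u_i(x)=(x-r_i)/\|x-r_i\|$, so $DF_r^1(x)\,DF_r^1(x)^{\mathsf{T}}$ is the Gram matrix of the $u_i(x)$: its diagonal entries are $1$ and its off-diagonal entries are bounded in modulus by $\mu$, hence by Gershgorin's theorem $\sigma_{\min}(DF_r^1(x))^2 \geq 1-(m-1)\mu$ (the hypothesis forces $1-(m-1)\mu>0$), i.e. $\|DF_r^1(x)^{-1}\|_{\mathrm{op}}\leq (1-(m-1)\mu)^{-1/2}$. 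Moreover, the $i$-th component of $D^2F_r^1(x)(v,v)$ is the second directional derivative of $x\mapsto\|x-r_i\|$, equal to $(\|v\|^2-\langle u_i(x),v\rangle^2)/\|x-r_i\| \in [0,\ \|v\|^2/d(r,A)]$, so $\|D^2F_r^1(x)(v,v)\| \leq \sqrt m\,\|v\|^2/d(r,A)$. Multiplying, $\|DF_r^1(x)^{-1}D^2F_r^1(x)(v,v)\| \leq \sqrt m\,\|v\|^2/\big(d(r,A)\sqrt{1-(m-1)\mu}\big)$, and since $0<1-(m-1)\mu\leq 1$ the assumption $\rho/d(r,A) < (1-(m-1)\mu)/\sqrt m$ yields $\sqrt m/\big(d(r,A)\sqrt{1-(m-1)\mu}\big) < 1/\rho$, which is exactly $(\star)$ (in fact strictly, so $C$ is even strictly convex).

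The step I expect to be the real obstacle is the passage from local to global convexity: one genuinely needs a Tietze--Nakajima-type theorem, because the defining function $h$ need not be quasiconvex in the interior of $C$, so there is no one-dimensional chord argument along a single segment. Everything else is bookkeeping: verifying that $F_r^1$ is a diffeomorphism on an entire neighborhood of $\overline B$ (this is where affine independence of the $r_i$ and $B\subset H^{\pm}$ enter, through \Cref{rem:invertibility_Frechet_Eucl}), fixing the sign in the correspondence between the second fundamental form and $\nabla^2 h$, and the second-derivative-of-the-inverse identity.
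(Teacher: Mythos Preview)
Your argument is correct and shares the paper's core mechanism: both proofs reduce to the same two estimates---the Gershgorin bound $\sigma_{\min}^2\bigl(DF_r^1(x)\bigr)\geq 1-(m-1)\mu$ on the Gram matrix of the unit vectors $u_i(x)$, and the componentwise bound $\|D^2F_r^1(x)(v,v)\|\leq\sqrt m\,\|v\|^2/d(r,A)$---and then invoke a local-to-global convexity principle. The packaging differs: the paper parametrizes the image hypersurface $F_r^1(\partial B)$ by $\partial B$ and bounds its second fundamental form directly via the pushed-forward normal $n'=DF_r^1(x)^{-\mathsf T}n/\|DF_r^1(x)^{-\mathsf T}n\|$, whereas you work on the range side with the defining function $h=\|G(\cdot)-c\|^2-\rho^2$ and show $\nabla^2 h\succeq 0$ on $\{h=0\}$. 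These are equivalent computations; your version is arguably cleaner because it avoids tracking the normalization of $n'$ and the explicit second-fundamental-form formula.

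The one substantive difference is the local-to-global step: the paper appeals to Sacksteder's theorem on complete hypersurfaces with no negative sectional curvature, while you use Tietze--Nakajima. Your choice is the more elementary one and fits the situation more directly (you have a compact sublevel set with locally convex boundary, which is exactly what Tietze--Nakajima handles); Sacksteder's result is a deeper differential-geometric statement that is more than what is needed here. Both are valid, but yours is the lighter tool.
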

We see that the above sufficient condition for convexity is satisfied for reference points chosen sufficiently far from $A$ while also requiring those reference points to be sufficiently spread apart in order to control the mutual coherence $\mu$. However, it is likely that less restrictive conditions could be obtained, but we leave it an open question for future work.

The main problem we are interested in addressing is whether a similar picture can be obtained for Fr\'{e}chet maps on manifolds $\M$. Some analogous analysis can be done with manifolds of constant negative curvature, namely the hyperbolic spaces $\M = \mathbb{H}^{m}$. For the sake of brevity, we will not detail this case as it would require the introduction of many additional definitions and notations. In the next section, we shall instead focus on the manifold of SPD matrices of size $n \times n$, which we denote by $\mls{SPD}(n)$, that are connected to our applications of interest. As we show below, this class of manifolds already involves many key challenges in generalizing the results of \Cref{thm:invertibility_Frechet_Eucl} and \Cref{thm:convex_image_Frechet}, respectively.

\section{SPD matrices}\label{s:spd-mat}

In this section, we focus on the manifold $\mls{SPD}(n)$. This is a very insightful case to consider, as they are Cartan--Hadamard spaces in which explicit expressions for the geodesics and the exponential and logarithm maps are available. However, the curvature is not constant, which leads to considerable challenges in the context of the clustering framework considered in this work. Furthermore, several applications involve objects living in $\mls{SPD}(n)$, sometimes with a large dimension $n$, so that the Fr\'{e}chet map framework we propose in this paper is especially relevant in this setting.

\subsection{The manifold $\mls{SPD}(n)$ and its Riemannian structure}
\label{s:spd_affmetric}

We start by introducing some basic definitions and notation about $\mls{SPD}(n)$. First, as an open subset of the space of symmetric matrices $\mls{Sym}(n)$, $\mls{SPD}(n)$ can be viewed as a submanifold (of dimension $n(n+1)/2$) of $\mls{Sym}(n)$. Its tangent space at any $P \in \mls{SPD}(n)$ can be identified with $\mls{Sym}(n)$ itself.

Being in addition a convex subset of $\mls{Sym}(n)$, it may seem logical to equip $\mls{SPD}(n)$ with the restriction of the standard Euclidean (i.e., the Frobenius) distance between symmetric matrices. However, it is known that this simple metric can result in unwanted effects when computing averages and, by extension, when performing clustering of $\mls{SPD}$ matrices. It also lacks some fundamental invariance with the group action of the affine group $\mls{GL}(n)$ by conjugation. This invariance is a rather natural property when considering $\mls{SPD}$ matrices representing the covariance of processes, since it encodes the independence of the distance to the choice of coordinate system. Thus, defining adequate metrics on the manifold $\mls{SPD}(n)$ has been an important topic of research and has led to many different mathematical constructions. As our focus here is on Riemannian metrics and Cartan--Hadamard manifolds, we first discuss the widely used affine invariant metric proposed originally in~\cite{skovgaard1984riemannian}. It can be obtained as follows: First, one starts with the Euclidean metric on the tangent space at the identity matrix $\Id$. For any $V,W \in T_{\Id} \mls{SPD}(n) \approx \mls{Sym}(n)$, we define
\[
\langle V, W \rangle_{\Id} = \Tr(V^TW) = \Tr(VW).
\]

\noindent The idea is then to extend it to the whole $\mls{SPD}(n)$ via affine invariance. This means enforcing that the action of the affine group on $\mls{SPD}(n)$, given by $P\mapsto A^\mathsf{T} P A$ for any $A \in \mls{GL}(n)$, is by isometry. Then, if $P \in \mls{SPD}(n)$, one can write $P=P^{1/2} P^{1/2} = (P^{1/2})^\mathsf{T} \Id\, P^{1/2}$, where $P^{1/2} \in \mls{SPD}(n)$ is the SPD square root of $P$. In addition, the invariance property of the metric implies necessarily that
\begin{equation}
\label{eq:aff_inv_metric}
\langle V,W \rangle_P = \langle P^{-1/2} V P^{-1/2}, P^{-1/2} W P^{-1/2} \rangle_{\Id} = \Tr(P^{-1} V P^{-1} W).
\end{equation}

The properties of this metric have been extensively studied; see, for instance, \cite{pennec2020manifold} for a synthesis of those results. We recapitulate the main expressions used in the rest of the paper. First, one can show that the resulting Riemannian distance can be computed explicitly between any two matrices $P,Q \in \mls{SPD}(n)$ and is given by
\begin{equation}
\label{eq:SPD_distance}
 d(P,Q)^2 = \Tr(\log(P^{-1/2}QP^{-1/2})^2).
\end{equation}

\noindent Likewise, the constant-speed geodesic $\gamma:[0,1] \to \mls{SPD}(n)$ from $P$ to $Q$ is given by
\begin{equation}
\label{eq:SPD_geod}
\gamma(t) = P^{1/2} \exp(t \log(P^{-1/2} Q P^{-1/2})) P^{1/2} = P^{1/2} (P^{-1/2} Q P^{-1/2})^{t} P^{1/2}.
\end{equation}

As above, $\log$ and $\exp$ denote the matrix logarithm and exponential for symmetric positive definite matrices. Furthermore, the Riemannian exponential and logarithm maps on $\mls{SPD}(n)$ have the following expressions, which hold for any $P,Q \in \mls{SPD}(n)$ and any $V \in \mls{Sym}(n)$:
\begin{equation}\label{eq:aff_inv_exp_log}
\begin{aligned}
    \exp_P(V) &= P^{1/2} \exp(P^{-1/2} V P^{-1/2}) P^{1/2}\\
    \log_P(Q) &= P^{1/2} \log(P^{-1/2} Q P^{-1/2}) P^{1/2}.
\end{aligned}
\end{equation}

It follows that the manifold $\mls{SPD}(n)$ is complete for the affine-invariant metric. This is because the exponential map is always well-defined; it is a diffeomorphism from $\mls{Sym}(n)$ to $\mls{SPD}(n)$. Moreover, it can be shown (c.f., \cite{pennec2020manifold}, Theorem 3.3) that all sectional curvatures of $\mls{SPD}(n)$ are non-positive. This property makes $\mls{SPD}(n)$ a Cartan--Hadamard manifold.

\begin{figure}
\centering
\includegraphics[width=0.3\textwidth]{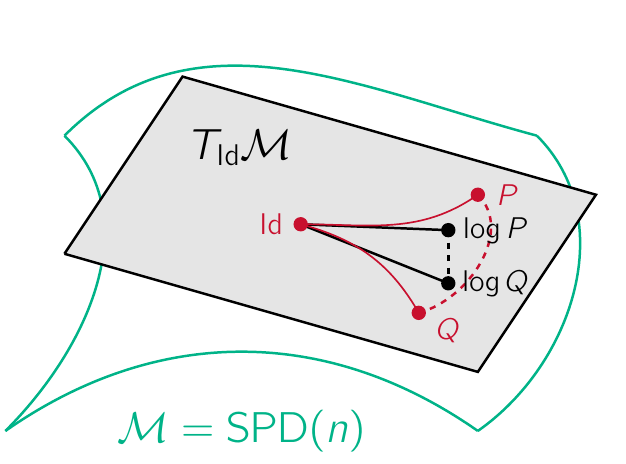}
\caption{Illustration of the Riemannian vs log-Euclidean metrics. We show the tangent space $T_{\Id}\mathcal{M}$ and two points $P$ and $Q$ on the manifold $\mathcal{M}$. Riemannian distances are shown in red. Euclidean distances in $T_{\Id}\mathcal{M}$ are shown in black. We project points $P$ and $Q$ to $T_{\Id}\mathcal{M}$ using a logarithmic map.}\label{log_Euc.fig}
\end{figure}

Although the above affine-invariant metric provides a relatively simple Riemannian structure on $\mls{SPD}(n)$, there are many alternative (typically non-Riemannian) metrics (or even divergences) that have been considered in the literature. We do not provide a comprehensive list of such metrics but specifically mention the case of the \textit{log-Euclidean} framework introduced in~\cite{arsigny2007geometric}, since it is a natural approach to compare to the Fr\'{e}chet mapping idea introduced in this work. In its standard form, the log-Euclidean metric can be seen as an approximation of the affine-invariant Riemannian distance centered at the identity matrix $\Id$ (or, more generally, at a chosen template point in $\mls{SPD}(n)$). Given two matrices $P,Q \in \mls{SPD}(n)$, we can represent them as elements of the tangent space at $\Id$ using the log map at $\Id$, which is simply the usual matrix logarithm. One can then compare the resulting symmetric matrices $\log P \in \mls{Sym}(n)$ and $\log Q \in \mls{Sym}(n)$ based on the metric on $T_\Id \mls{SPD}(n)$ (i.e., the Euclidean metric on $\mls{Sym}(n)$). This leads to the log-Euclidean distance
\begin{equation*}
    d_{\text{LE}}(P,Q) = \|\log P - \log Q\|_F.
\end{equation*}

In other words, the log-Euclidean distance on $\mls{SPD}(n)$ is obtained as the Frobenius norm between the logarithms of $P$ and $Q$. The general idea is illustrated in \Cref{log_Euc.fig}. Although this metric loses the full affine invariance of \Cref{eq:SPD_distance}, it retains the invariance with respect to the action of similarities of $\R^n$. The most relevant aspect of the log-Euclidean framework is that it allows for significantly faster computation compared to the geodesic distance. For example, performing $k$-means clustering can be reduced to applying standard Euclidean $k$-means to the logarithm of the data matrices, resulting in computational complexity comparable to the standard $k$-means algorithm in Euclidean spaces.

\subsection{Properties of the Fr\'echet map on $\mls{SPD}(n)$}
\label{s:frechet_spd_properties}

 Similarly to the discussion on the Euclidean case in \Cref{s:frechet_euclidean}, here we focus our analysis on the squared Fr\'{e}chet map ($p=2$) and only state the corresponding results for the $1$-Fr\'{e}chet map in remarks, as the arguments are similar. For brevity, throughout this section we drop from the notation the reference point set and the power $p$ and simply write the squared Fr\'echet map as:
\begin{equation*}
    F : P \in \mls{SPD}(n) \mapsto (d(R_1,P)^2,d(R_2,P)^2,\ldots,d(R_\ell,P)^2) \in \R_{+}^\ell,
\end{equation*}

\noindent with $d$ being the Riemannian distance defined in \Cref{eq:SPD_distance}; we denote the reference points, which are matrices in $\mls{SPD}(n)$, as $R_1, \dots, R_\ell$. It follows from \Cref{lemma:diff_Frechet} that $F$ is differentiable and the Riemannian differential is given, for any $P \in \mls{SPD}(n)$ and any $V \in \mls{Sym}(n)$, by:
\begin{equation}
\label{eq:diff_Frechet_SPD}
    \mls{DF}(P)\cdot V =
\begin{pmatrix}
\langle -2 \log_P(R_1), V \rangle_P
\\ \vdots \\
\langle  -2 \log_P(R_\ell), V \rangle_P
\end{pmatrix}
=
\begin{pmatrix} -2 \Tr(P^{-1/2}\log(P^{-1/2}R_1 P^{-1/2}) P^{-1/2} V)
\\ \vdots \\
-2 \Tr(P^{-1/2}\log(P^{-1/2}R_\ell P^{-1/2}) P^{-1/2} V)
\end{pmatrix}.
\end{equation}

\subsubsection{Local invertibility of $F$}

Based on \Cref{eq:diff_Frechet_SPD}, we investigate the local invertibility of the Fr\'echet map depending on the position of the set of reference points. We recall that $F$ is \textit{locally invertible} at a point $P \in \mls{SPD}(n)$ when $\mls{DF}(P)$ is of rank $m$. This also means that $F$ is a diffeomorphism from a certain neighborhood of $P$ to the image of that neighborhood. Viewing $(\log_P R_1, \ \ldots \ ,\log_P R_\ell)$ as a matrix in $\R^{m \times \ell}$ with $m=n(n+1)/2$ being the dimension of $\mls{Sym}(n)$, we introduce the set
\[
\Gamma_{R_1,\ldots,R_\ell}=\{P \in \mls{SPD}(n): \operatorname{rank}(\log_P R_1, \ \ldots \ ,\log_P R_\ell)=m\}.
\]

We note that $\Gamma_{R_1,\ldots,R_\ell}$ is an open subset of $\mls{SPD}(n)$, which is automatically empty for $\ell < m$. From \Cref{eq:diff_Frechet_SPD} it follows that $\Gamma_{R_1,\ldots,R_\ell}$ is precisely the set of points $P$ for which $\mls{DF}(P)$ is full rank. We can describe this set in an alternative way. To do so, let us denote by $S_{R_1,\ldots,R_\ell}$ the complement of $\Gamma_{R_1,\ldots,R_\ell}$, i.e., the set of all $P \in \mls{SPD}(n)$ such that $\mls{DF}(P)$ is rank deficient. For any $P \in S_{R_1,\ldots,R_\ell}$, one has that the tangent vectors $(\log_P(R_1),\ldots,\log_P(R_\ell))$ lie in a strict subspace of $\mls{Sym}(n)$. In other words, there exists $V \in \mls{Sym}(n)$, $V \neq 0$, such that the reference points $R_1,\ldots,R_\ell$ all belong to $\mathcal{H}_{P,V^\bot} = \{\exp_P(W): W \in \mls{Sym}(n), \, \Tr(VW)=0\}$. The set $\mathcal{H}_{P,V^\bot}$ can be thought of as one possible Riemannian equivalent of an affine hyperplane passing through $P$ and with tangent vectors orthogonal to $V$ at $P$. Similar to the Euclidean space, we see that $S_{R_1,\ldots,R_\ell}$ is the reunion of the ``Riemannian geodesic hyperplanes'' $\mathcal{H}_{P,V^\bot}$ for $P \in \mls{SPD}(n)$ and non-zero $V \in \mls{Sym}(n)$ that contain all the reference points. We obtain the following result:

\begin{proposition}
Any Fr\'{e}chet map $F$ on $\mls{SPD}(n)$ is locally invertible on
\begin{equation*}
\Gamma_{R_1,\ldots,R_\ell} = \mls{SPD}(n) \backslash \bigcup \left\{\mathcal{H}_{P,V^\bot}: \ R_1,\ldots,R_\ell \in \mathcal{H}_{P,V^\bot} \right\}.
\end{equation*}
\end{proposition}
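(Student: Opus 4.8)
The plan is to show the displayed characterization of $\Gamma_{R_1,\ldots,R_\ell}$ by unwinding the definitions that were already set up in the paragraph preceding the statement, so the proof is essentially a consolidation of those observations. First I would recall that $F$ is locally invertible at $P$ precisely when the Riemannian differential $\mls{DF}(P)$ has full rank $m = n(n+1)/2$; this is the content of the inverse function theorem on manifolds, using that $F$ is differentiable by \Cref{lemma:diff_Frechet} and that both $\mls{SPD}(n)$ and (a neighborhood of $F(P)$ inside the image) have the same dimension $m$ when $\mls{DF}(P)$ is injective. So by definition $\Gamma_{R_1,\ldots,R_\ell}$ is exactly the locus of local invertibility, and it only remains to re-express this locus.

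Next I would translate the rank condition through the explicit formula \Cref{eq:diff_Frechet_SPD}. Since the affine-invariant metric $\langle\cdot,\cdot\rangle_P$ is a genuine inner product on $T_P\mls{SPD}(n) \cong \mls{Sym}(n)$, the linear map $V \mapsto (\langle -2\log_P R_i, V\rangle_P)_{i=1}^\ell$ fails to be surjective onto $\R^\ell$ if and only if its adjoint fails to be injective, i.e. iff there is a nonzero covector — equivalently, by Riesz duality with respect to $\langle\cdot,\cdot\rangle_P$, a nonzero $W \in \mls{Sym}(n)$ — lying in the orthogonal complement of $\Span(\log_P R_1,\ldots,\log_P R_\ell)$. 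Unpacking: $P \in S_{R_1,\ldots,R_\ell}$ (the complement of $\Gamma$) iff the vectors $\log_P R_1,\ldots,\log_P R_\ell$ span a proper subspace of $\mls{Sym}(n)$, iff there exists $V \neq 0$ in $\mls{Sym}(n)$ with $\langle \log_P R_i, V\rangle_P = 0$ for every $i$. Written out with the metric this is $\Tr\big(P^{-1}\log_P(R_i)\,P^{-1}V\big)=0$; but whether one uses the metric pairing $\langle\cdot,\cdot\rangle_P$ or the ambient Frobenius pairing $\Tr(\cdot\,\cdot)$ on $\mls{Sym}(n)$, the conclusion "$R_1,\ldots,R_\ell$ all lie in a common geodesic hyperplane through $P$" is the same, since $W \mapsto P^{-1/2}WP^{-1/2}$ is a linear isomorphism of $\mls{Sym}(n)$ carrying one orthogonality relation to the other. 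This is exactly the statement that $R_1,\ldots,R_\ell \in \mathcal{H}_{P,V^\bot}$ for some nonzero $V$.

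Finally I would assemble the set-theoretic identity: $P \in S_{R_1,\ldots,R_\ell}$ iff $P$ lies in at least one geodesic hyperplane $\mathcal{H}_{P,V^\bot}$ (with $V\neq 0$) that contains all the reference points, hence
\begin{equation*}
S_{R_1,\ldots,R_\ell} = \bigcup \left\{\mathcal{H}_{P,V^\bot}: \ R_1,\ldots,R_\ell \in \mathcal{H}_{P,V^\bot}\right\},
\end{equation*}
and taking complements in $\mls{SPD}(n)$ gives the claimed formula for $\Gamma_{R_1,\ldots,R_\ell}$. One small point to be careful about is the indexing convention in the union: each $\mathcal{H}_{P,V^\bot}$ is labeled by a base point and a normal direction, and I should note that $P \in \mathcal{H}_{P,V^\bot}$ automatically (take $W=0$), so membership of $P$ in the complement is witnessed by the very hyperplane based at $P$; this is what makes the union over all such hyperplanes cover $S_{R_1,\ldots,R_\ell}$ exactly. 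I expect no serious obstacle here — the statement is a reformulation and the only thing requiring a line of care is the duality step identifying "rank deficiency of $\mls{DF}(P)$" with "existence of a common orthogonal direction," together with checking that the metric-vs-Frobenius orthogonality distinction is immaterial because of the $P^{-1/2}(\cdot)P^{-1/2}$ isomorphism. The rest is bookkeeping already done in the surrounding text.
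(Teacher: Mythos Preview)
Your proposal is correct and follows essentially the same route as the paper: the proposition is stated there as a direct consequence of the preceding discussion (it carries no separate proof), which identifies rank deficiency of $\mls{DF}(P)$ with linear dependence of the vectors $\log_P R_1,\ldots,\log_P R_\ell$ in $\mls{Sym}(n)$, hence with the existence of a nonzero $V$ placing all the $R_i$ in $\mathcal{H}_{P,V^\bot}$, and then takes complements. Your extra care with the duality step and the metric-versus-Frobenius pairing (via the $W\mapsto P^{-1/2}WP^{-1/2}$ isomorphism) is a useful elaboration of the same argument rather than a different approach.
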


Unfortunately, unlike for the Euclidean case, it is significantly more difficult to characterize which Riemannian hyperplanes contain a given set of reference points. Thus, we cannot give a more explicit geometric description of $\Gamma_{R_1,\ldots,R_\ell}$. Nevertheless, we obtain the following result.

\begin{theorem}
\label{thm:local_immersion}
Let $F$ be a Fr\'{e}chet map on $\mls{SPD}(n)$ with reference points $R_1,\ldots,R_\ell$ where $\ell \geq m$, where $m = \dim(\mls{SPD}(n))=n(n+1)/2$. If the reference points $R_1,\ldots,R_\ell$ are such that there exists $Q \in \mls{SPD}(n)$ for which $\{\log_Q R_1,\ldots, \log_Q R_\ell\}$ is of rank $m$, then the set $S_{R_1,\ldots,R_\ell}$ is of Lebesgue measure zero in $\mls{SPD}(n)$. In other words, $F$ is locally invertible on the open subset $\Gamma_{R_1,\ldots,R_\ell}$, which has full measure in $\mls{SPD}(n)$.
\end{theorem}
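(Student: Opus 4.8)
The plan is to show that the exceptional set $S_{R_1,\ldots,R_\ell}$ is a \emph{proper real-analytic subvariety} of $\mls{SPD}(n)$ and then invoke the classical fact that such a set has Lebesgue measure zero. Fix a basis $E_1,\ldots,E_m$ of $\mls{Sym}(n)$ and write, for $P \in \mls{SPD}(n)$, $\log_P R_i = \sum_{a=1}^m c_{ia}(P)\,E_a$, collecting the coefficients into the $\ell \times m$ matrix $C(P) = \bigl(c_{ia}(P)\bigr)$. Since the metric $\langle\cdot,\cdot\rangle_P$ is positive definite and hence nondegenerate, \eqref{eq:diff_Frechet_SPD} gives $\operatorname{rank}\mls{DF}(P) = \dim\operatorname{Span}(\log_P R_1,\ldots,\log_P R_\ell) = \operatorname{rank} C(P)$. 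Therefore $P \in S_{R_1,\ldots,R_\ell}$ precisely when every $m\times m$ minor of $C(P)$ vanishes, i.e.\ $S_{R_1,\ldots,R_\ell} = \bigcap_{J}\{P : M_J(P) = 0\}$, where $J$ ranges over the $\binom{\ell}{m}$ choices of $m$ rows and $M_J$ is the corresponding $m\times m$ minor of $C$.

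The first step is to verify that each $M_J$ is real-analytic on $\mls{SPD}(n)$, viewed as an open subset of $\mls{Sym}(n) \cong \R^m$. As minors are polynomial in the entries $c_{ia}(P)$, it suffices to show that $P \mapsto \log_P R_i$ is real-analytic from $\mls{SPD}(n)$ to $\mls{Sym}(n)$. Using $\log_P R_i = P^{1/2}\log\!\bigl(P^{-1/2}R_i P^{-1/2}\bigr)P^{1/2}$, this reduces to analyticity of the building blocks: $P \mapsto P^{-1}$ is rational; $P \mapsto P^{1/2}$ is real-analytic on $\mls{SPD}(n)$ because it is the local inverse of the polynomial map $X \mapsto X^2$, whose differential $H \mapsto XH + HX$ has spectrum contained in $(0,\infty)$ and is therefore invertible, so the analytic inverse function theorem applies; the matrix logarithm is real-analytic on $\mls{SPD}(n)$ since it inverts the matrix exponential $\mls{Sym}(n) \to \mls{SPD}(n)$, a real-analytic diffeomorphism with everywhere-invertible differential (cf.\ the properties of $\exp$ recalled in \Cref{s:spd_affmetric}); and matrix multiplication is polynomial. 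Composing gives the claim, so every $M_J$ is real-analytic on the connected (indeed convex) open set $\mls{SPD}(n)$.

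The second step uses the hypothesis: it provides a point $Q$ with $\operatorname{rank}(\log_Q R_1,\ldots,\log_Q R_\ell) = m$, hence $\operatorname{rank} C(Q) = m$, so some minor $M_{J_0}$ satisfies $M_{J_0}(Q) \neq 0$; in particular $M_{J_0} \not\equiv 0$ on $\mls{SPD}(n)$. Consequently $S_{R_1,\ldots,R_\ell} \subseteq \{P : M_{J_0}(P) = 0\}$ is contained in the zero set of a nontrivial real-analytic function on a connected manifold, and such a set has Lebesgue measure zero — a standard fact, provable by induction on $m$ via restriction to affine lines, Fubini's theorem, and the isolatedness of the zeros of a nontrivial one-variable real-analytic function. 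Thus $S_{R_1,\ldots,R_\ell}$ is null, $\Gamma_{R_1,\ldots,R_\ell} = \mls{SPD}(n)\setminus S_{R_1,\ldots,R_\ell}$ is open of full measure, and by construction $\mls{DF}(P)$ is of full rank $m$ — so $F$ is locally invertible — at every $P \in \Gamma_{R_1,\ldots,R_\ell}$.

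The main obstacle I anticipate is the analyticity of the Riemannian logarithm $P \mapsto \log_P R_i$: although each ingredient (matrix inverse, matrix square root, matrix logarithm) is well known to be real-analytic on $\mls{SPD}(n)$, one must assemble these carefully, in particular justifying the square root through invertibility of the Lyapunov-type operator $H \mapsto XH+HX$ on $\mls{Sym}(n)$. Once analyticity is in hand the rest is routine: the passage from rank deficiency of $\mls{DF}(P)$ to the simultaneous vanishing of all minors of $C(P)$ is immediate from nondegeneracy of the metric, and the measure-zero property of zero sets of nontrivial real-analytic functions is classical.
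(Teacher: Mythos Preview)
Your proposal is correct and follows essentially the same approach as the paper: express rank deficiency of $\mls{DF}(P)$ as the simultaneous vanishing of all $m\times m$ minors of the matrix $(\log_P R_1,\ldots,\log_P R_\ell)$, verify these minors are real-analytic in $P$ via the analyticity of the matrix square root and logarithm, use the hypothesis to exhibit a point $Q$ where one minor is nonzero, and conclude via the measure-zero property of zero sets of nontrivial real-analytic functions. The paper's version is slightly more terse (it first treats $\ell=m$ with a single determinant $\psi$, then reduces $\ell>m$ to this case, and cites~\cite{mityagin2020zero} for the measure-zero fact rather than sketching it), but the substance is identical.
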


\begin{proof}
Let us first consider the case $\ell=m$. In that case, $S_{R_1,\ldots,R_m}$ is equivalently the set of all $P \in \mls{SPD}(n)$ such that $\det(\log_P R_1, \ \ldots \ ,\log_P R_m)=0$. We introduce the mapping $\psi:\mls{SPD}(n) \to \R$ defined by
\begin{equation*}
\psi(P) = \det(\log_P R_1, \ \ldots \ ,\log_P R_m).
\end{equation*}

Now, given the expression of the Riemannian logarithm \Cref{eq:aff_inv_exp_log}, since the matrix logarithm and square root are both real analytic functions on $\mls{SPD}(n)$ and the determinant of a $m \times m$ matrix is a polynomial function of its coefficients, it directly follows that $\psi$ is real analytic. Also, by assumption, $\psi$ is not identically zero (since $\psi(Q) \neq 0$) and thus, from standard results on zero sets of real analytic functions~\cite{mityagin2020zero}, the zero level set of $\psi$, in other words $S_{R_1,\ldots,R_m}$, is of vanishing Lebesgue measure in $\mls{SPD}(n)$.

For $\ell >m$, given some reference points $R_1,\ldots,R_\ell$, we see that $P \in S_{R_1,\ldots,R_\ell}$ if and only if all $m \times m$ subdeterminants of the matrix $(\log_P R_1, \ \ldots \ ,\log_P R_m)$ are equal to zero. With the assumption of the theorem, one of the $m \times m$ subdeterminants of $(\log_Q R_1, \ \ldots \ ,\log_Q R_m)$ is necessarily non-vanishing. Therefore, we can use the same argument as above and deduce that $S_{R_1,\ldots,R_\ell}$ is an intersection of sets at least one of which is of measure zero. Thus $S_{R_1,\ldots,R_\ell}$ is also of measure zero.
\end{proof}

We stress that the condition on the reference points in \Cref{thm:local_immersion} is very generic. For instance, with $Q = \Id$, it holds as soon as $(\log R_1, \ldots, \log R_\ell)$ is a full rank matrix. This is true for almost all choices of reference points (when considering the product Lebesgue measure of $\mls{Sym}(n)^\ell$). The proof also yields the fact that the set $S_{R_1,\ldots,R_\ell}$ is actually a real analytic subvariety of $\mls{Sym}(n)$. In the case where $\ell >m$, one could expect (as in the Euclidean case) that---for almost all choices of $\ell$ reference points---the Fr\'echet map is locally invertible on the whole space $\mls{SPD}(n)$. However, due to the lack of a simple characterization of geodesic hyperspaces in $\mls{SPD}(n)$, we are only able to obtain the statement under the following stronger assumption.

\begin{theorem}
\label{thm:local_immersion2}
Let $F$ be a Fr\'{e}chet map on $\mls{SPD}(n)$ with reference points $R_1,\ldots,R_\ell$ where $\ell \geq 2 m$, where $m = \dim(SPD(n))=n(n+1)/2$. For almost all choices of reference points in $\mls{SPD}(n)$, one has $S_{R_1,\ldots,R_\ell} = \emptyset$. In other words, the associated Fr\'{e}chet map is locally invertible on $\mls{SPD}(n)$.
\end{theorem}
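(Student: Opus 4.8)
The plan is to lift the problem to the product manifold $\mls{SPD}(n)\times\mls{SPD}(n)^\ell$, perform a dimension count on the ``bad locus'' there, and then project down to the space of reference points. Concretely, introduce the real-analytic map
\[
\Phi:\mls{SPD}(n)\times\mls{SPD}(n)^\ell\longrightarrow \mls{Sym}(n)^\ell\cong\R^{m\times\ell},\qquad
\Phi(P,R_1,\ldots,R_\ell)=\bigl(\log_P R_1,\ldots,\log_P R_\ell\bigr),
\]
and set $\mathcal B=\Phi^{-1}\bigl(\{M\in\R^{m\times\ell}:\operatorname{rank}M\le m-1\}\bigr)$. By the very definition of $S_{R_1,\ldots,R_\ell}$, one has $P\in S_{R_1,\ldots,R_\ell}$ if and only if $(P,R_1,\ldots,R_\ell)\in\mathcal B$; hence $S_{R_1,\ldots,R_\ell}=\emptyset$ exactly when $(R_1,\ldots,R_\ell)$ lies outside the image of $\mathcal B$ under the projection $\pi_2:(P,R_\bullet)\mapsto R_\bullet$. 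It therefore suffices to prove that $\pi_2(\mathcal B)$ has Lebesgue measure zero in $\mls{SPD}(n)^\ell$.

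The first key observation is that $\Phi$ is a submersion: for a fixed $P$, the partial map $R_\bullet\mapsto\Phi(P,R_\bullet)$ is the $\ell$-fold product of the Riemannian logarithm $\log_P:\mls{SPD}(n)\to T_P\mls{SPD}(n)\cong\mls{Sym}(n)$, which is a diffeomorphism because $\mls{SPD}(n)$ is a Cartan--Hadamard manifold; thus already the differential of $\Phi$ in the $R$-directions is onto $\R^{m\times\ell}$. Consequently $\Phi$ is transverse to every submanifold of $\R^{m\times\ell}$. Stratifying the determinantal variety by rank, $\{\operatorname{rank}M\le m-1\}=\bigsqcup_{r=0}^{m-1}\Sigma_r$ with $\Sigma_r=\{\operatorname{rank}M=r\}$ a smooth locally closed submanifold of $\R^{m\times\ell}$ of codimension $(m-r)(\ell-r)$, transversality implies each piece $\mathcal B^{(r)}:=\Phi^{-1}(\Sigma_r)$ is a smooth submanifold of $\mls{SPD}(n)\times\mls{SPD}(n)^\ell$ of the same codimension, hence of dimension $m(\ell+1)-(m-r)(\ell-r)$.

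Now the dimension count. For $0\le r\le m-1$ and $\ell\ge m$ the function $r\mapsto(m-r)(\ell-r)$ is decreasing, so it is minimized over $\{0,\ldots,m-1\}$ at $r=m-1$, where it equals $\ell-m+1$; therefore
\[
\dim\mathcal B^{(r)}\ \le\ m(\ell+1)-(\ell-m+1)\ =\ m\ell+2m-\ell-1\qquad\text{for all }r=0,\ldots,m-1.
\]
Under the hypothesis $\ell\ge 2m$ this gives $\dim\mathcal B^{(r)}\le m\ell-1<m\ell=\dim\mls{SPD}(n)^\ell$. Since a smooth map from a manifold of dimension strictly smaller than $N$ into an $N$-dimensional manifold has image of measure zero (cover the source by countably many charts and use that smooth, hence locally Lipschitz, maps send Lebesgue-null sets to Lebesgue-null sets), $\pi_2(\mathcal B^{(r)})$ is null for each $r$, and so is the finite union $\pi_2(\mathcal B)=\bigcup_{r=0}^{m-1}\pi_2(\mathcal B^{(r)})$. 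Thus for every $(R_1,\ldots,R_\ell)$ outside this null set we have $S_{R_1,\ldots,R_\ell}=\emptyset$, i.e.\ $F$ is locally invertible on all of $\mls{SPD}(n)$, which is the claim.

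The step I expect to require the most care is the transversality/stratification bookkeeping: one must check carefully that the $R$-part of the differential of $\Phi$ is a genuine surjection (this uses only that $\mls{SPD}(n)$ is Cartan--Hadamard, so that $\log_P$ is a global diffeomorphism), and that the rank strata $\Sigma_r$ are embedded submanifolds with exactly the stated codimensions, so that $\mathcal B$ splits into finitely many embedded submanifolds of controlled dimension. It is also worth stressing that the threshold $\ell\ge 2m$ is precisely where the strict inequality $\dim\mathcal B^{(r)}<\dim\mls{SPD}(n)^\ell$ becomes available: for $\ell=2m-1$ the top stratum $r=m-1$ yields $\dim\mathcal B^{(m-1)}=m\ell$, matching the dimension of the target, and this argument can no longer force $\pi_2(\mathcal B)$ to be null --- consistent with the fact that the sharper bound $\ell\ge m+1$ suggested by the Euclidean case remains only conjectural here. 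As an alternative to stratifying, one could apply Sard's theorem (or the preimage theorem) to a smooth resolution of the determinantal variety, but the stratified count seems the most economical route.
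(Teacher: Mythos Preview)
Your argument is correct and reaches the same dimension count as the paper, but the packaging differs in a way worth noting. The paper parametrizes the bad reference configurations \emph{forward}: it builds a map
\[
G:\mls{SPD}(n)\times T\overline{\mls{Symm}}(n)^{\otimes\ell}\to\mls{SPD}(n)^\ell,\qquad (P,V,W_1,\ldots,W_\ell)\mapsto(\exp_P(PW_iP))_{i=1}^\ell
\]
with each $W_i\perp V$, observes that every $(R_1,\ldots,R_\ell)$ with $S_{R_1,\ldots,R_\ell}\ne\emptyset$ lies in the image of $G$, and then notes that the domain has dimension $2m-1+\ell(m-1)=m\ell+2m-\ell-1<m\ell$ once $\ell\ge 2m$. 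You instead characterize the bad locus \emph{backward} as the preimage of the determinantal variety under $\Phi$, use the submersion property to import the rank stratification with its standard codimensions, and then project. Both routes land on the identical top-stratum dimension $m\ell+2m-\ell-1$ and both finish with the same ``smooth image of a lower-dimensional manifold is null'' step. Your version buys a more transparent accounting of why the worst case is $r=m-1$ and makes the threshold $\ell\ge 2m$ visibly sharp for this method; the paper's forward parametrization sidesteps the transversality and stratification machinery altogether and is correspondingly shorter.
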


\begin{proof}
Thanks to the above description of $S_{R_1,\ldots,R_\ell}$, we know that $S_{R_1,\ldots,R_\ell} \neq \emptyset$ if and only if there exist $P \in \mls{SPD}(n)$ and $V$ a unit Frobenius norm matrix of $\mls{Sym}(n)$ such that $R_1,\ldots,R_\ell \in \mathcal{H}_{P,V^\bot}$. Let us denote by $\overline{\mls{Symm}}(n)$ the unit sphere of $\mls{Sym}(n)$ and its tangent bundle $T\overline{\mls{Symm}}(n) = \{(V,W) \in \mls{Sym}(n) \times \mls{Sym}(n): \Tr(VW) = 0\}$ and by $T\overline{\mls{Symm}}(n)^{\otimes \ell}$ the $\ell$-times tangent bundle. We introduce the mapping
\begin{equation*}
\begin{aligned}
G: \mls{SPD}(n) \times T\overline{\mls{Symm}}(n)^{\otimes \ell} &\rightarrow \mls{SPD}(n)^\ell \\
(P,V,W_1,\ldots,W_\ell) &\mapsto (\exp_{P}(PW_1P),\ldots, \exp_{P}(PW_\ell P)).
\end{aligned}
\end{equation*}

We note that for any $P \in \mls{SPD}(n)$ and $(V,W_1,\ldots,W_\ell) \in T\overline{\mls{Symm}}(n)^{\otimes \ell}$, we have for all $i=1,\ldots,\ell$ that $\langle V,PW_iP \rangle_P = \Tr(P^{-1}V P^{-1}PW_iP)= \Tr(V W_i) = 0$ and thus the matrices $\exp_P(PW_1P),\ldots,\exp_P(PW_\ell P) \in \mathcal{H}_{P,V^\bot}$. Therefore, the set of all $\ell$-tuples of reference points for which $S_{R_1,\ldots,R_\ell} \neq \emptyset$ is contained in the image $G(\mls{SPD}(n) \times T\overline{\mls{Symm}}(n)^{\otimes \ell})$. As $G$ is a differentiable map and $\mls{SPD}(n) \times T\overline{\mls{Symm}}(n)^{\otimes \ell}$ is a manifold of dimension $2m-1+\ell m -\ell$, which is strictly smaller than $\ell m$ (the dimension of $\mls{SPD}(n)^\ell$) when $\ell\geq 2m$, we deduce that for almost all reference points $R_1,\ldots,R_\ell$, we have $S_{R_1,\ldots,R_\ell}=\emptyset$.
\end{proof}
An open question that we leave to future investigation is to determine whether the above still holds for a number of reference points $\ell$ between $m+1$ and $2m$, as it does for Fr\'{e}chet maps in Euclidean space.

\begin{remark}
    When considering the $1$-Fr\'{e}chet map $F^1$ instead of its squared version, the statement of \Cref{thm:local_immersion} still holds provided that there exists $Q \in \mls{SPD}(n)\backslash \{ R_1,\ldots R_\ell\}$ satisfying the theorem's condition.  Similarly, in \Cref{thm:local_immersion2}, the conclusion simply becomes that $F^1$ is a locally invertible map on $\mls{SPD}(n)\backslash \{ R_1,\ldots R_\ell\}$.
\end{remark}

\subsubsection{The case $\ell =m$}
A primary interest in this paper is to identify  Fr\'{e}chet maps $F$ with the smallest possible number of reference points in an attempt to limit the computational cost together with the dimension of the output space for $F$. We examine more closely the case $\ell =m$ and attempt to describe more precisely the real analytic subvariety $S_{R_1,\ldots,R_m}$ for certain specific configurations of reference points. When $\ell=m$, we recall that $P \in S_{R_1,\ldots,R_m}$ if and only if $\{\log_P R_1,\ldots,\log_P R_m\}$ are linearly dependent.

A first observation is that $S_{R_1,\ldots,R_m}$ can be, in fact, connected to the intrinsic notion of \textit{exponential barycentric subspace} that was introduced for general Riemannian manifolds and studied in~\cite{pennec2018barycentric}. The exponential barycentric subspace of $R_1,\ldots,R_m$ is defined by
\begin{equation*}
    \mls{EBS}(R_1,\ldots,R_m) =
    \left\{P: \ \exists \lambda \in \mathcal{P}_m^*, P\,\,\text{is a critical point of}\,\,\sigma(Q;\lambda)^2 \doteq \sum_{i=1}^{m} \lambda_i d(R_i,Q)^2 \right\},
\end{equation*}

\noindent where $\mathcal{P}_m^*$ denotes the set of $\lambda=(\lambda_i) \in \R^m$ such that $\sum_{i=1}^m \lambda_i =1$. This can be interpreted as the set of all the weighted Fr\'{e}chet barycenters of the $R_i$'s. Using again \Cref{lemma:diff_Frechet}, the criticality condition is equivalent to $\sum_{i=1}^{m} \lambda_i \log_P R_i = 0$. The latter implies in particular that $\{\log_P R_1,\ldots,\log_P R_m\}$ are linearly dependent. We deduce that
\begin{equation*}
    \mls{EBS}(R_1,\ldots,R_m) \subseteq S_{R_1,\ldots,R_m}.
\end{equation*}

\noindent In contrast to the Euclidean case, these two sets are not necessarily equal as $S_{R_1,\ldots,R_m}$ also contains all the critical points of the functions $\sigma(Q;\lambda)^2$, i.e., the solutions of $\sum_{i=1}^{m} \lambda_i \log_P R_i = 0$ for the non-zero $\lambda \in \R^m$ with $\sum_{i=1}^m \lambda_i = 0$.

Although~\cite{pennec2018barycentric} provides more explicit characterizations of $\mls{EBS}$ in constant curvature manifolds (such as the sphere and hyperbolic plane), there is unfortunately no known corresponding simple description in the case of $\mls{SPD}(n)$. We conjecture that a possible obstruction lies in the fact that $\mls{EBS}(R_1,\ldots,R_m)$ or $S_{R_1,\ldots,R_m}$, unlike affine hyperplanes in a Euclidean space, are not necessarily totally geodesic. In what follows, we say that a submanifold $S\subset \mls{SPD}(n)$ is \textit{totally geodesic} if it is geodesically complete (i.e., $\exp_P(V) \in S$ for any $P \in S$ and $V \in T_P S$) and if any geodesic of $S$ is also a geodesic in $\mls{SPD}(n)$. In symmetric spaces such as $\mls{SPD}(n)$, totally geodesic submanifolds can be characterized via the Lie triple system condition (see Theorem 7.2 in~\cite{helgason1979differential} or~\cite{tumpach2024totally}). This condition allows us to fully classify all totally geodesic submanifolds of codimension {1} in $\mls{SPD}(n)$, as stated below.

\begin{proposition}
\label{prop:tot_geod_SPD}
For $n\geq 3$, the totally geodesic submanifolds of $\mls{SPD}(n)$ of dimension $m-1$ are exactly the subsets $\mls{SPD}_r(n) = \{P\in \mls{SPD}(n): \det(P)=r\}$ for $r>0$. Furthermore, there is a well-defined projection $\Pi_r:\mls{SPD}(n) \to \mls{SPD}_r(n)$ given by $\Pi_r(P) = \sqrt[n]{r/\det(P)}\, P$ for all $P \in \mls{SPD}(n)$.
\end{proposition}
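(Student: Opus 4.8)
The plan is to exploit the symmetric-space structure of $\mls{SPD}(n)$ together with the Lie triple system characterization of totally geodesic submanifolds. Recall that $\mls{SPD}(n) \cong \mls{GL}(n)/\mls{O}(n)$ is a symmetric space whose Cartan decomposition at $\Id$ is $\mathfrak{gl}(n) = \mls{Sym}(n) \oplus \mathfrak{o}(n)$, with the tangent space $T_{\Id}\mls{SPD}(n) \cong \mls{Sym}(n)$ carrying the Lie triple bracket $[X,[Y,Z]]$ coming from the double commutator of matrices. By the classical theorem (Helgason, Ch.\ IV, Thm.\ 7.2), totally geodesic submanifolds through $\Id$ correspond bijectively to linear subspaces $\mathfrak{s} \subset \mls{Sym}(n)$ that are closed under the triple bracket, i.e.\ $[X,[Y,Z]] \in \mathfrak{s}$ for all $X,Y,Z \in \mathfrak{s}$. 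So the first step is: classify all codimension-one Lie triple subsystems of $\mls{Sym}(n)$. I would show that the only such subspace is $\mathfrak{s}_0 = \{X \in \mls{Sym}(n): \Tr X = 0\}$, the traceless symmetric matrices. That $\mathfrak{s}_0$ is a Lie triple subsystem is immediate since any double commutator $[X,[Y,Z]]$ is a commutator and hence traceless.

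The heart of the argument is the converse: any codimension-one Lie triple subsystem equals $\mathfrak{s}_0$. Write such a subspace as $\mathfrak{s} = \{X : \Tr(AX) = 0\}$ for some nonzero symmetric $A$ (using the trace form to represent the hyperplane). The claim is $A$ must be a scalar multiple of $\Id$. Suppose not; then $A$ has two distinct eigenvalues, so after conjugating by an orthogonal matrix (which is a symmetry-space automorphism and preserves the Lie triple structure) we may assume $A$ is diagonal and not scalar. One then picks explicit elementary symmetric matrices — rank-one projections $E_{ii}$ and symmetrized off-diagonal units $E_{ij}+E_{ji}$ — lying in $\mathfrak{s}$, forms their double commutators, and checks that the result leaves $\mathfrak{s}$, producing a contradiction. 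This is where $n \geq 3$ enters: with $n = 2$ the space $\mls{Sym}(2)$ is only three-dimensional and the commutator algebra is too small to force the conclusion, but for $n \geq 3$ there is enough room to realize a double commutator with a nonzero trace against $A$ while the generators sit inside $\mathfrak{s}$. I expect this explicit-generators computation to be the main obstacle — it is routine linear algebra but requires a careful case split on whether $A$'s eigenvalue multiplicities allow the needed elementary matrices inside the hyperplane.

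Having identified $\mathfrak{s}_0$ as the unique codimension-one Lie triple subsystem through $\Id$, the corresponding totally geodesic submanifold is $\exp_{\Id}(\mathfrak{s}_0) = \{\exp(X) : \Tr X = 0\} = \{P \in \mls{SPD}(n) : \det P = 1\} = \mls{SPD}_1(n)$, using $\det(\exp X) = e^{\Tr X}$. For a general value $r > 0$, either translate by the isometry $P \mapsto r^{1/n}P$ (which is the affine action of $r^{1/(2n)}\Id \in \mls{GL}(n)$), or argue directly: $\mls{SPD}(n)$ is homogeneous, so every totally geodesic submanifold of dimension $m-1$ is the image under an isometry of one through $\Id$, and one checks the isometry group acts on the family $\{\mls{SPD}_r(n)\}_{r>0}$ transitively, identifying each codimension-one totally geodesic submanifold with some $\mls{SPD}_r(n)$. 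Finally, for the projection: $\Pi_r(P) = \sqrt[n]{r/\det P}\, P$ clearly lands in $\mls{SPD}_r(n)$ since $\det(\Pi_r(P)) = (r/\det P)\det P = r$, it is smooth, and restricts to the identity on $\mls{SPD}_r(n)$; one can note it is the natural retraction coming from the one-dimensional complementary foliation by the geodesics $t \mapsto e^t P$ (dilations), so "well-defined projection" is justified in the sense of a smooth retraction. This last part is bookkeeping once the classification is in hand.
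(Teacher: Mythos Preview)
Your classification argument is essentially identical to the paper's: both reduce to codimension-one Lie triple subsystems of $\mls{Sym}(n)$, write the hyperplane as $\{X:\Tr(AX)=0\}$, diagonalize $A$ by an orthogonal conjugation, and then compute explicit double commutators of elementary symmetric matrices to force $A$ scalar when $n\geq 3$. The paper carries out precisely this computation with the basis $\{E_{ij}+E_{ji}\}_{i\neq j}\cup\{(d_i/d_1)E_{11}-E_{ii}\}_{i\geq 2}$ and the triple bracket $[E_{1j}+E_{j1},[(d_i/d_1)E_{11}-E_{ii},E_{1j}+E_{j1}]]$, so your anticipated ``explicit-generators computation'' is exactly what happens.

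The one place you diverge is the projection. You verify only that $\Pi_r$ is a smooth retraction onto $\mls{SPD}_r(n)$, calling the rest ``bookkeeping.'' The paper proves more: $\Pi_r(P)$ is the \emph{metric} (nearest-point) projection. It shows $\log_{\Pi_r(P)}P$ is a scalar multiple of $P$, hence orthogonal to $T_{\Pi_r(P)}\mls{SPD}_r(n)=\{V:\Tr(P^{-1}V)=0\}$ with respect to $\langle\cdot,\cdot\rangle_{\Pi_r(P)}$; then the CAT(0) Pythagorean inequality $d(P,Q)^2\geq d(P,\Pi_r(P))^2+d(\Pi_r(P),Q)^2$ for any $Q\in\mls{SPD}_r(n)$ gives the minimizing property. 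This is genuinely stronger than a retraction, and it is what the downstream proof of \Cref{thm:EBS_tot_geod} implicitly relies on, so you should not skip it.
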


It is easily verified that $\mls{SPD}_r(n)$ is totally geodesic based on the expression of geodesics \Cref{eq:SPD_geod}. Showing that these are, in fact, the only ones based on the Lie triple system condition is a little more involved. Since, to our knowledge, there is no statement of this result in the literature, a proof is detailed in \Cref{app:proof_tot_geod}, in which we also derive a corresponding result for the special case of $n=2$.

Based on \Cref{prop:tot_geod_SPD}, let us now consider the situation in which all $m$ reference points belong to a totally geodesic hypersurface $\mls{SPD}_r(n)$ for some $r>0$. We then ask the question whether $\mls{EBS}(R_1,\ldots,R_m)$ or $S_{R_1,\ldots,R_m}$ precisely coincide with $\mls{SPD}_r(n)$ in this case. A partial answer is given by the following result.

\begin{theorem}
\label{thm:EBS_tot_geod}
Assume that $R_1,\ldots,R_m \in \mls{SPD}_r(n)$ for some $r>0$. Then
\begin{equation*}
\mls{EBS}(R_1,\ldots,R_m) \subseteq \mls{SPD}_r(n) \subseteq S_{R_1,\ldots ,R_m}.
\end{equation*}

\noindent Moreover, these three spaces are all equal unless the reference points lie on a submanifold of the form $\exp_P(H)$ for $P \in \mls{SPD}_r(n)$ and $H$ an affine subspace of $\mls{Sym}(n)$ of dimension at most $m-2$. Under this condition, the Fr\'{e}chet map associated to $R_1,\ldots,R_m$ is an immersion on each of the halfspaces $\mls{SPD}^{+}_r(n) = \{P: \det(P)>r\}$ and $\mls{SPD}^{-}_r(n) = \{P: \det(P)<r\}$.
\end{theorem}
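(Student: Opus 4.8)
The plan is to exploit the orthogonal splitting of $\mls{Sym}(n)$ into traceless symmetric matrices and scalar multiples of the identity, which—together with \Cref{prop:tot_geod_SPD}—exhibits $\mls{SPD}(n)$ as a Riemannian product of the totally geodesic leaf $\mls{SPD}_r(n)$ and a flat line recording $\log\det$. Concretely, writing $P = e^{c/n}P_0$ with $P_0 = \Pi_r(P)\in\mls{SPD}_r(n)$ (the projection of \Cref{prop:tot_geod_SPD}) and $c = \log(\det P / r)$, I would first establish, directly from the closed form \Cref{eq:aff_inv_exp_log}, the identity
\[
\log_P R_i \;=\; e^{c/n}\,\log_{P_0} R_i \;-\; \frac{c}{n}\,P, \qquad i = 1,\ldots,m,
\]
for every $R_i\in\mls{SPD}_r(n)$. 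The derivation uses $\log(\alpha M) = (\log\alpha)\,\Id + \log M$ for $\alpha > 0$, the identity $\Tr(\log(P_0^{-1/2}R_iP_0^{-1/2})) = \log(\det R_i/\det P_0) = 0$, and the fact that—since $\mls{SPD}_r(n)$ is a complete totally geodesic submanifold of the Cartan--Hadamard manifold $\mls{SPD}(n)$—its intrinsic logarithm at $P_0$ is the restriction of the ambient one, so each $\log_{P_0}R_i$ lies in the $(m-1)$-dimensional space $T_{P_0}\mls{SPD}_r(n) = \{V\in\mls{Sym}(n): \Tr(P_0^{-1}V) = 0\}$. Two features of this identity drive the rest: the scalar term $-\frac{c}{n}P$ is common to all $i$, and it is orthogonal with respect to $\langle\cdot,\cdot\rangle_P$ to each $\log_{P_0}R_i$, whereas $\langle P,P\rangle_P = n\neq 0$.

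The inclusions then follow at once. For $P\in\mls{SPD}_r(n)$ one has $c = 0$, so $\log_P R_1,\ldots,\log_P R_m$ are $m$ vectors lying in the $(m-1)$-dimensional $T_P\mls{SPD}_r(n)$, hence linearly dependent, giving $\mls{SPD}_r(n)\subseteq S_{R_1,\ldots,R_m}$. For $\mls{EBS}(R_1,\ldots,R_m)\subseteq\mls{SPD}_r(n)$, if $\sum_i\lambda_i\log_P R_i = 0$ with $\sum_i\lambda_i = 1$, then pairing with $P$ in $\langle\cdot,\cdot\rangle_P$—equivalently, conjugating out $P^{1/2}$ and taking an ordinary trace—annihilates the tangential terms and leaves $c\sum_i\lambda_i = 0$, so $c = 0$ and $\det P = r$.

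For the equality criterion, the identity shows that when $P\notin\mls{SPD}_r(n)$ (so $c\neq 0$) a relation $\sum_i\mu_i\log_P R_i = 0$ is equivalent—by orthogonality of the scalar and tangential components—to the pair of conditions $\sum_i\mu_i\log_{P_0}R_i = 0$ and $\sum_i\mu_i = 0$; that is,
\[
P\in S_{R_1,\ldots,R_m}\quad\Longleftrightarrow\quad\{\log_{P_0}R_1,\ldots,\log_{P_0}R_m\}\ \text{is affinely dependent in}\ T_{P_0}\mls{SPD}_r(n).
\]
The right-hand side depends only on $P_0 = \Pi_r(P)$, so $S_{R_1,\ldots,R_m}\setminus\mls{SPD}_r(n)$ is a union of $\Pi_r$-fibers and is empty precisely when $\{\log_{P_0}R_i\}$ is affinely independent for \emph{every} $P_0\in\mls{SPD}_r(n)$. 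Since $m$ points in the $(m-1)$-dimensional $T_{P_0}\mls{SPD}_r(n)$ are affinely dependent if and only if they lie in an affine subspace of dimension at most $m-2$, and—because $P_0$ and all $R_i$ lie in the totally geodesic $\mls{SPD}_r(n)$—this is in turn equivalent to the $R_i$ lying in some $\exp_{P_0}(H)$ with $H\subseteq\mls{Sym}(n)$ affine of dimension at most $m-2$, the failure of affine independence at some $P_0$ is exactly the exceptional condition in the statement. When it does not occur we obtain $S_{R_1,\ldots,R_m} = \mls{SPD}_r(n)$; moreover, affine independence of $\{\log_{P_0}R_i\}$ at a given $P_0\in\mls{SPD}_r(n)$ forces the linear relation among them (which exists since $m$ vectors sit in an $(m-1)$-dimensional space) to have nonzero coefficient sum, hence to rescale into $\mathcal{P}_m^*$, so $\mls{SPD}_r(n)\subseteq\mls{EBS}(R_1,\ldots,R_m)$ and all three sets coincide. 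Finally, since $S_{R_1,\ldots,R_m}$ is precisely the locus where $\mls{DF}$ is rank deficient, $S_{R_1,\ldots,R_m} = \mls{SPD}_r(n)$ means $\mls{DF}(P)$ has full rank $m$ throughout $\mls{SPD}^{+}_r(n)\cup\mls{SPD}^{-}_r(n)$, i.e.\ $F$ is an immersion (equivalently, as $\ell = m$, a local diffeomorphism) on each of the two halfspaces.

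I expect the main obstacle to be making the product-type identity for $\log_P R_i$ fully rigorous from \Cref{eq:aff_inv_exp_log} alone—in particular the clean decomposition into a single common scalar part and an $\mls{SPD}_r(n)$-tangent part, and the verification that the intrinsic logarithm of the totally geodesic leaf is the restriction of the ambient one (which rests on completeness of $\mls{SPD}_r(n)$ and uniqueness of geodesics in the Cartan--Hadamard manifold $\mls{SPD}(n)$). A secondary delicate point is the equivalence between affine dependence of $\{\log_P R_i\}$ and the $R_i$ lying on some $\exp_P(H)$ with $\dim H\le m-2$: the forward direction is immediate, but the converse uses that $\log_P R_i\in H\cap T_P\mls{SPD}_r(n)$, an affine subspace of the $(m-1)$-dimensional $T_P\mls{SPD}_r(n)$ of dimension at most $m-2$.
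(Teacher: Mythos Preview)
Your proposal is correct and follows essentially the same approach as the paper's proof: both rest on the orthogonal decomposition of $\log_P R_i$ into a radial component proportional to $P$ (with coefficient $-\frac{1}{n}\log(\det P/r)$, identical for all $i$) and a tangential component lying in $T_{\Pi_r(P)}\mls{SPD}_r(n)$, and both exploit the orthogonality of these pieces under $\langle\cdot,\cdot\rangle_P$ to read off the inclusions and the equality criterion. Your packaging via the single identity $\log_P R_i = e^{c/n}\log_{P_0}R_i - \frac{c}{n}P$ is somewhat cleaner than the paper's route—which first writes $\log_P R_i = \alpha_i P + V_i$, computes $\alpha_i$ from the determinant condition, and only later relates $V_i$ to $\log_{\Pi(P)}R_i$ in a separate calculation—but the underlying mathematics is the same.
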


The proof can be found in \Cref{app:proof_thm_EBS_tot_geod}. We stress that one question that remains unaddressed in \Cref{thm:EBS_tot_geod} is whether the condition of equality between the exponential barycentric subspace $S_{R_1,\ldots,R_m}$ and $\mls{SPD}_r(n)$ is truly generic. Indeed, while it can be interpreted as a form of affine independence of the reference points within the submanifold $\mls{SPD}_r(n)$, it still involves checking all possible foot points $P\in \mls{SPD}_r(n)$, unlike the notion of affine independence that is introduced in~\cite{pennec2018barycentric}. We leave it to future investigations to establish if, for instance, almost all configurations of $m$ reference points in $\mls{SPD}_r(n)$ do satisfy this condition. This issue notwithstanding, the result of \Cref{thm:EBS_tot_geod} provides a picture in part reminiscent of the Euclidean case: choosing reference points located on a totally geodesic hypersurface $\mls{SPD}_r(n)$ results in a Fr\'{e}chet map that is a local diffeomorphism on each of the two halfspaces delimited by $\mls{SPD}_r(n)$.

\begin{remark}
We emphasize that we have mainly focused the discussion on the local injectivity of the Fr\'{e}chet map on $\mls{SPD}(n)$. Some natural follow-up questions are to determine which configurations of reference points further result in $F$ being a globally injective map, what is then the image of $F$, and how to obtain the inverse map, as in the Euclidean situation discussed in \Cref{s:frechet_euclidean}. To our knowledge, these questions become significantly more difficult in $\mls{SPD}(n)$. Our preliminary exploration of the simple case of $\ell=3$ reference points in the three-dimensional $\mls{SPD}(2)$ manifold has shown that finding the inverse of $F$, i.e., reconstructing a matrix $P \in \mls{SPD}(2)$ from its distances to the three reference points, can be reduced to solving a single analytic equation on the determinant of $P$. However, the precise structure of solutions of this equation, even in this basic case, remains elusive. Numerical evidence suggests a quite different picture from the Euclidean case given by \Cref{thm:invertibility_Frechet_Eucl}. For these reasons, we leave such issues for future investigation.
\end{remark}

\subsection{$k$-means clustering methods in SPD}\label{s:clustering_spd}

In this section, we describe several possible strategies for applying $k$-means on datasets of SPD matrices, which will be compared numerically in the next section.

The first and most straightforward approach, which we henceforth refer to as the \emph{Intrinsic Riemannian Clustering} ({\bf IRC}), is the direct transposition of the classical Lloyd's algorithm, reported in \Cref{s:background} as \Cref{algo:lloyd}, to the SPD manifold setting. This transposition requires replacing the Euclidean distance with the Riemannian distance \Cref{eq:SPD_distance} on SPD and computing the cluster centroids in step 5 \Cref{algo:lloyd} with the Fr\'{e}chet mean, rather than the arithmetic mean. For a given set $\{X_1,\ldots,X_N\}\subset \mls{SPD}(n)$, computing the Fr\'{e}chet mean requires solving the minimization problem
\begin{equation}
\label{eq:min_problem_Fr_mean}
   \overline P = \argmin_{P\in \mls{SPD}(n)} \left\{  f(P) = \frac{1}{N}\sum_{i=1}^{N} d(X_i,P)^2 \right\}.
\end{equation}

Due to the aforementioned properties of SPD, this is a convex problem with a unique solution. It can be solved numerically via Riemannian gradient descent in $\mls{SPD}(n)$ with gradient $g(P) = - (2/N)\sum_{i=1}^{N} \log_{P} X_i$. The gradient can be directly expressed as described in \Cref{s:frechet_spd_properties}. We adopt the standard scheme from~\cite{afsari2013convergence} which is summarized in \Cref{fmean.gd} below.

\begin{algorithm}
\caption{Gradient descent method to estimate the Fr\'{e}chet mean (based on~\cite{afsari2013convergence}).}\label{fmean.gd}
\begin{algorithmic}[1]
\STATE {\bf Input: } A set of points $X_1,\dots, X_N \in \mls{SPD}(n)$ and an initialization $P_0 \in \mls{SPD}(n)$
\STATE $P^{(0)} \gets P_0$, $t \gets 0$, stop $\gets$ false
\WHILE{$\neg$ stop}
\STATE $g(P^{(t)}) \gets - (2/N)\sum_{i=1}^{N} \log_{P^{(t)}} X_i$
\STATE $\eta^{(t)} \gets$ compute step size (constant or adaptative)
\STATE $P^{(t+1)} \gets \exp_{P^{(t)}}\left(-\eta^{(t)}\,g(P^{(t)})\right)$
\STATE $t \gets t+1$
\STATE stop $\gets$ check convergence
\ENDWHILE
\STATE {\bf Output:} Fr\'{e}chet mean $\overline P =P^{(t)}$
\end{algorithmic}
\end{algorithm}

Since repeated evaluations of Fr\'{e}chet means is the main computational bottleneck of the $k$-means method in manifolds, some algorithmic strategies have been proposed to reduce computational cost, most notably the \emph{Iterative Centroid Method} ({\bf ICM}) proposed in~\cite{cury2013template, ho2013recursive}, which is based on the \emph{recursive barycenter} scheme suggested in \cite{sturm2003probability}. This approach approximates the Fr\'{e}chet mean of a set $X_1, \dots, X_N \in \mls{SPD}(n)$ from the iterations
\[
\begin{aligned}
P_1 &= X_1,\\
P^{(t+1)} &= \big(P^{(t)}\big)^{1/2}\big(\big(P^{(t)}\big)^{-1/2}X^{(t+1)}\big(P^{(t)}\big)^{-1/2}\big)^{1/(t+1)}\big(P^{(t)}\big)^{1/2}
\end{aligned}
\]

\noindent for $t=1,\ldots,N-1$. In other words, starting from the first element $X_1$ in the set, one keeps moving the mean estimate along the geodesic connecting it to the next element with a step decreasing as the inverse of the iteration number. Thus, this approach involves computing $N$ geodesics, which is comparable to a single iteration of the full gradient descent method of \Cref{fmean.gd} (albeit one can parallelize the latter). Note that this scheme is also related to a form of stochastic gradient descent method for \Cref{eq:min_problem_Fr_mean}. In the following, we refer to this specific $k$-means algorithm, which uses this recursive scheme to estimate the Fr\'{e}chet mean at each iteration, as \emph{Approximate Riemannian Clustering} ({\bf ARC}).

As discussed above, in this work, we consider an alternative approach for performing $k$-means on manifold data that involves a prior embedding of the data into some Euclidean space. Our proposed approach, called \emph{Fr\'{e}chet Map Clustering}  ({\bf FMC}), uses a Fr\'{e}chet map and is summarized in \Cref{alg:FMC}. As discussed above, the properties of the Fr\'{e}chet map and, hence, the properties of the algorithm, are fundamentally linked to the number and selection of the reference points. We discuss and evaluate different strategies for reference point selection in \Cref{s:results}.

\begin{algorithm}
\caption{Fr\'{e}chet Map Clustering (FMC)}
\label{alg:FMC}
\begin{algorithmic}[1]
\STATE {\bf Input}: A dataset $\D = \{X_1, \dots , X_N\} \subset \mls{SPD}(n)$, the number of clusters $k \in \N$.
\STATE {\bf Parameters:} A set $\{R_1,\ldots,R_\ell\}$ of reference points in $\mls{SPD}(n)$, order $p=1,2$.
\STATE Define the Fr\'{e}chet map $F^p : P \mapsto F(P) = (d(P,R_1)^p, \ldots,d(P,R_\ell)^p)\in \mathbb{R}^\ell$ and compute the image of $\D$ by $F^p$: $F^p(\D) \gets \{F^p(X_1), \ldots,F^p(X_N)\} \subset \mathbb{R}^\ell$.
\STATE Apply the $k$-means algorithm in $\R^\ell$ to the set $F^p(\D)$ to partition the dataset into $k$ clusters
$H_1, \dots, H_k \subset \R^\ell$
\STATE Identify the corresponding clustering $(\mls{CL}_1,\dots,\mls{CL}_k)$ in $\mls{SPD}(n)$ by a re-labeling the data points which defines $\mls{CL}_r = \{X_j \in \mathcal{D}\;|\; F^p(X_j) \in H_r\}$.
\STATE {\bf Output}: A partition of $\D$ into the clusters $(\mls{CL}_1, \dots,\mls{CL}_k)$.
\end{algorithmic}
\end{algorithm}

Since computing means and distances in the Euclidean setting is very fast compared to the analog manifold computations, the critical advantage of the FMC algorithm over the methods described above is to carry out the iterative steps of $k$-means in a Euclidean space rather than a Riemannian manifold $\M$. This is illustrated in \Cref{cfe.fig}. The computationally most expensive step in \Cref{alg:FMC} is step 3, which requires computing $\ell\,r$ distances $d(X,R_i)$ in $\mls{SPD}(n)$. Each such distance computation requires at most $\mathcal{O}(n^3)$ floating point operations, so that the total cost of step 3 is $\mathcal{O}(\ell\, N \, n^3)$, where $N$ is the number of data points and $\ell$ is the size of the reference set in $\mls{SPD}(n)$.

Note that the injectivity of the Fr\'echet mapping $F$ is not necessary to implement FCM since step 5 is simply a re-labeling of the original data points.

\begin{figure}
\centering
\includegraphics[width=0.6\textwidth]{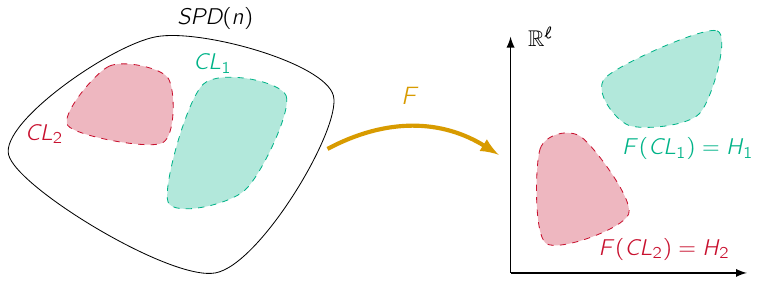}
\caption{FMC algorithm. The Fr\'echet map $F$ takes a finite set $\mathcal{D} \subset \mls{SPD}(n)$ into $\mathbb{R}^\ell$. Next, the $k$-means algorithm in $\mathbb{R}^\ell$ is applied to partition the set $F(\mathcal{D})$ into $k$ clusters $\{H_1,\dots,H_k\}$. Finally, a simple re-labeling of the data is to identify the corresponding clusters $(\mls{CL}_1,\ldots,\mls{CL}_k)$  of $\mathcal{D}$ in $\mls{SPD}(n)$.}\label{cfe.fig}
\end{figure}

For comparison, we also consider below the \emph{Log-Euclidean Clustering} ({\bf LEC}), originally proposed in \cite{arsigny2007geometric}, which also maps the data from $SPD(n)$ into a Euclidean space. LEC follows steps similar to \Cref{alg:FMC}, the main difference being that the Fr\'{e}chet map $F$ is replaced with the log map $P \mapsto \log_R P \in \mls{Sym}(n)$ for a given reference point $R$ (typically $R = \Id$). Subsequently, the $k$-means algorithm is applied in the Euclidean space $\mls{Sym}(n) \approx \mathbb{R}^{n(n+1)/2}$ and the cluster points are then mapped back into $\mls{SPD}(n)$.

\subsection{Reference point selection}\label{s:ref_selection}

We found that the performance of the FMC algorithm (\Cref{alg:FMC}) depends significantly on the positioning of the reference points $\{R_i\}_{i=1}^\ell$, $\ell \in \mathbb{N}$, making the selection of reference points a critical component for deployment. Our goal is a strategy that is computationally fast, cluster-agnostic, and consistently ensures high clustering accuracy. While the theoretical analysis of the previous sections gives a certain insight into how the choice of reference points influence the properties of the resulting Fr\'{e}chet map, these results were mostly focused on the situation of a number of reference points $\ell$ equal or larger than the manifold's dimension $m$. Here, we will instead introduce heuristic approaches in which $\ell$ is chosen sometimes much smaller than $m$. Although this a priori breaks the invertibility of the Fr\'{e}chet map, we empirically found that the overall clusters structure of the data can still be preserved sufficiently and lead to robust clustering results for the FMC method. We will specifically investigate two heuristics: a random selection approach and a principled strategy that requires tuning two scalar hyperparameters. \Cref{s:performance_eval} details our numerical validation of these strategies.

\subsubsection{Random reference point selection}\label{s:rand_ref_selection}

The simplest method of selecting reference points for \Cref{alg:FMC} is random selection. While placing them randomly on a hypersphere enclosing the dataset was explored, it yielded suboptimal results due to our general lack of knowledge about the dataset's geometry. Therefore, we adopted a more practical strategy in which we randomly select the reference points directly from the dataset itself.

\subsubsection{Reference point selection based on approximate Fr\'{e}chet means}\label{s:principled_ref_selection}

Although using random point selection is appealing due to its simplicity, it might lead to inconsistent clustering results. Consequently, we designed a more principled approach that takes into account the geometry of our problem. In summary, our idea is to place the reference points $\{R_m\}_{m=1}^\ell$ on the geodesic $\gamma_{M_iM_j}(t)$ that runs through the centroids, i.e., the Fr\'echet means $M_i$ and $M_j$, of each pair of clusters $\mls{CL}_i$ and $\mls{CL}_j$. Unfortunately, this requires knowing the clusters (true labels), which obviously defeats the purpose. However, since we are solely interested in placing reference points, it is sufficient to compute an approximate solution of the clustering problem so that we can compute approximate Fr\'echet means $\tilde{M}_i$, $\tilde{M}_j$ and a geodesic $\gamma_{\tilde{M}_i \tilde{M}_j}(t)$ running through those points, where
\begin{equation}\label{e:frechet_mean_geo}
\gamma_{\tilde{M}_i \tilde{M}_j}(t) \defeq \tilde{M}_i^{1/2}\exp\!\big(t \tilde{M}_i^{-1/2} \tilde{M}_j \tilde{M}_i^{-1/2}\big) \tilde{M}_i^{1/2}, \quad t\in\R.
\end{equation}

The next tasks we face are where to place the reference points along this geodesic and how many reference points per cluster pair $\mls{CL}_i$ and $\mls{CL}_j$ to select to accurately cluster the data. We elaborated the following strategy on the basis of empirical observations.
\begin{enumerate}
\item For each pair of approximate Fr\'echet means $\tilde{M}_i$, $\tilde{M}_j$, we place \emph{two} reference points along the geodesic connecting $\tilde{M}_i$ and $\tilde{M}_j$.
\item We place the reference points \emph{between} $\tilde{M}_i$ and $\tilde{M}_j$ if the clusters are ``far'' from each other (``far case''); we place the reference points \emph{outside} the geodesic curve connecting $\tilde{M}_i$ and $\tilde{M}_j$, if the clusters are ``close'' (``close case''). This is shown in~\Cref{f:far_close_case}.
\end{enumerate}

\begin{figure}
\centering
\includegraphics[height=3cm]{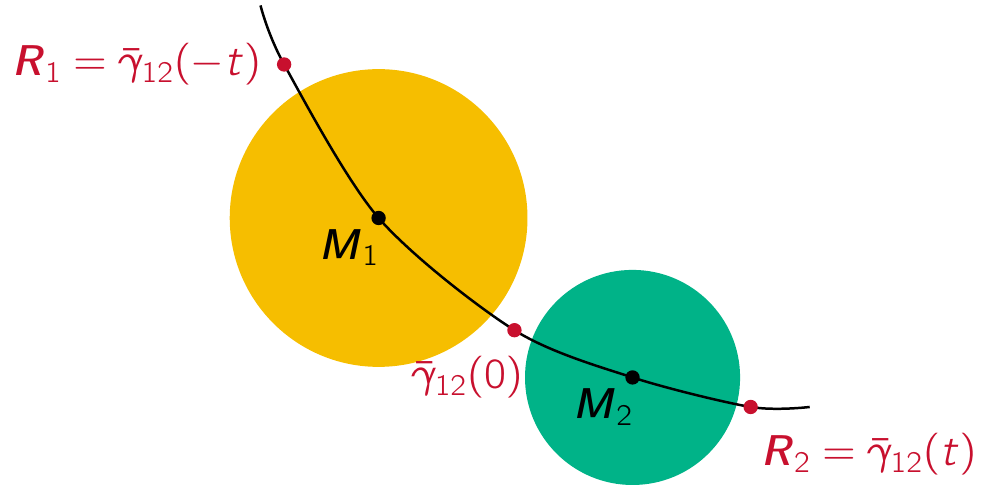}
\includegraphics[height=3.8cm]{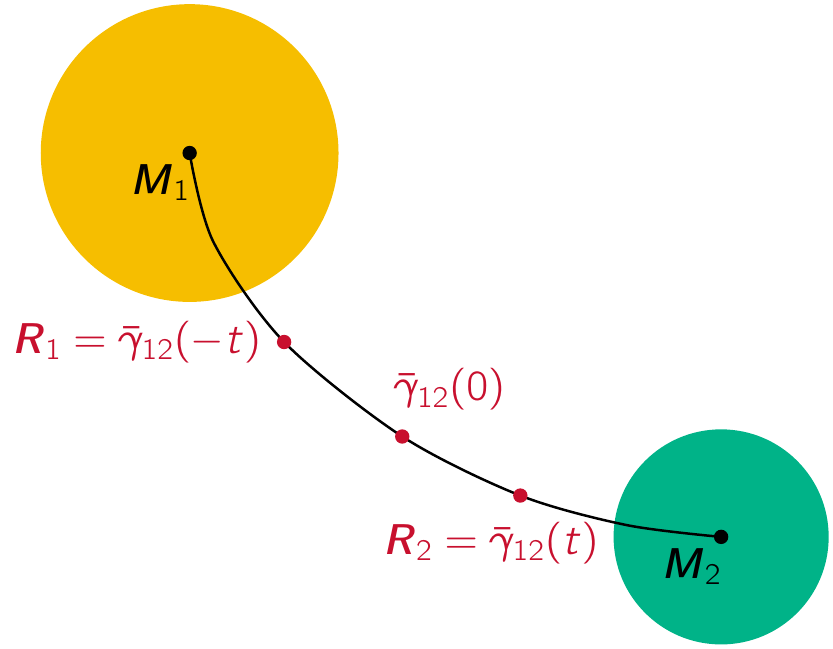}
\caption{Illustration of the proposed strategy to select reference points for the Fr\'echet map $F$. Left: Close case. We select the reference points $R_1$ and $R_2$ outside the segment that connects the Fr\'echet means $M_1$ and $M_2$ of the clusters $\mls{CL}_1$ and $\mls{CL}_2$. Right: Far case. We select the reference points $R_1$ and $R_2$ inside the segment that connects the centers $M_1$ and $M_2$}\label{f:far_close_case}
\end{figure}

This strategy requires us not only to provide an estimate for the clusters $\{\mls{CL}_i\}_{i=1}^k$ but also to estimate the radius $\rho_i > 0$ of each cluster $\mls{CL}_i$. Moreover, we need to quantify what we mean by the cluster pair being ``far'' or ``close'' and determine where along the geodesic to place the two reference points. The following remarks explain the rationale of our strategy.

\begin{remark}
In an attempt to embed the data into a Euclidean space with the lowest possible dimension, we initially selected only one reference point per pair $\{\mls{CL}_i, \mls{CL}_j\}$. However, we observed heuristically that the clustering performance of the FMC algorithm could be unstable with respect to permutations of the location of this reference point (e.g., depending on the reference point being located on the far side of cluster $\mls{CL}_i$ or $\mls{CL}_j$). If the number of clusters is small, doing permutations is a viable strategy since computing the permutations is not expensive, and selecting the best reference point assignment based on, e.g., dispersion or some criterion related to the downstream decisions based on the clustering performance, does not add significant runtime. However, if the number of clusters increases, this permutation approach becomes expensive. We observed that selecting two reference points per pair $\{\mls{CL}_i, \mls{CL}_j\}$ makes the FMC algorithm significantly more stable. A potential downside is that this approach embeds the data in a higher-dimensional space.
\end{remark}

\begin{remark}
We introduced the ``far'' and ``close'' cases since we observed that computing long geodesics $\gamma(t)$ can introduce numerical instability. To avoid computing long geodesics, we place the reference points between the clusters if the approximate Fr\'echet means $\tilde{M}_i$ and $\tilde{M}_j$ are ``far'' from each other relative to the Riemannian radii $\rho_i$ and $\rho_j$ of the clusters.
To differentiate these two cases and, thus, determine the position of the reference points, we assign the coordinate along the geodesic by choosing
\begin{equation}\label{e:cases}
t_{ij} =
\begin{cases}
t_{\text{close}} & \text{if }\,\,\dfrac{d(\tilde{M}_i,\tilde{M}_j)}{\tilde{\rho}_i+\tilde{\rho}_j} < \epsilon_d, \,\,i,j=1,\ldots,k,\,\, i \not= j, \\
t_{\text{far}} & \text{otherwise},
\end{cases}
\end{equation}
 for the ``close case'' and the ``far case,'' respectively. See \cref{f:far_close_case} for an illustration. Here, $\tilde{\rho}_i$ and $\tilde{\rho}_j$ represent the approximate Riemannian radii of the convex hull of $\widetilde{\mls{CL}}_i$ and $\widetilde{\mls{CL}}_j$, respectively, and $\epsilon_d > 0$, $t_{\text{close}}>1$, and $0 \leq t_{\text{far}} < 1$ are user defined parameters. The parameter $t_{ij}$ controls how far along the geodesic running through $\tilde{M}_i$ and $\tilde{M}_j$ we place the reference points.
\end{remark}

Next, we address the question of how to approximate $\mls{CL}_i$, $M_i$, and $\rho_i$. While the simplest approach is to carry out the approximation directly in Euclidean space (using the embedding of $SPD(n)$ in $\R^m$), this tends to lead to inconsistent clustering results. A better approximation is obtained using the log-Euclidean framework, in which we compute the approximate clusters $\{\widetilde{\mls{CL}}_i\}_{i=1}^k$ by applying $k$-means in Euclidean space after embedding the data $\mathcal{D}$ into the tangent space using matrix logarithms $\log_{\Id}:\mls{SPD}(n)\to\R^{n(n+1)/2}$ (we evaluate the matrix logarithm at $\Id$ in all our experiments). Subsequently, we use the clusters $\widetilde{\mls{CL}}_i$ to compute the Fr\'echet means $\tilde{M}_i$ on the manifold using ICM~\cite{cury2013template, ho2013recursive} (see also \Cref{s:clustering_spd}). To find $\tilde{\rho}_i$ we draw $N_\rho \ll N$ samples $X_j$ from $\widetilde{\mls{CL}}_i \subset \mathcal{D} = \{X_1,\ldots,X_N\}$ for each approximate cluster $\widetilde{\mls{CL}}_i$. Given $\tilde{M}_i$ and $\tilde{\rho}_i$, we determine the location of the reference points based on the criterion~\Cref{e:cases}. To place the reference points, we compute the midpoint $\bar{M}_{ij} = \gamma_{\tilde{M}_i\tilde{M}_j}(1/2)$ of the geodesic that connects $\tilde{M}_i$ and $\tilde{M}_j$ using \Cref{e:frechet_mean_geo}. Given $\bar{M}_{ij}$, we position the reference points along the geodesic at locations $\bar{\gamma}_{ij}(\pm t_{ij})$ based on the auxiliary geodesic
\begin{equation}\label{e:aux_geo}
\bar{\gamma}_{ij}(t) \defeq \bar{M}_{ij}^{1/2}\exp\!\big(t\bar{M}_{ij}^{-1/2}M_j\bar{M}_{ij}^{-1/2}\big)\bar{M}_{ij}^{1/2}
\end{equation}

\noindent defined by the midpoint $\bar{M}_{ij}$. We summarize this approach in \Cref{a:refpoint-selection}.

\begin{remark}
Using the log-Euclidean framework to identify approximate clusters $\{\widetilde{\mls{CL}}_i\}_{i=1}^k$ as a way to select reference points $\{R_m\}_{m=1}^\ell$ may seem counterintuitive, since, in some cases, this approach already solves the clustering problem. However, we show in \Cref{s:results} that our framework yields more accurate and more consistent results than the log-Euclidean framework
at a comparable computing cost.
\end{remark}

By the description above, the selection of the reference point for our FCM algorithm requires assigning the values of the hyperparameters: $t_{\text{close}}$, $t_{\text{far}}$, $N_\rho$, and $\epsilon_d$. We discuss the sensitivity of our FCM algorithm with respect to changes in these hyperparameters in \Cref{s:results}.

\begin{algorithm}
\caption{Reference point selection based on approximate Fr\'{e}chet means. We use the Fr\'echet means of approximate clusters $\{\widetilde{\mls{CL}}_i\}_{i=1}^k$ found via the log-Euclidean framework to define an auxiliary geodesic $\bar{\gamma}_{ij}$ that is used to position two reference points per cluster pair. }\label{a:refpoint-selection}
\begin{algorithmic}[1]
\STATE {\bf Input: } A set of points $X_1,\dots, X_N \in \mls{SPD}(n)$, the number of clusters $k$, and hyperparameters $t_{\text{close}}$, $t_{\text{far}}$, $N_\rho$, and $\epsilon_d$
\STATE $\mathcal{E} \gets \{\log_{\Id} (X_1),\dots, \log_{\Id} (X_N) \}$
\STATE $\{\widetilde{\mls{CL}}_i\}_{i=1}^k \gets$ apply Euclidean $k$-means to $\mathcal{E}$
\STATE $\{\tilde{M}_i\}_{i=1}^k \gets$ compute Fr\'echet means on $\mls{SPD}(n)$ for each cluster $\widetilde{\mls{CL}}_i$ using ICM
\STATE $\{\tilde{\rho}_i\}_{i=1}^k \gets$ estimate Riemannian radius $\tilde{\rho}_i$ for each $\widetilde{\mls{CL}}_i$ by drawing $N_{\rho}$ random samples $X_j$ within $\widetilde{\mls{CL}}_i$; compute $90\%$ quantile $\rho_i$ of the distances $d(\tilde{M}_i,X_j)$
\STATE $\{t_{ij}\} \gets$ given $\tilde{M}_i$, $\tilde{M}_j$, $\tilde{\rho}_i$, $\tilde{\rho}_j$, determine $t_{ij}$ based on \Cref{e:cases} for each $i,j=1,\ldots,k$, $i \not = j$
\STATE $\{\bar{M}_{ij}\} \gets$ compute midpoints \[\bar{M}_{ij} \defeq \gamma_{\tilde{M}_i\tilde{M}_j}(1/2)\] for each pair of distinct clusters $\{\widetilde{\mls{CL}}_i, \widetilde{\mls{CL}}_j\}$, $i,j=1,\ldots,k$, $i \not = j$, via \Cref{e:frechet_mean_geo}
\STATE $\{R_m\}_{m=1}^\ell \gets $ given $t_{ij}$ and $\{\bar{M}_{ij}\}$ compute two reference points (in symmetric location) based on the auxiliary geodesic $\bar{\gamma}_{ij}$ \Cref{e:aux_geo} evaluated at $\pm t_{ij}$
\STATE {\bf Output:} Reference points $\{R_m\}_{m=1}^\ell \subset \mls{SPD}(n)$
\end{algorithmic}
\end{algorithm}

\subsection{Performance Evaluation}\label{s:performance_eval}

In order to assess and compare clustering methods, we report different performance scores that we briefly recap below. The first criterion is the overlap between the estimated clusters and some \emph{ground truth} labels, assuming, in the context of method validation, that these are known. For classification tasks, accuracy is a key metric to evaluate model performance. One popular approach to report accuracy is based on the so-called confusion matrix, which compares true labels with predicted labels. In clustering scenarios, since there is no \emph{a priori} correspondence between the cluster indices and the original labels, one needs to first solve an optimal assignment problem to find the best permutation of the estimated clusters that match them to \emph{ground truth} groups. This is classically done using, e.g., the \emph{Hungarian method}~\cite{kuhn1955hungarian}. We apply this exact approach to obtain correspondences between clusters and \emph{ground truth} labels from which we can then compute the confusion matrix between the two sets of clusters. We then define the clustering accuracy as the proportion of data samples assigned to the correct group.

A second quantitative measure of clustering is the total Riemannian dispersion, which is the objective function of the intrinsic Riemannian $k$-means method presented in \Cref{s:clustering_spd}. If $\widehat{\P} = (\widehat{\mls{CL}}_1,\ldots,\widehat{\mls{CL}}_k)$  are the output clusters for any given method, the Riemannian dispersion is given by \Cref{eq:totaldisp}, where $d_\mathcal{M}$ is the affine-invariant Riemannian distance on $\mls{SPD}(n)$. To provide an easier-to-interpret score, whenever some \emph{ground truth} groups $\P=(\mls{CL}_1,\ldots,\mls{CL}_k)$ are known for the considered example, we also report the normalized dispersion:
\begin{equation}
\label{eq:normalized_totdisp}
    \overline{\operatorname{totdisp}}(\widehat{\P}) = \frac{\operatorname{totdisp}(\widehat{\P})}{\operatorname{totdisp}(\P)}.
\end{equation}
We note that $\P$ may not necessarily correspond to the minimal total dispersion among all partitions in general. Consequently, the above normalized total dispersion may, in some cases, be smaller than one. As such, this should be understood as a mere renormalization to make the obtained values of the dispersion more interpretable across different simulations.

\section{Numerical Experiments}\label{s:results}
In this section, we present extensive numerical experiments using both synthetic and real-world data in $\mls{SPD}(n)$ to illustrate our FMC algorithm.

\subsection{Reference Point Selection}\label{s:res_refpoints}
Our first set of experiments aimed to validate the strategies for the selection of reference points in the FMC algorithm we presented in \Cref{s:ref_selection} using synthetic data.

For these experiments, we generated multiple balls $B_i$, $i=1,\dots,k$, in $\mls{SPD}(4)$ by randomly sampling their centers $C_i$ and assigning corresponding radii $\rho_i$ drawn uniformly from $[0.8,1.2]$. The centers are retained only if their normalized pairwise distances satisfy
\begin{equation}\label{eq:distratio}
    d_{\text{low}} \leq \frac{d(C_i, C_j)}{\rho_i + \rho_j} \leq d_{\text{up}}, \quad 1 \leq i \neq j \leq k,
\end{equation}
for some user-defined parameters $ d_{\text{low}}, d_{\text{up}}$. As normalized pairwise distances close to $1$ make the balls harder to separate, in our experiments below, we selected these parameters close to 1. For each ball, we generated 4\,000 samples uniformly distributed within their boundary.

\subsubsection{Random reference point selection}

\textbf{Purpose:} To assess the performance of the FCM algorithm using the random reference point selection strategy presented in \Cref{s:rand_ref_selection}, for various choices of the number of clusters $k$.

\textbf{Setup:} We generated multiple benchmark synthetic data, as described above, with the number of balls $k \in \{2,3,4,5\}$. For each value of $k$, we generated 100 instances using a varying number of reference points
$\ell$ chosen randomly within the dataset.

\textbf{Results:} We report in \Cref{t:fcf_random_detailed_results} the result of the FCM algorithm. The table includes the mean accuracy (averaged over the 100 runs), its standard deviation, the minimum accuracy (i.e., the worst performance observed), and the 10th percentile (the accuracy threshold below which the lowest 10\% of runs fall).

\begin{table}
\caption{FCM results for a synthetic benchmark dataset of $k$ balls with random centers $C_i$ and random radii $\rho_i$. For each choice of $\ell$ (number of reference points), we generate 100 benchmark sets of disjoint balls. We report the average clustering accuracy for varying $\ell$ and $k$ (ground truth data is available).}
\label{t:fcf_random_detailed_results}
\tabadjust
\begin{tabular}{cccccc}
\toprule
$k$ & $\ell$ & mean (\%) & std & min (\%) & 10th (\%) \\
\midrule
2 &  2 & 99.28\% & 0.010 & 96.00\% & 98.00\% \\
  &  4 & 99.83\% & 0.004 & 98.00\% & 99.00\% \\
  &  6 & 99.92\% & 0.003 & 98.00\% & 100.00\% \\
3 &  3 & 94.54\% & 0.090 & 68.00\% & 76.70\% \\
  &  6 & 98.46\% & 0.040 & 74.00\% & 97.90\% \\
  &  9 & 99.74\% & 0.006 & 97.00\% & 99.00\% \\
4 &  6 & 94.57\% & 0.070 & 73.00\% & 81.70\% \\
  & 12 & 98.96\% & 0.015 & 91.00\% & 97.90\% \\
  & 18 & 99.69\% & 0.005 & 98.00\% & 99.00\% \\
5 & 10 & 95.26\% & 0.040 & 79.00\% & 88.70\% \\
  & 20 & 98.48\% & 0.012 & 95.00\% & 97.00\% \\
  & 30 & 98.95\% & 0.007 & 96.00\% & 98.00\% \\
\bottomrule
\end{tabular}
\end{table}

\textbf{Conclusion:} For a fixed $k$, increasing $\ell$ steadily improves the mean accuracy. Random reference point selection does not guarantee very high clustering accuracy, especially as the manifold dimension increases. Specifically: (1) for fixed $k$, increasing $\ell$ steadily improves mean accuracy and reduces variability (higher minima and 10th percentiles); (2) as $k$ increases, a larger $\ell$ is required to achieve near-perfect accuracy;
(3) for $k=5$, the performance plateaus below $99\%$ even at $\ell=30$, indicating diminishing returns and a clear gap relative to $k\le 4$.

\subsubsection{Reference point selection based on approximate Fr\'{e}chet means}

\textbf{Purpose:} To assess the performance of the FMC algorithm using the reference point selection approach in~\Cref{s:principled_ref_selection} for various choices of the hyperparameters $t_{\text{close}}$ and $t_{\text{far}}$.

\textbf{Setup 1:} We created a benchmark synthetic dataset using $k=5$ balls in $\mls{SPD}(4)$ and generated 100 random configurations  with $d_{\text{low}} = 1.1$ and $d_{\text{up}} = 3$. Following \Cref{e:cases} with $\epsilon_d = 2.5$  (found heuristically to yield satisfactory results), we vary \[
t_{\text{close}} \in \{1.5, 2.0, 2.5, 3.0, 3.5, 4.0, 4.5, 5.0\}
\quad \text{and} \quad t_{\text{far}} \in \{0.30, 0.35, 0.40\}.
\]

\textbf{Results:} For each of the 100 data configurations with any choice of $t_{\text{close}}$ and $t_{\text{far}}$, the FMC algorithm achieved a mean clustering accuracy of $100\%$ and a mean normalized dispersion ratio of $1$; standard deviation was negligible in both cases.

\textbf{Setup 2:} We constructed 500 benchmark datasets with $k=2$ balls in $\mls{SPD}(4)$. For the ``close case'' we set $d_{\text{low}} = 1.10$ and $d_{\text{up}} = 1.15$. Similarly to the experiment above, we selected
\[
t_{\text{close}} \in \{1.5, 2.0, 2.5, 3.0, 3.5, 4.0, 4.5, 5.0\}.
\]
For the ``far case'' we set $d_{\text{low}} = 2.50$ and $d_{\text{up}} = 5.00$ and choose $t_{\text{far}} \in \{0.30, 0.35, 0.40\}$.

Since our algorithm relies on random sampling to estimate the Riemannian radii, we repeated the experiment for each choice of $t_{\text{far}}$ and $t_{\text{close}}$ 100 times. For each choice, we set the reference points to $R_1 = \bar{\gamma}_{12}(-t_{ij})$ and $R_2 = \bar{\gamma}_{12}(-t_{ij})$, respectively, as described above.

\textbf{Results:} Across all runs, we consistently observed that the clustering accuracy and the normalized dispersion are equal to $1$.

\textbf{Conclusion:} The FMC algorithm achieved a very high accuracy and normalized dispersion for all choices of $t_{\text{close}}$ and $t_{\text{far}}$ considered in this experiment.

\subsubsection{Comparison of reference point selection strategies}

\textbf{Purpose:} To compare the performance of the FMC algorithm under the two strategies for the selection of reference points presented in \Cref{s:ref_selection}.

\textbf{Setup:} We adopted the same strategy outlined in the above experiment to generate the benchmark data sets in $\mls{SPD}(4)$. We considered two strategies to select the reference points.
For the principled reference point selection strategy of~\Cref{s:principled_ref_selection}, we followed the same approach as in the above experiment and selected 20 reference points after fixing $t_{\text{close}}=2$ and $t_{\text{far}}=0.35$. For the simpler random selection strategy presented in \Cref{s:rand_ref_selection}, the 20 reference points were selected randomly from the dataset.

\textbf{Results:} The FMC algorithm in combination with the principled reference point selection strategy of~\Cref{s:principled_ref_selection} achieved a mean clustering accuracy of $100\%$ (with negligible standard deviation). In contrast, when we used the simpler random selection strategy of~\Cref{s:rand_ref_selection},  mean accuracy dropped to $98.48\%$ with a standard deviation of $0.012$. If the number of reference points increases to $30$, the mean accuracy is $98.95\%$ with a standard deviation of $0.007$.

\textbf{Conclusion:} The reference-point placement informed by the approximate Fr\'{e}chet means
of~\Cref{s:principled_ref_selection} is more sample-efficient and stable than naive random selection.

\subsection{A more challenging configuration}
\label{ssec:logEucl_vs_Frechet}

\textbf{Purpose:} To examine the impact of choosing $1-$ vs $2-$ Fr\'{e}chet map in the FCM approach and compare the FCM approach with the log-Euclidean (LEC) embedding.

\textbf{Setup:} We considered 4 disjoint balls of fixed radii $r=1$ in $\mls{SPD}(4)$, placed in a challenging configuration. Namely, we generated the center of the first ball according to $C_1 = \exp(V_1)$ where $V_1$ is a random matrix in $\mls{Symm}(4)$ (with each entry drawn from a centered normal distribution) which is then rescaled so that $\|V_1\|_F = 12$. In other words, $C_1$ is obtained by moving from $\Id$ in a random direction up to a distance of $12$ from $\Id$. We then selected the second center as $C_2 = \exp(V_2)$ where $V_2 = V_1 + \Delta$ is a small random perturbation of $V_1$ with $\Delta$ being a random matrix of $\mls{Symm}(4)$ with entries drawn from a centered normal distribution of variance $0.1$. To ensure that the resulting Riemannian balls $B(C_1,1)$ and $B(C_2,1)$ remain disjoint, we only kept those two centers when $d(C_1,C_2) >2$. The centers of the last two balls were then set to $C_3 = C_1^{-1}=\exp(-V_1)$ and $C_4=C_2^{-1}=\exp(-V_2)$. This ensures, on the one hand, that all 4 balls are disjoint from one another since $d(C_3,C_4) = d(C_1,C_2)>2$ (the equality itself follows from the expression of the distance \Cref{eq:SPD_distance}), while also making the configuration of those balls symmetric and thus centered around $\Id$.

We generated 250 realizations of the above scheme, and, for each ball, generated 5000 samples drawn uniformly inside it according to the same approach described in the above experiments. We then evaluated the clustering accuracy and normalized dispersion of the 1-Fr\'{e}chet and 2-Fr\'{e}chet map (1-FCM and 2-FCM) methods (using the reference point selection strategy described in \Cref{s:principled_ref_selection} and deployed in the numerical example above), as well as the LEC approach.

\textbf{Results:} Results are reported in \Cref{tab:results_4balls_LE_vs_Fr}.

\textbf{Conclusion:} The 1-FCM method achieved the best accuracy and dispersion, outperforming the 2-FCM and LEC approaches. We explain the poor performance of LEC (worst performance) by the fact that geodesic distances are not well approximated in the log-Euclidean setting for points far away from $\Id$, as is the case here. The squared Fr\'{e}chet map method leads to slightly improved accuracy and dispersion, yet still appears to suffer from the distortion induced by the mapping. This experiment
shows that the Fr\'{e}chet map method, especially the 1-Fr\'{e}chet map, produces much lower metric distortion (cf. \Cref{prop:Lipschitz_reg_Frechet}) compared to LEC. The difference becomes particularly evident when the manifold data configuration is not well reflected by its tangent space approximation.

\begin{table}
\caption{Comparison of 1-Fr\'{e}chet map (1-FCM), squared Fr\'{e}chet map (2-FCM), and the log-Euclidean clustering (LEC) methods. We report the mean accuracy (ground truth labels are available) and dispersion for the three methods over $250$ draws of four balls in $\mls{SPD}(4)$, selected randomly according to the scheme described in \Cref{ssec:logEucl_vs_Frechet}. We report the mean accuracy and the mean dispersion along with their standard deviations $\sigma$ (in brackets).}
\label{tab:results_4balls_LE_vs_Fr}
\tabadjust
\begin{tabular}{lrr}
\toprule
{\bf Method} & {\bf Mean Accuracy} (std $\sigma$) & {\bf Mean Dispersion} (std $\sigma$)\\
\midrule
1-FMC (ours) & 89.61\% ($17.27\%$) &  2.11 ($2.25$)\\
2-FMC (ours) & 69.52\% ($20.64\%$) &  5.58 ($4.39$)\\
LEC          & 62.08\% ($6.44\%$)  & 10.39 ($6.26$)\\
\bottomrule
\end{tabular}
\end{table}

\subsection{Efficiency comparison with IRC and ARC}

\textbf{Purpose:} To compare the proposed FCM algorithm against the IRC and ARC algorithms on high-dimensional SPD data.

\textbf{Setup:} We generated four disjoint balls in $\mls{SPD}(20)$ where the radius of each ball is randomly selected within the interval $[0.8, 1.2]$. We sampled 4\,000 points uniformly within each ball. For the implementation of the FCM algorithm, we generated the reference points using the strategy presented in~\Cref{s:principled_ref_selection} and already utilized in the above numerical experiments.

\textbf{Results:} \Cref{tab:spd20_results} reports the clustering performance of the FCM, IRC, and ARC algorithms. Performance numbers, including average accuracy, normalized dispersion, and running time.
are averages over $50$ replications.

\textbf{Conclusion:} The FCM algorithm with $t_{\text{close}}=5$ and $t_{\text{far}}=0.35$ achieves clustering accuracy comparable to IRC and ARC, while being approximately fourteen times faster than IRC and nine times faster than ARC.

\begin{table}
\caption{Comparison of FCM, ARC, and IRC on synthetic data sampled from four disjoint balls in $\mls{SPD}(20)$. We report the runtime (in seconds), the accuracy, and the normalized dispersion (from left to right).}
\label{tab:spd20_results}
\tabadjust
\begin{tabular}{lrrrr}
\toprule
\bf Method & \bf Runtime & \bf Speedup & \bf Accuracy & \bf Normalized Dispersion\\
\midrule
IRC          & 832 & ---         &  94.1\% & 1.17\\
ARC          & 513 & 8.8$\times$ &  91.6\% & 1.26\\
2-FCM (ours) &  58 & 14.3$\times$  & 100.0\% & 1.00\\
\bottomrule
\end{tabular}
\end{table}

\subsection{FCM algorithm on the texture dataset}

\textbf{Purpose:} To test the performance of the FCM algorithm on real data. Here, we consider images of texture.

\begin{figure}
\centering
\includegraphics[width=0.2\textwidth]{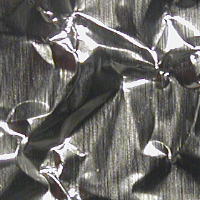}
\includegraphics[width=0.2\textwidth]{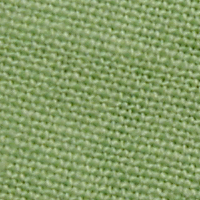}
\includegraphics[width=0.2\textwidth]{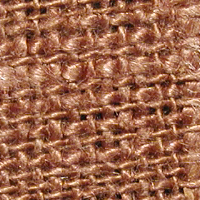}
\includegraphics[width=0.2\textwidth]{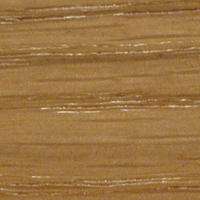}
\caption{Representative datasets from the four textures (aluminum foil, cotton, linen, and wood) considered in this study. These images are taken from the KTH-TIPS2b dataset.}\label{fig:kth-tips2b}
\end{figure}

\textbf{Setup:} We select four categories (see \Cref{fig:kth-tips2b}) from the KTH-TIPS2b texture dataset~\cite{caputo2005class}. Each category has $432$ images. We resize the original images to $128\times 128$ pixels. We generate covariance descriptors as follows. For each pixel location $(u,v)$ in the $128 \times 128$ image, where $1 \le u,v \le 128$, we compute a 23-dimensional feature vector
\[
x_{u,v}^m=\left[r_{u,v}, g_{u,v}, b_{u,v}, \left|G_{u,v}^{0,0}(m)\right|,\cdots, \left|G_{u,v}^{4,3}(m)\right|\right]^\mathsf{T} \in \mathbb{R}^{23},
\]
 where $G_{u,v}^{o,s} \in \mathbb{R}$, $0 \le o \le 4$, $0 \le s \le 3$ are the Gabor filter coefficients  of the image $m \in \{1,\ldots,1728\}$ centered at $(u,v)$ defined in Section 2.1 of \cite{tou2008gabor}. Next, for each image, we compute the $23\times 23$ covariance matrix
\[
C_m = \frac{1}{N-1} \sum_{u=1}^{128} \sum_{v=1}^{128} (x_{u,v}^m-\mu_{u,v}^m)(x_{u,v}^m-\mu_{u,v}^m)^\mathsf{T} \in \mathbb{R}^{23,23},
\]
 where $\mu_{u,v}$ is the mean of feature vector $x_{u,v}\in\mathbb{R}^{23}$. In summary, we obtain a set of $1\,728$ covariance matrices $C_k$, $k=1,\ldots,1\,728$ of size $23 \times 23$.

We applied four clustering methods: the proposed FCM algorithm with $t_\text{close}=5$ and $t_\text{far}=0.35$, and the algorithms ARC, IRC, and LEC.

\textbf{Results:} \Cref{tab:texture_results} reports the results (averaged over $50$ repetitions to get more precise estimates), including the average runtime, accuracy, and normalized dispersion.

\textbf{Conclusion:} The FCM algorithm achieves results comparable to those of the ARC and IRC methods in terms of clustering performance. The FMC approach is significantly faster than ARC and IRC, with LEC being slightly faster than FCM, but not as accurate. LEC is faster than FCM but slightly less accurate.

\begin{table}
\caption{Comparison of FCM, ARC, and IRC on the KTH-TIPS2b texture dataset. We report the runtime (in seconds), the speedup, the clustering accuracy (ground truth textures are available), and the normalized dispersion.}
\tabadjust
\begin{tabular}{lrrrr}
\toprule
\bf Method & \bf Runtime & \bf Speedup & \bf Accuracy & \bf Normalized Dispersion\\
\hline
IRC    &   174 & ---          &     77.42\% & 0.68 \\
ARC    &    66 &  2.6$\times$ & \bf 77.67\% & 0.69 \\
1-FCM  &     7 & 24.9$\times$ &     76.33\% & 0.69 \\
2-FCM  &     8 & 21.8$\times$ &     76.15\% & 0.69 \\
LEC    & \bf 3 & 58.0$\times$ &     75.84\% & 0.69 \\
\hline
\end{tabular}
\label{tab:texture_results}
\end{table}

\subsection{FCM algorithm on the COBRE resting-state fMRI dataset}

\textbf{Purpose:} To test the performance of the FCM algorithm on real data. Here, we consider an fMRI dataset.

\textbf{Setup:} We used the COBRE resting-state fMRI dataset from~\cite{cobre_ml_figshare}, which contains two diagnostic categories, schizophrenia (SCZ) and healthy control (Control), for a total of $146$ subjects in total (72 SCZ and 74 Control). Using the released precomputed functional connectivity features with 197-voxel region-of-interest resolution, we calculated one $197\times 197$ SPD matrix per subject.

Each subject's connectivity matrix is provided as an upper-triangular matrix without the diagonal. We recovered the full symmetric matrix $A$ by completion, setting the diagonal entries to $1$ (correlation convention), and adding a small perturbation $\epsilon >0$ to the identity to ensure strict positive-definiteness. That is, we obtained SPD matrices as
\[
A \leftarrow \tfrac{1}{2}(A + A^\mathsf{T}) + \epsilon\Id
\]
with $\epsilon = \snum{1e-6}$.

We applied four clustering approaches on the COBRE dataset: our proposed FCM algorithm with $t_{\text{close}}=5$ and $t_{\text{far}}=0.35$, and the algorithms ARC, IRC, and LEC.

\textbf{Results:} \Cref{tab:cobre_results} reports the mean performance over $10$ repeated runs, including average accuracy, normalized dispersion, runtime, and speedup factor.

\textbf{Conclusion:} All algorithms achieve comparable accuracy, while FCM and LEC exhibit significantly lower computational cost. We note that the accuracy for all approaches is fairly low in this particular example, while the normalized dispersions are all below $1$, which reflects the fact that the two groups in this dataset are not necessarily well-separated, at least based on the k-means approach.

\begin{table}
\caption{Comparison of FCM, ARC, LEC, and IRC on the COBRE dataset. We report (from left to right) the runtime (in seconds), the speedup compared to IRC, the accuracy, and the normalized dispersion.}
\label{tab:cobre_results}
\tabadjust
\begin{tabular}{lrrrr}\toprule
\bf Method & \bf Runtime & \bf Speedup & \bf Accuracy & \bf Normalized Dispersion\\
\midrule
IRC          & 11\,976 & ---         & 60.96\%     & 0.95 \\
ARC          &     203 &  $60\times$ & 61.73\%     & 0.88 \\
1-FCM (ours) &      20 & $600\times$ & 61.05\%     & 0.88 \\
2-FCM (ours) &      20 & $600\times$ & \bf 61.73\% & 0.88 \\
LEC          &  \bf 18 & $600\times$ & 61.25\%     & 0.88 \\
\bottomrule
\end{tabular}
\end{table}

\section{Conclusion and future perspectives}\label{s:conclusions}

We introduced and evaluated a new variant of the classical k-means method for clustering data on non-Euclidean spaces. Our approach is simple to implement, easy to parallelize, and fast, while able to closely match the results of the intrinsic k-means scheme on the manifold with some adequate choice of reference points. The main observations are:
\begin{itemize}
\item The runtime of the proposed algorithms is at least one order of magnitude faster than performing clustering on the manifold without sacrificing accuracy.
\item The performance of the proposed approach is consistent across a series of synthetic and real-data examples. Although the Log-Euclidean embedding approach remains slightly faster, its accuracy can deteriorate for certain data configurations. We demonstrated empirically that the accuracy of our method remains more stable in such situations.
\item We investigated some theoretical properties of Fr\'{e}chet maps that could provide a fundamental mathematical underpinning to our proposed FMC clustering approach. Although one can derive a quite complete picture for Fr\'{e}chet maps on Euclidean spaces, we found that a similar analysis becomes much more challenging on the $\mls{SPD}(n)$ spaces and a fortiori on more general Cartan-Hadamard manifolds. However, we anticipate that this preliminary exploration will draw the attention of the applied mathematics community and pave the way for future work on this topic.
\end{itemize}
From a wider perspective, we believe that Fr\'{e}chet maps could be further leveraged as a general parametric class of mappings on manifolds for applications beyond the clustering problem considered in this work, and be used within e.g. autoencoder architectures to estimate optimal latent feature space embeddings for manifold datasets.

\section*{Acknowledgments}
The authors thank David Pierucci for his help in the characterization of totally geodesic hypersurfaces of $\mls{SPD}(n)$, as well as Alice Barbora Tumpach and Xavier Pennec for several stimulating discussions on topics related to this work.

\begin{appendix}

\section{Hardware and Software Libraries}\label{s:compute}

All experiments were executed on the Carya Cluster---a modern computing system hosted by the Research Computing Data Core of the Hewlett-Packard Enterprise Data Science Institute of the University of Houston. Carya hosts a total of \inum{9984} Intel CPU cores and \inum{327680} Nvidia GPU cores integrated within 188 compute and 20 GPU nodes, equipped with Intel Xeon G6252 CPUs and NVIDIA V100 GPUs.  All nodes are equipped with solid-state drives for local high-performance data storage.
Our code was implemented in \texttt{Python} version 3.12.3. Some of the modules used in our code are based on \texttt{scikit-learn} version {1.5.1} and \texttt{scikit-image} version {0.24.0}. Upon acceptance of this article, our code will be released on \texttt{GitHub} at \url{https://github.com/jishi24/Frechet-Clustering}.

\section{Proof of Theorem \ref{thm:convex_image_Frechet}}
\label{app:proof_image_balls}
Let us first examine the characterization of the image of the Fr\'{e}chet map $F_r$. Let $d =(d_1, \dots, d_m) \in \R^m$ such that $d = F_r(x)$ for some $ x \in \R^m$, i.e., $d_i = \|x-r_i\|^2$, $i=1,\ldots,m$. By translation invariance, we may assume that $r_m = 0$. Then, due to the assumption made on the reference points, the vectors $r_i=r_i-r_m$ for $i=1,\ldots,m-1$ are linearly independent. Their Gram matrix $G = (r_i^\mathsf{T} r_j) \in  \R^{(m-1)\times (m-1)}$ is thus positive definite. We then define the vectors $z, b$, and $u$ in $\R^{m-1}$ by $z_i = d_i - d_m = \|r_i\|^2-2 r_i^\mathsf{T}x$, $b_i = \|r_i\|^2$, and $u_i=r_i^\mathsf{T} x$ for all $i=1,\ldots,m-1$. It results that $u = \frac{1}{2}(b-z)$.

We now aim to express $x$ with respect to $u$ and $d_m$. To that end, we write $x = x_{H^0} + sn$, where $s \in \R$, $H^0$ is the hyperplane spanned by the $r_i$'s, $n$ the unit normal vector to that hyperplane, and $x_{H^0}$ is the projection of $x$ onto $H^0$. Since $u$ is the vector of inner products of $x$ and the $r_i$'s for $i=1,\ldots,m-1$, one immediately has that $x_{H^0} = \sum_{i=1}^{m-1} \alpha_i r_i$ with $\alpha = G^{-1} u$, from which we also get $\|x_{H^0}\|^2 = u^\mathsf{T} G^{-1} u$. Then $\|x\|^2 = \|x_{H^0}\|^2 + s^2$, i.e., $s^2 = d_m - u^\mathsf{T} G^{-1} u$. It follows that a vector $d$ in $\R^m$ belongs to the image $F_r(\R^m)$ if and only if $u^\mathsf{T} G^{-1} u \leq d_m$; that is:
\begin{equation*}
    F_r(\R^m) = \left\{d\in \R^m: \ \begin{pmatrix}\|r_1\|^2 + d_m -d_1 \\ \vdots \\ \|r_{m-1}\|^2 + d_{m} - d_{m-1} \end{pmatrix}^\mathsf{T} G \begin{pmatrix}\|r_1\|^2 + d_m -d_1 \\ \vdots \\ \|r_{m-1}\|^2 + d_{m} - d_{m-1} \end{pmatrix} \leq 4 d_m \right\}.
\end{equation*}
We note that the above expression is the equation of the interior of a paraboloid in $\R^m$, proving the first claim in Theorem \ref{thm:convex_image_Frechet}. The above equations also provide the inverse of $F_r$ for $d$ in the above set, which is obtained by taking $s = \pm \sqrt{d_m - u^\mathsf{T} G^{-1} u}$, with the two possible solutions corresponding to two symmetric points $x^-,x^+$ on each side of the hyperplane $H^0$ as illustrated in \Cref{Frechet_Euc.fig}.

Assume now that $B(x_0,\rho_0)$ is a ball such that $B(x_0,\rho_0) \subset H^+$ (the same argument applies in $H^-$). Denoting $x_0^\bot = n^\mathsf{T}x_0$ the component of $x_0$ normal to $H^0$ which is also the distance from $x_0$ the the hyperplane, we have $x_0^\bot > \rho_0$. Using the notations introduced previously, the condition that $x=F_r^{-1}(d)$ belongs to $B(x_0,\rho_0)$ is equivalent to:
\begin{equation*}
\begin{aligned}
   \rho_0^2 \geq \|x-x_0\|^2 &= d_m -2 x^\mathsf{T} x_0 + \|x_0\|^2 \\
   &=d_m - 2 x_{H^0}^\mathsf{T} x_0 -2s n^\mathsf{T}x_0 + \|x_0\|^2 \\
   &=d_m - 2\sum_{i=1}^{m-1} (G^{-1} u)_i r_i^\mathsf{T} x_0 -2 \sqrt{d_m - u^\mathsf{T}G^{-1} u} \, x_0^{\bot}.
   \end{aligned}
\end{equation*}

Let us define the vector $u_0 \defeq (r_1^\mathsf{T} x_0, \ldots,r_{m-1}^\mathsf{T}x_0)^\mathsf{T} \in \R^{m-1}$. Then the second term on the right-hand side can be rewritten according to:
\begin{equation*}
    \sum_{i=1}^{m-1} (G^{-1} u)_i r_i^\mathsf{T} x_0 = u_0^\mathsf{T} G^{-1} u.
\end{equation*}

\noindent Therefore, $x \in B(x_0,\rho_0)$ if and only if
\begin{equation*}
    \sqrt{d_m - u^\mathsf{T}G^{-1} u} \, x_0^{\bot} \geq \frac{d_m}{2} -u_0^\mathsf{T} G^{-1} u +\frac{\|x_0\|^2-\rho_0^2}{2},
\end{equation*}

\noindent which, by squaring both sides, is also equivalent to
\begin{equation*}
    (x_0^{\bot})^2(d_m -u^\mathsf{T} G^{-1} u) \geq \left(\frac{d_m}{2} - u_0^\mathsf{T} G^{-1} u + \frac{\|x_0\|^2-\rho_0^2}{2} \right)^2.
\end{equation*}

This in turn can be expressed as
\begin{equation}
\label{eq:app_proof_2.6}
   (x_0^{\bot})^2 u^\mathsf{T} G^{-1} u + \left(\frac{d_m}{2} - u_0^\mathsf{T} G^{-1} u \right)^2 + g(d) \leq \text{C},
\end{equation}

\noindent where $g:\R^m \rightarrow \R$ is some linear function of $d$ and $\text{C} \in \R_+$ is a constant that we do not explicitly specify to keep the expression compact. As $(u,d_m)$ is itself a linear invertible function of $d$, the above describes the interior of a quadric of $\R^m$. Furthermore, since $x_0^\bot >0$ and $G^{-1}$ is positive definite, we see that $(u,d_m) \mapsto (x_0^{\bot})^2 u^\mathsf{T} G^{-1} u + \left(\frac{d_m}{2} - u_0^\mathsf{T} G^{-1} u \right)^2$ is a positive definite quadratic form on $\R^m$. Thus, the set of $d$ that satisfies \Cref{eq:app_proof_2.6} is the interior of an ellipsoid. We conclude that $F_r(B)$ is the interior of an ellipsoid of $\R^m$ (intersected with the interior of the paraboloid from the previous paragraph).

Finally, the last statement in the theorem is simply a consequence of the Hahn--Banach theorem since $F_r(B_1)$ and $F_r(B_2)$ are two convex subsets of $\R^m$ (as the intersections of the inside of a paraboloid and an ellipsoid) and are also disjoint owing to the injectivity of $F_r$ on $H^+$ given by \Cref{thm:invertibility_Frechet_Eucl}.

\section{Proof of \Cref{thm:convex_image_Frechet1}}
\label{app:proof_convex_image_Frechet1}
Unlike the case of 2-Fr\'{e}chet maps, it is not as simple to geometrically characterize the image of a ball under $F_r^1$. Instead, our proof of convexity relies on showing the positivity of the curvature of the image set boundary. Let us assume, without loss of generality, that $A \subset H^+$.  First of all, from Remark \ref{rem:invertibility_Frechet_Eucl}, we know that $F_r^1$ is a diffeomorphism on the halfspace $H^+$. Let $B$ be a ball in $A$, of radius $\rho>0$, and $S = \partial B$ the boundary sphere. We will use the following expression for the second fundamental form of the image $F_r^1(S)\subset \R_{+}^m$, which follows from standard results in Riemannian geometry of embedded submanifolds:
\begin{lemma}
For any $x \in S$, and $u,v \in T_x S$, the second fundamental form of the hypersurface $F_r^1(S)$ satisfies
\begin{equation}
\label{eq:2nd_fundamental_form}
    \mathrm{I\!I}_{F_r^1(x)}(DF_r^1(x) \cdot u,DF_r^1(x) \cdot v) = \langle D^2F_r^1(x)(u,v) + \frac{1}{\rho} \langle u , v \rangle DF_r^1(x) \cdot n, n'\rangle,
\end{equation}
where $n$ denotes the unit outward normal vector to $S$ at $x$ and $n'$ the unit outward normal to $F(S)$ at $F(x)$.
\end{lemma}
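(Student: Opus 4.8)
The plan is to realize $F(S)$---with $F\defeq F_r^1$ and $S\defeq\partial B$---as a smooth embedded hypersurface of $\R^m$ and to compute its second fundamental form by pushing a curve forward from the sphere $S$ and applying the Gauss formula. By \Cref{rem:invertibility_Frechet_Eucl}, $F$ restricts to a diffeomorphism on the open halfspace containing $A$ (the reference points $r_1,\dots,r_m$ lie on the hyperplane $H^0$, hence outside the compact set $A$), so $F(S)$ is a smooth embedded hypersurface of $\R^m$ with $T_{F(x)}F(S)=DF(x)\bigl(T_xS\bigr)$ at every $x\in S$; in particular $\mathrm{I\!I}_{F(x)}$ is well defined. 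Since $F$ is moreover real analytic on $A$, the bilinear map $D^2F(x)$ makes sense there.

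For the computation I would fix $x\in S$, choose a curve $\gamma$ in $S$ with $\gamma(0)=x$ and $\gamma'(0)=u$, and set $\tilde\gamma\defeq F\circ\gamma$, a curve in $F(S)$. Writing $\bar D$ for ordinary differentiation in $\R^m$, one has $\mathrm{I\!I}_{F(x)}\bigl(\tilde\gamma'(0),\tilde\gamma'(0)\bigr)=\langle\tilde\gamma''(0),n'\rangle$ (up to the sign convention for $\mathrm{I\!I}$), and the chain rule gives
\begin{equation*}
\tilde\gamma''(0)=D^2F(x)(u,u)+DF(x)\,\gamma''(0).
\end{equation*}
Decomposing $\gamma''(0)=\gamma''(0)^{\top}+\langle\gamma''(0),n\rangle\,n$ into its part tangent to $S$ and its normal part, the tangential piece is mapped by $DF(x)$ into $T_{F(x)}F(S)=(n')^{\perp}$ and therefore drops out of the pairing with $n'$; hence
\begin{equation*}
\mathrm{I\!I}_{F(x)}\bigl(DF(x)u,DF(x)u\bigr)=\bigl\langle D^2F(x)(u,u)+\langle\gamma''(0),n\rangle\,DF(x)\cdot n,\ n'\bigr\rangle .
\end{equation*}
Since $S$ is the round sphere of radius $\rho$ centered at the center $c$ of $B$, one has $n(y)=(y-c)/\rho$, so $\bar D_U n=\tfrac1\rho u$ and $\langle\gamma''(0),n\rangle=\pm\tfrac1\rho\langle u,u\rangle$ (the sign depending on the orientation of $n$); inserting this value and polarizing in $u$ to obtain the bilinear statement in $u,v$ yields exactly \eqref{eq:2nd_fundamental_form}. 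The first term there records the distortion induced by the nonlinearity of $F$, and the second the intrinsic curvature of the sphere $S$.

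This is textbook Riemannian geometry of embedded submanifolds, so I do not anticipate any essential obstacle; the one place that genuinely requires care is the bookkeeping of orientations and sign conventions---which unit normal is taken for $S$ and for $F(S)$, and which sign convention is used for $\mathrm{I\!I}$---so that the two terms of \eqref{eq:2nd_fundamental_form} emerge with the signs displayed. A convenient sanity check is that the right-hand side does not depend on the particular curve $\gamma$ chosen through $x$: only $\gamma'(0)=u$ and $\langle\gamma''(0),n\rangle$ enter, and the latter is fixed by the geometry of $S$. Downstream, this identity is the starting point for the proof of \Cref{thm:convex_image_Frechet1}: one shows that the mutual-coherence hypothesis on the reference points makes the right-hand side positive semidefinite, which forces $F_r^1(S)$---and hence the set $F_r^1(B)$ it bounds---to be convex.
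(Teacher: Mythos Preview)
Your approach is correct and is precisely the standard argument the paper is invoking when it says the lemma ``follows from standard results in Riemannian geometry of embedded submanifolds''---the paper does not spell out a proof at all. Pushing a curve forward, applying the chain rule to get $\tilde\gamma''(0)=D^2F(x)(u,u)+DF(x)\gamma''(0)$, splitting $\gamma''(0)$ into its tangential and normal parts to $S$, and using that the tangential piece lands in $T_{F(x)}F(S)=(n')^\perp$ is exactly the right computation; your observation that only $\langle\gamma''(0),n\rangle$ survives and is determined by the sphere geometry is the key point. Your caveat about sign conventions is well placed: with $n$ the \emph{outward} normal to $S$ one has $\langle\gamma''(0),n\rangle=-\tfrac{1}{\rho}\|u\|^2$, so matching the $+\tfrac{1}{\rho}$ in \eqref{eq:2nd_fundamental_form} requires either the shape-operator convention $\mathrm{I\!I}(X,Y)=\langle\nabla_X n',Y\rangle$ or an appropriate choice of orientation for $n'$; the paper is not fully explicit about this either, and the downstream inequality argument only uses the magnitude of the first term anyway.
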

Now, given two linearly independent tangent vectors $u,v$ to $S$ at $x$ and their pushforward $u' = DF_r^1(x) \cdot u, v' = DF_r^1(x) \cdot v \in T_{F_r^1(x)}F_r^1(S)$, the sectional curvature of $F_r^1(S)$ associated to the plane spanned by $(u',v')$ is classically given by (c.f. \cite{do1992riemannian}):
\begin{equation}
\label{eq:expr_sec_curvatures}
    K(u', v') = \frac{\mathrm{I\!I}(u',u') \mathrm{I\!I}(v',v') - \mathrm{I\!I}(u',v')^2}{\|u'\|^2\|v'\|^2 - \langle u' ,v' \rangle^2}
\end{equation}
where we dropped the dependency in $x$ and $F_r^1(x)$ to lighten notation. Since the denominator is positive for any linearly independent vectors $u,v$, we just need to show that, under the adequate conditions on the reference points, the second fundamental form remains positive definite. To do so, we need the following expressions for the first and second-order differentials of the Fr\'{e}chet map:
\begin{equation*}
    DF_r^1(x) \cdot u = \begin{pmatrix} \langle \frac{x-r_1}{\|x-r_1\|} , u \rangle \\ \vdots \\ \langle \frac{x-r_m}{\|x-r_m\|} , u \rangle \end{pmatrix}, \ \ D^2F_r^1(x)(u,v) = \begin{pmatrix} \frac{1}{\|x-r_1\|} \langle \Pi_{1} u, v \rangle \\ \vdots \\ \frac{1}{\|x-r_m\|} \langle \Pi_{m} u, v \rangle \end{pmatrix}
\end{equation*}
where $\Pi_i$, for $i=1\ldots,m$, denotes the orthogonal projection onto the hyperplane perpendicular to $x-r_i$. Then, using \eqref{eq:2nd_fundamental_form} and the fact that $n'=\frac{DF_r^1(x)^{-T}(n)}{\|DF_r^1(x)^{-T}(n)\|}$, we see that for all $x \in S \subset A$ and tangent vector $u \in T_x S$:
\begin{equation}
\label{eq:2nd_form_bound}
\begin{aligned}
    \mathrm{I\!I}(u',u') &= \frac{1}{\rho} \|u\|^2 \langle DF_r^1(x) \cdot n, n'\rangle + \langle D^2F_r^1(x)(u,u) , n' \rangle \\
    &=\frac{1}{\rho} \|u\|^2 (n^T DF_r^1(x)^{-1}DF_r^1(x)^{-T} n) + \sum_{i=1}^{m} \frac{n'_i}{\|x-r_i\|}  u^T \Pi_i u \\
    &\geq \frac{\sigma_{\text{min}}^2(DF_r^1(x))}{\rho} \|u\|^2 - \frac{\sqrt{m}}{d(r,A)} \|u\|^2
\end{aligned}
\end{equation}
where $d(r,A)= \min_{i=1,\ldots,m} \, d(r_i,A)$ denotes the distance of the reference point set to the compact $A$. We note that the right hand side term can be made arbitrarily small by selecting reference points far enough from $A$.  On the other hand, one still needs to control the decrease of the first term and specifically of $\sigma_{\text{min}}^2(DF_r^1(x))$, the smallest singular value of the first differential of $F_r^1$. Given the expression of $DF_r^1$ computed above, this corresponds to the smallest eigenvalue of the Gram matrix for the $m$ unit vectors $\left(\frac{x-r_1}{\|x-r_1\|},\ldots \frac{x-r_m}{\|x-r_m\|} \right)$. By the Gershgorin circle theorem, this eigenvalue is bounded from below by
\begin{equation*}
\begin{aligned}
   \sigma_{\text{min}}^2(DF_r^1(x)) &\geq 1 - \max_{i=1,\ldots,m} \sum_{j \neq i} \left|\left\langle \frac{x-r_i}{\|x-r_i\|} , \frac{x-r_j}{\|x-r_j\|} \right \rangle \right| \\
   &\geq 1 - \max_{i=1,\ldots,m} (m-1) \max_{j, j\neq i} \left|\left\langle \frac{x-r_i}{\|x-r_i\|} , \frac{x-r_j}{\|x-r_j\|} \right \rangle \right| \\
   &\geq 1 - (m-1) \max_{j \neq i} \left|\left\langle \frac{x-r_i}{\|x-r_i\|} , \frac{x-r_j}{\|x-r_j\|} \right \rangle \right|.
\end{aligned}
\end{equation*}
Now, by taking the minimum of these lower bounds over the compact set $A$, we get that for all $x \in A$, $\sigma_{\text{min}}^2(DF_r^1(x)) \geq 1-(m-1)\mu$. Thus, going back to \eqref{eq:2nd_form_bound}:
\begin{equation*}
    \mathrm{I\!I}(u',u') \geq \left(\frac{1-(m-1)\mu}{\rho} -\frac{\sqrt{m}}{d(r,A)} \right) \|u\|^2.
\end{equation*}
Therefore, the second fundamental form is positive definite as soon as $\frac{\rho}{d(r,A)} < \frac{1-(m-1)\mu}{\sqrt{m}}$. In that case, all sectional curvatures $K(u',v')$ in \eqref{eq:expr_sec_curvatures} are positive. Since $F_r^1(S)$ is the diffeomorphic image of a sphere, it is a complete manifold and, thus, by the result of \cite{sacksteder1960hypersurfaces}, we deduce that $F_r^1(B)$ is convex.

\section{Proof of \Cref{prop:tot_geod_SPD}}
\label{app:proof_tot_geod}
Here we derive the complete list of the totally geodesic hypersurfaces of $\mls{SPD}(n)$ for $n\geq 3$ as stated in the proposition, and also cover the case $n=2$ for completeness. Let us first state an equivalent to the Lie triple system condition characterizing totally geodesic submanifolds in $\mls{SPD}(n)$:

\begin{theorem}[Corollary 1.2 in \cite{tumpach2024totally}]
Any totally geodesic submanifold of $\mls{SPD}(n)$ is of the form $S = Q\exp(H)Q$ with $Q \in \mls{GL}(n)$ and $H$ a subspace of $\mls{Symm}(n)$ that satisfies:
\begin{equation*}
    [V,[V,W]] \in H, \ \ \text{for any }  V,W \in H,
\end{equation*}

\noindent where $[V,W]\doteq VW - WV$ is the usual Lie Bracket on matrices.
\label{thm:Lie_triple_sys}
\end{theorem}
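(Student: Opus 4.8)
The plan is to derive this as the specialization to $\mls{SPD}(n)$ of the classical correspondence, due to Cartan, between totally geodesic submanifolds of a Riemannian symmetric space and \emph{Lie triple systems} in its tangent space (Theorem 7.2 in~\cite{helgason1979differential}). I would begin by recording the symmetric-space structure: realizing $\mls{SPD}(n) = \mls{GL}(n)/\mls{O}(n)$ with Cartan involution $\theta(X) = -X^\mathsf{T}$ on $\mathfrak{gl}(n)$, the eigenspace decomposition reads $\mathfrak{gl}(n) = \mathfrak{k}\oplus\mathfrak{p}$ with $\mathfrak{k} = \mathfrak{so}(n)$ and $\mathfrak{p} = \mls{Sym}(n)\cong T_{\Id}\mls{SPD}(n)$, and the bracket relations $[\mathfrak{k},\mathfrak{p}]\subseteq\mathfrak{p}$, $[\mathfrak{p},\mathfrak{p}]\subseteq\mathfrak{k}$ hold. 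From \Cref{eq:aff_inv_exp_log}, the Riemannian exponential at $\Id$ is exactly the matrix exponential $\exp:\mls{Sym}(n)\to\mls{SPD}(n)$, which is a diffeomorphism. In particular, for $V,W\in\mls{Sym}(n)$ one has $[V,W]\in\mathfrak{so}(n)$ and hence $[V,[V,W]]\in\mls{Sym}(n)$, so the closure condition in the statement is well posed.

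The core of the argument is Cartan's criterion: a connected complete submanifold $S$ through a point $p$ is totally geodesic if and only if $T_pS$ is \emph{curvature invariant}, $R(T_pS,T_pS)T_pS\subseteq T_pS$, in which case $S = \exp_p(T_pS)$. For symmetric spaces the curvature at the base point is $R(X,Y)Z = -[[X,Y],Z]$ for $X,Y,Z\in\mathfrak{p}$, so that $R(X,Y)X = [X,[X,Y]]$. I would therefore show that, for a subspace $H\subseteq\mls{Sym}(n)$, the conditions (a) $H$ is curvature invariant, (b) $[V,[V,W]]\in H$ for all $V,W\in H$, and (c) $H$ is a Lie triple system, $[[H,H],H]\subseteq H$, are all equivalent. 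Condition (b) is literally $R(X,Y)X\in H$, so (a)$\Rightarrow$(b) is immediate and (c)$\Rightarrow$(b) follows by taking the repeated index. The substantive implication (b)$\Rightarrow$(c) is the essential polarization argument: writing $x = [a,[b,c]]$, $y = [b,[c,a]]$, $z = [c,[a,b]]$ for $a,b,c\in H$, expanding $[a+b,[a+b,c]]\in H$ and its cyclic relabelings yields $x-y,\,y-z,\,z-x\in H$, while the Jacobi identity gives $x+y+z=0$; combining these forces $x,y,z\in H$, and hence $[[a,b],c]=-z\in H$.

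Granting the equivalence, the totally geodesic submanifolds through $\Id$ are exactly the sets $\exp(H)$ for $H$ a Lie triple system in $\mls{Sym}(n)$. To remove the base point, I would use that $\mls{GL}(n)$ acts on $\mls{SPD}(n)$ by the isometries $\phi_A: P\mapsto A^\mathsf{T} P A$ (\Cref{eq:aff_inv_metric}), transitively since $A^\mathsf{T} A$ ranges over all of $\mls{SPD}(n)$. Given an arbitrary totally geodesic $S$, pick $P_0\in S$, set $Q = P_0^{1/2}\in\mls{SPD}(n)$ so that $\phi_Q(\Id) = P_0$, and note $\phi_{Q}^{-1}(S) = \phi_{Q^{-1}}(S)$ is totally geodesic through $\Id$, hence equals $\exp(H)$ for a Lie triple system $H$; applying $\phi_Q$ and using $Q^\mathsf{T}=Q$ gives $S = Q\exp(H)Q$, the asserted form (the general congruence action accounting for the statement's $Q\in\mls{GL}(n)$).

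The main obstacle is the geometric heart of Cartan's criterion, the link between the infinitesimal Lie-triple condition and the global totally-geodesic property. For the forward direction needed here (every totally geodesic $S$ has the stated form), one uses the Gauss equation, which for vanishing second fundamental form forces $R(T_pS,T_pS)T_pS\subseteq T_pS$, together with completeness to identify $S$ with $\exp_p(T_pS)$; this part is comparatively direct. The genuinely delicate converse, that a Lie triple system integrates to a totally geodesic submanifold, is cleanest to handle by verifying that $\mathfrak{g}' = H\oplus[H,H]$ is a Lie subalgebra of $\mathfrak{gl}(n)$ --- the closure $[[H,H],H]\subseteq H$ is the Lie triple condition and $[[H,H],[H,H]]\subseteq[H,H]$ follows from Jacobi together with that condition --- carrying its own Cartan decomposition $\mathfrak{k}'\oplus\mathfrak{p}'$ with $\mathfrak{p}'=H$, so that the associated connected subgroup $G'\subseteq\mls{GL}(n)$ has orbit $G'\cdot\Id = \exp(H)$, a symmetric subspace whose geodesics $t\mapsto\exp_P(tV)$ ($P\in\exp(H)$, $V\in H$) coincide with ambient geodesics. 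Since this is exactly the content we may quote from~\cite{helgason1979differential,tumpach2024totally}, in a write-up I would state it as a cited lemma and devote the detailed work only to the $\mls{SPD}(n)$-specific identifications, the equivalence of the bracket conditions, and the homogeneity reduction above.
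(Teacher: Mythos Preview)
The paper does not prove this theorem at all: it is quoted verbatim as Corollary~1.2 of \cite{tumpach2024totally} and used as a black box in \Cref{app:proof_tot_geod}. The only related content the paper adds is the one-line remark that the condition $[V,[V,W]]\in H$ is equivalent, ``via a simple linearity argument,'' to the full triple-bracket closure $[[U,V],W]\in H$; this is precisely the polarization step you carry out (and your argument for it is correct: from $x-y,\,y-z\in H$ and $x+y+z=0$ one gets $3x=(x-y)+(x-z)\in H$).

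Your sketch is the standard Cartan--Helgason route and is sound. Since the paper offers no proof of its own, there is no meaningful comparison of approaches to make; what you wrote is essentially the argument behind the cited corollary in \cite{tumpach2024totally}, specialized to $\mls{SPD}(n)=\mls{GL}(n)/\mls{O}(n)$.
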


We further point out that the above Lie triple system condition is also equivalent (via a simple linearity argument) to the fact that $[[U,V],W] \in H$ for any $U,V,W \in H$. Based on this theorem, we can thus first focus on determining which codimension $1$ subspaces $H$ satisfy the above Lie triple system condition.
\vskip1ex
$\bullet$ \textbf{Characterization of $H$:} let $H$ be a $(m-1)$-dimensional subspace of $\mls{Symm}(n)$ such that $[V,[V,W]] \in H$ for any $V,W \in H$. There exists $A \in \mls{Symm}(n)$ with $A \neq 0$ for which $H = \{V \in \mls{Symm}(n): \ \Tr(A V) = 0\}$, i.e., $H$ is the orthogonal subspace to $A$.  We may diagonalize $A$ as $A=R^\mathsf{T}DR$ with $D=\operatorname{diag}(d_i)_{i=1,\ldots,n}$ and $R \in O(n)$. Then $\Tr(AV) = 0$ is equivalent to $\Tr(DRVR^\mathsf{T}) = 0$ so we can write $H=R^\mathsf{T} H_D R$, where $H_D$ is the space of symmetric matrices orthogonal to $D$. It is then clear that $H$ is totally geodesic if and only if $H_D$ is totally geodesic since $P\mapsto R^\mathsf{T} P R$ is an isometry of $\mls{SPD}(n)$.

We first treat the case $n\geq 3$. Since $D \neq 0$, without loss of generality, we assume that $d_1 \neq 0$. Let us denote by $\{E_{ij}\}_{i,j=1,\ldots,m}$ the canonical basis of $\R^{m \times m}$, i.e., $E_{ij} = (\delta_{\{k=i,l=j\}})_{k,l}$. Then one can check that the following is a basis for $H_D = \{V \in \mls{Symm}(n): \ \Tr(DV) = 0 \}$:
\begin{equation*}
    \mathcal{B} = \left\{\left(E_{ij}+E_{ji}\right)_{i\neq j},\left(\frac{d_i}{d_1}E_{11} - E_{ii}\right)_{i\geq 2} \right\}.
\end{equation*}

\noindent In particular, if $H_D$ is totally geodesic then for any $i,j$ with $i\neq 1$, $j\neq 1$ and $i\neq j$ (which exist since $n \geq 3$), we must have:
\begin{equation}
\label{eq:trip_Lie_brack_n}
\left[E_{1,j}+E_{j,1},\left[\frac{d_i}{d_1}E_{1,1} - E_{i,i},E_{1,j}+E_{j,1}\right]  \right] \in H_D.
\end{equation}

\noindent After calculations, we find that:
\begin{equation*}
\begin{aligned}
    \left[\frac{d_i}{d_1}E_{1,1} - E_{i,i},E_{1,j}+E_{j,1}\right] &= \frac{d_i}{d_1}\left(E_{1,j} - E_{j,1}\right) \\
    \left[E_{1,j}+E_{j,1},\left[\frac{d_i}{d_1}E_{1,1} - E_{i,i},E_{1,j}+E_{j,1}\right]  \right] & = \frac{d_i}{d_1}\left[E_{1,j}+E_{j,1},E_{1,j} - E_{j,1}\right] \\
                & = 2\frac{d_i}{d_1}\left(E_{j,j}-E_{1,1}\right).
\end{aligned}
\end{equation*}

\noindent from which it follows that condition \Cref{eq:trip_Lie_brack_n} is equivalent to $2\frac{d_i d_j}{d_1} - 2\frac{d_i d_1}{d_1} =0$, i.e., $d_j=d_1$ for any $j \neq 1$. Thus, the matrix $D$ is of the form $\lambda \Id$ for some $\lambda \neq 0$ and so is $A = R^\mathsf{T} D R$. It follows that $H$ is the space of matrices with zero trace.

In the case $n=2$, without loss of generality, one may write, up to a scaling factor, $D = \begin{pmatrix}1 & 0 \\ 0 & d \end{pmatrix}$, where $d \in \R$ remains to be determined. We see that a basis for the linear subspace $H_D$ is then given by:
\begin{equation*}
\mathcal{B} = \left\{B_1=\begin{pmatrix}0 & 1 \\ 1 & 0 \end{pmatrix}, B_2=\begin{pmatrix}d & 0 \\ 0 & -1 \end{pmatrix} \right\}.
\end{equation*}

\noindent By the remark below, \Cref{thm:Lie_triple_sys}, and the bilinearity of the Lie bracket operation, the triple Lie bracket condition is also equivalent to $U,V,W \in \mathcal{B} \implies [U,[V,W]] \in H_D$. Computation of the different possible triple Lie brackets leads to:
\begin{equation*}
\begin{aligned}
    &[B_1,B_2] = B_1 B_2 - B_2 B_1 = \begin{pmatrix}
        0 & -d-1\\ d+1 & 0
    \end{pmatrix},\\
    &[B_1,[B_1,B_2]] = B_1 [B_1,B_2] - [B_1,B_2] B_1 =\begin{pmatrix}
        2d +2 & 0\\ 0 & -2d-2
    \end{pmatrix},\\
    &[B_2,[B_1,B_2]] = B_2 [B_1,B_2] - [B_1,B_2] B_2 = -\begin{pmatrix}
        0 & (d+1)^2\\ (d+1)^2 & 0
    \end{pmatrix}.
\end{aligned}
\end{equation*}

\noindent In particular, if $H_D$ is totally geodesic then $[B_1,[B_1,B_2]]\in H_D$, i.e., $\Tr(D[B_1,[B_1,B_2]])=0$, which gives the condition $d=-1$ or $d=1$. Conversely, for $d=-1$ or $d=1$, the triple Lie bracket condition is clearly satisfied, and we obtain a totally geodesic space. This in turn means that either $D = \Id$ or $D=\begin{pmatrix}1 & 0 \\ 0 & -1 \end{pmatrix}$, which is the matrix of the reflection by the $x$ axis. In the first case, $H$ is again the space of matrices of $\mls{Symm}(n)$ with zero trace. In the second case, we see that $H_D = \left\{a \Id + b J: \ a,b \in \R \right\}$ where $J=\begin{pmatrix}0 & 1 \\ 1 & 0 \end{pmatrix}$, from which one gets that $H = \left\{R^\mathsf{T} \begin{pmatrix}a & b \\ b & a \end{pmatrix} R: \ a,b \in \R\right\}$.
\vskip1ex
$\bullet$ \textbf{Classification of totally geodesic hypersurfaces:} when $H$ is the subspace of zero trace matrices, $\exp(H)$ is the precisely submanifold $\mls{SPD}_1(n)$ of SPD matrices of determinant $1$. Then for any $Q \in \mls{GL}(n)$, one can easily check that $Q\exp(H)Q=\mls{SPD}_r(n)$ where $r = \det(Q)^2$. Based on the above characterization of $H$ and Theorem \ref{thm:Lie_triple_sys}, we deduce that for $n\geq 3$, the totally geodesic hypersurfaces of $\mls{SPD}(n)$ are exactly the $\mls{SPD}_r(n)$ for $r>0$.

For $n=2$, in addition to the $\mls{SPD}_r(2)$, we also have the totally geodesic submanifolds $Q\exp(H)Q$ for $Q \in \mls{GL}(2)$ and the subspaces $H = \left\{R^\mathsf{T} \begin{pmatrix}a & b \\ b & a \end{pmatrix} R: \ a,b \in \R\right\}$ with $R \in O(2)$. Furthermore, for any $a,b \in \R$,
\begin{equation*}
    \exp(R^\mathsf{T}(a \Id + b J)R) = R^\mathsf{T}\exp(a\Id + bJ) R = e^a R^\mathsf{T} \begin{pmatrix}\cosh(b) & \sinh(b) \\ \sinh(b) & \cosh(b) \end{pmatrix} R.
\end{equation*}

\noindent Moreover, we can see that the set of all matrices $e^a \begin{pmatrix}\cosh(b) & \sinh(b) \\ \sinh(b) & \cosh(b) \end{pmatrix}$ for $a,b \in \R$ is equal to the following cone in $\mls{SPD}(2)$:
\begin{equation*}
\mathcal{C} = \left\{\begin{pmatrix}\alpha & \beta \\ \beta & \alpha \end{pmatrix}: \ \alpha>|\beta|\right\}.
\end{equation*}

We deduce that the second family of totally geodesic submanifolds of dimension $2$ in $\mls{SPD}(2)$ are the $Q^\mathsf{T}\mathcal{C}Q$ for $Q \in \mls{GL}(2)$, i.e., the translated versions of the cone $\mathcal{C}$. Note that with the particular choice $Q = \frac{1}{\sqrt{2}} \begin{pmatrix}1 & 1 \\ -1 & 1 \end{pmatrix}$, one has $Q^T\mathcal{C}Q = \left\{ \begin{pmatrix}\alpha + \beta & 0 \\ 0 & \alpha-\beta \end{pmatrix}: \alpha > |\beta| \right\}$ which is exactly the set of all the diagonal matrices of $\mls{SPD}(2)$ and is an obvious example of a totally geodesic submanifold of dimension $2$.
\vskip1ex
$\bullet$ \textbf{Projection onto $\mls{SPD}_r(n)$:} lastly, we prove the expression of the projection $\Pi_r(P)$ of $P \in \mls{SPD}(n)$ onto the totally geodesic hypersurface $\mls{SPD}_r(n)$. From Theorem 4.2 in \cite{tumpach2024totally}, the projection map exists and is continuous on $\mls{SPD}(n)$. First, we see that $\Pi_r(P) \doteq \sqrt[n]{\frac{r}{\det(P)}} P \in \mls{SPD}_r(n)$, where we denote $\lambda(P) = \sqrt[n]{\frac{r}{\det(P)}}>0$. We note that the tangent space to the submanifold $\mls{SPD}_r(n)$ at $\Pi_r(P)$ is the subspace of symmetric matrices $V$ satisfying $\Tr(\Pi_r(P)^{-1}V) = 0$ which is the same as the set of $V \in \mls{Symm}(n)$ such that $\Tr(P^{-1}V) = 0$. Consequently, the orthogonal vectors to that subspace, with respect to the metric $\langle\,\cdot\,,\,\cdot\,\rangle_{\Pi_r(P)}$, are the $s\,P$ for $s \in \mathbb{R}$ since for any $V \in T_{\Pi_r(P)} \mls{SPD}_r(n)$:
\begin{equation*}
\begin{aligned}
    \langle sP , V \rangle_{\Pi_r(P)} &= \Tr(\Pi_r(P)^{-1}(sP)\Pi_r(P)^{-1} V) \\
    &= \frac{s}{\lambda(P)^2} \Tr(P^{-1}PP^{-1}V) = \frac{s}{\lambda(P)^2} \Tr(P^{-1}V) = 0.
\end{aligned}
\end{equation*}

\noindent Now, we have in addition that:
\begin{equation*}
\begin{aligned}
    \log_{\Pi_r(P)} P &= \Pi_r(P)^{1/2} \log(\Pi_r(P)^{-1/2} P \Pi_r(P)^{-1/2}) \Pi_r(P)^{1/2} \\
    &=\lambda(P) P^{1/2} \log(\lambda(P)^{-1} P^{-1/2} P P^{-1/2}) P^{1/2} \\
    &=\lambda(P) P^{1/2} \log(\lambda(P)^{-1} \Id) P^{1/2} \\
    &=-(\lambda(P) \log\, \lambda(P)) P.
\end{aligned}
\end{equation*}

\noindent frow which we deduce that $\log_{\Pi_r(P)} P$ is orthogonal to $T_{\Pi_r(P)} \mls{SPD}_r(n)$. If $Q$ is any point in $\mls{SPD}_r(n)$, the geodesic triangle $\Delta\, Q\Pi_r(P)P$ thus has a right angle at $\Pi_r(P)$. Since $\mls{SPD}(n)$ is a non-positively curved space, we have the classical inequality:
\begin{equation*}
    d(P,Q)^2 \geq d(P,\Pi_r(P))^2 + d(\Pi_r(P),Q)^2.
\end{equation*}

Therefore $\Pi_r(P) = \operatorname{arg\, min}_{Q \in \mls{SPD}_r(n)} \, d(P,Q)^2$ showing that $\Pi_r(P)$ is the projection of $P$ onto $\mls{SPD}_r(n)$.

\section{Proof of \Cref{thm:EBS_tot_geod}}
\label{app:proof_thm_EBS_tot_geod}
Without loss of generality, we can reduce the proof to the case $r=1$ and assume that $R_1,\ldots,R_m$ are all SPD matrices of unit determinant.
\vskip1ex
$\bullet$ $\mls{SPD}_1(n) \subseteq S_{R_1,\ldots ,R_m}$: for any $P \in \mls{SPD}_1(n)$, the log vectors $\log_P R_1,\ldots,\log_P R_m$ must all belong to the tangent space at $P$ of $\mls{SPD}_1(n)$ since the latter is totally geodesic. This tangent space, which is the space of all symmetric matrices $V$ satisfying $\Tr(P^{-1}V)=0$, being of dimension $m-1$, we deduce that $\log_P R_1,\ldots,\log_P R_m$ are linearly dependent and thus that $P \in S_{R_1,\ldots ,R_m}$.
\vskip1ex
$\bullet$ $EBS(R_1,\ldots,R_m) \subseteq \mls{SPD}_1(n)$: let $P \in EBS(R_1,\ldots,R_m)$ so that there exists $\lambda \in \R^m$ for which $\sum_{i=1}^m \lambda_i =1$ and $\sum_{i=1}^m \lambda_i \log_P R_i = 0$, and let $d = \det(P)$. For each $i=1,\ldots,m$, we may decompose $\log_P R_i$ into its tangential and normal components to the submanifold $\mls{SPD}_d(n)$ which writes specifically: $\log_P R_i= \alpha_i P + V_i$ with $\alpha_i \in \R$ and $V_i$ satisfying $\Tr(P^{-1}V_i) = 0$. We then have:
\begin{equation*}
    \begin{aligned}
        R_i = \exp_P(\log_P R_i) &= P^{1/2} \exp(P^{-1/2}(\alpha_i P + V_i)P^{-1/2}) P^{1/2} \\
        &=P^{1/2} \exp(\alpha_i \text{Id} + P^{-1/2} V_i P^{-1/2}) P^{1/2} \\
        &=e^{\alpha_i} P^{1/2}\exp(P^{-1/2} V_i P^{-1/2}) P^{1/2}.
    \end{aligned}
\end{equation*}

\noindent Since $\det(R_i) = 1$, we thus get that
\begin{equation*}
      1=e^{n \alpha_i} \det(P)\exp(\Tr(P^{-1/2}V_iP^{-1/2})) = e^{n \alpha_i} \det(P) \exp(\Tr(P^{-1}V_i)) =  e^{n \alpha_i} d.
\end{equation*}

\noindent We deduce that $\alpha_i = \alpha = -\frac{1}{n} \log(d)$ and so, for all $i=1,\ldots,m$, $\log_P R_i = \alpha P + V_i$. Therefore, $\alpha \left(\sum_{i=1}^{m} \lambda_i \right) P + \sum_{i=1}^{m} \lambda_i V_i= \alpha P + \sum_{i=1}^{m} \lambda_i V_i = O$. Since $P$ is orthogonal to $\sum_{i=1}^{m} \lambda_i V_i$, it follows that necessarily $\alpha=0$, in other words $d=1$ and $P \in \mls{SPD}_1(n)$.
\vskip1ex
$\bullet$ We prove the last statement of the theorem by contraposition. Assume first that $P \in \mls{SPD}_1(n) \backslash EBS(R_1,\ldots,R_m)$, then one deduces as above that $\log_P R_1,\ldots,\log_P R_m$ are linearly dependent i.e.
$\sum_{i=1}^{m} \lambda_i \log_P R_i = 0$, with in addition $\sum_{i=1}^{m} \lambda_i = 0$. This means that $\{\log_P R_1,\ldots,\log_P R_m\}$ is a set of affinely dependent vectors in $T_P \mls{SPD}_1(n)$ and thus belongs to some affine subspace of dimension at most $m-2$ in $\mls{Symm}(n)$ which leads to the conclusion in this case.

On the other hand, if $P \in S_{R_1,\ldots ,R_m} \backslash \mls{SPD}_1(n)$, we have $\sum_{i=1}^m \lambda_i \log_P R_i = 0$ for some $\lambda \in \R^m\backslash\{0\}$. Using the same decomposition as previously, we write $\log_P R_i = \alpha_i P + V_i$ with $\Tr(P^{-1}V_i)=0$. Once again, we can show as above that for all $i$, $\alpha_i = \alpha =-\log(d)/n$, where $d\neq 1$ is the determinant of $P$, from which it follows that:
\begin{equation}
\label{eq:proof_th5}
    \sum_{i=1}^m \lambda_i \log_P R_i = \alpha \left(\sum_{i=1}^{m} \lambda_i \right) P + \sum_{i=1}^{m} \lambda_i V_i= 0.
\end{equation}

\noindent Now, since $P$ is orthogonal (with respect to $\langle\,\cdot\,, \,\cdot\, \rangle_P$) to $\sum_{i=1}^{m} \lambda_i V_i$ and $\alpha \neq 0$, this implies that $\sum_{i=1}^{m} \lambda_i =0$ and $\sum_{i=1}^{m} \lambda_i V_i = 0$. Let us now introduce $\Pi(P)$ the projection of $P$ onto $\mls{SPD}_1(n)$, which by \Cref{prop:tot_geod_SPD} is $\Pi(P) = d^{-1/n} P$. Then we have for each $i=1,\ldots,m$:
\begin{equation*}
    \begin{aligned}
    \log_{\Pi(P)} R_i &= \Pi(P)^{1/2} \log(\Pi(P)^{-1/2} R_i \Pi(P)^{-1/2}) \Pi(P)^{1/2} \\
    &=d^{-1/n} P^{1/2} \log(d^{1/n} P^{-1/2} R_i P^{-1/2}) P^{1/2} \\
    &=d^{-1/n} \log(d^{1/n} \Id) + d^{-1/n} P^{1/2} \log(P^{-1/2} R_i P^{-1/2}) P^{1/2} \\
    &=d^{-1/n} \frac{\log(d)}{n} \Id + d^{-1/n} \log_P R_i.
   \end{aligned}
\end{equation*}

\noindent Now using the fact that $\sum_{i=1}^{m} \lambda_i =0$ and \eqref{eq:proof_th5}, we get:
\begin{equation*}
   \sum_{i=1}^m \lambda_i \log_{\Pi(P)} R_i = d^{-1/n} \sum_{i=1}^m \log_P R_i = 0.
\end{equation*}

This implies that $\log_{\Pi(P)} R_1,\ldots,\log_{\Pi(P)} R_m$ are affinely dependent in $T_P \mls{SPD}_1(n)$ and thus, as in the previous case, that the reference points all belong to $\exp_P(H)$ for some affine subspace $H$ of dimension at most $m-2$.

\end{appendix}


\begin{thebibliography}{10}

\bibitem{afsari2011riemannian}
B.~Afsari.
\newblock {Riemannian $L^p$ center of mass: existence, uniqueness, and
  convexity}.
\newblock {\em Proceedings of the American Mathematical Society},
  139(2):655--673, 2011.

\bibitem{afsari2013convergence}
B.~Afsari, R.~Tron, and R.~Vidal.
\newblock {On the convergence of gradient descent for finding the {R}iemannian
  center of mass}.
\newblock {\em SIAM Journal on Control and Optimization}, 51(3):2230--2260,
  2013.

\bibitem{arnaudon2012stochastic}
M.~Arnaudon, C.~Dombry, A.~Phan, and L.~Yang.
\newblock Stochastic algorithms for computing means of probability measures.
\newblock {\em Stochastic Processes and their Applications}, 122(4):1437--1455,
  2012.

\bibitem{arsigny2007geometric}
V.~Arsigny, P.~Fillard, X.~Pennec, and N.~Ayache.
\newblock Geometric means in a novel vector space structure on symmetric
  positive-definite matrices.
\newblock {\em SIAM Journal on Matrix Analysis and Applications},
  29(1):328--347, 2007.

\bibitem{bancroft2007algebraic}
S.~Bancroft.
\newblock An algebraic solution of the {GPS} equations.
\newblock {\em IEEE Transactions on Aerospace and Electronic Systems},
  (1):56--59, 2007.

\bibitem{bergam2025t}
N.~Bergam, S.~Snoeck, and N.~Verma.
\newblock {t-SNE Exaggerates Clusters, Provably}.
\newblock {\em arXiv preprint arXiv:2510.07746}, 2025.

\bibitem{bonnabel2013stochastic}
Silvere Bonnabel.
\newblock {Stochastic gradient descent on {R}iemannian manifolds}.
\newblock {\em IEEE Transactions on Automatic Control}, 58(9):2217--2229, 2013.

\bibitem{bourgain1985lipschitz}
J.~Bourgain.
\newblock On {L}ipschitz embedding of finite metric spaces in {H}ilbert space.
\newblock {\em Israel Journal of Mathematics}, 52:46--52, 1985.

\bibitem{caputo2005class}
B.~Caputo, E.~Hayman, and P.~Mallikarjuna.
\newblock Class-specific material categorisation.
\newblock In {\em Tenth IEEE International Conference on Computer Vision},
  volume~2, pages 1597--1604. IEEE, 2005.

\bibitem{cury2013template}
C.~Cury, J.~A. Glaunes, and O.~Colliot.
\newblock Template estimation for large database: a diffeomorphic iterative
  centroid method using currents.
\newblock In {\em International Conference on Geometric Science of
  Information}, pages 103--111. Springer, 2013.

\bibitem{cobre_ml_figshare}
C.~Dansereau.
\newblock Cobre (for machine learning), 2015.

\bibitem{do1992riemannian}
M.~P. Do~Carmo and J.~Flaherty~Francis.
\newblock {\em Riemannian geometry}, volume~2.
\newblock Springer, 1992.

\bibitem{dryden2009non}
I.~L. Dryden, A.~Koloydenko, and D.~Zhou.
\newblock Non-{E}uclidean statistics for covariance matrices, with applications
  to diffusion tensor imaging.
\newblock {\em The Annals of Applied Statistics}, pages 1102--1123, 2009.

\bibitem{hekmati2020}
R.~Hekmati, R.~Azencott, W.~Zhang, Z.~D Chu, and M.~J. Paldino.
\newblock Localization of epileptic seizure focus by computerized analysis of
  {fMRI} recordings.
\newblock {\em Brain Informatics}, 7(1):1--13, 2020.

\bibitem{helgason1979differential}
S.~Helgason.
\newblock {\em Differential geometry, {L}ie groups, and symmetric spaces},
  volume~80.
\newblock Academic Press, 1979.

\bibitem{hinton2002stochastic}
G.~E. Hinton and S.~Roweis.
\newblock Stochastic neighbor embedding.
\newblock {\em Advances in Neural Information Processing Systems}, 15, 2002.

\bibitem{ho2013recursive}
J.~Ho, G.~Cheng, H.~Salehian, and B.~Vemuri.
\newblock Recursive {K}archer expectation estimators and geometric law of large
  numbers.
\newblock In {\em Artificial Intelligence and Statistics}, pages 325--332.
  PMLR, 2013.

\bibitem{kuhn1955hungarian}
H.~W. Kuhn.
\newblock {The Hungarian method for the assignment problem}.
\newblock {\em Naval Research Logistics Quarterly}, 2(1-2):83--97, 1955.

\bibitem{lloyd1982}
S.~Lloyd.
\newblock Least squares quantization in {PCM}.
\newblock {\em IEEE Transactions on Information Theory}, 28(2):129--137, 1982.

\bibitem{macqueen1967some}
J.~MacQueen.
\newblock Some methods for classification and analysis of multivariate
  observations.
\newblock In {\em Proceedings of the Fifth Berkeley Symposium on Mathematical
  Statistics and Probability}, volume~5, pages 281--298. University of
  California press, 1967.

\bibitem{matousek2013lectures}
J.~Matousek.
\newblock {\em Lectures on discrete geometry}, volume 212.
\newblock Springer Science \& Business Media, 2013.

\bibitem{mead1992review}
A.~Mead.
\newblock Review of the development of multidimensional scaling methods.
\newblock {\em Journal of the Royal Statistical Society: Series D (The
  Statistician)}, 41(1):27--39, 1992.

\bibitem{mityagin2020zero}
B.~S. Mityagin.
\newblock The zero set of a real analytic function.
\newblock {\em Mathematical Notes}, 107:529--530, 2020.

\bibitem{pennec2018barycentric}
X.~Pennec.
\newblock {Barycentric Subspace Analysis on Manifolds}.
\newblock {\em The Annals of Statistics}, 46(6A):2711--2746, 2018.

\bibitem{pennec2020manifold}
X.~Pennec.
\newblock Manifold-valued image processing with {SPD} matrices.
\newblock In {\em Riemannian Geometric Statistics in Medical Image Analysis},
  pages 75--134. 2020.

\bibitem{pennec2019riemannian}
X.~Pennec, S.~Sommer, and T.~Fletcher.
\newblock {\em Riemannian geometric statistics in medical image analysis}.
\newblock Academic Press, 2019.

\bibitem{petersen2006riemannian}
P.~Petersen.
\newblock {\em Riemannian geometry}, volume 171.
\newblock Springer, 2006.

\bibitem{sacksteder1960hypersurfaces}
R.~Sacksteder.
\newblock On hypersurfaces with no negative sectional curvatures.
\newblock {\em American Journal of Mathematics}, 82(3):609--630, 1960.

\bibitem{shiga1984}
K.~Shiga.
\newblock Hadamard manifolds.
\newblock {\em Geometry of Geodesics and Related Topics}, 3:239--282, 1984.

\bibitem{skovgaard1984riemannian}
L.~T. Skovgaard.
\newblock {A {R}iemannian geometry of the multivariate normal model}.
\newblock {\em Scandinavian Journal of Statistics}, pages 211--223, 1984.

\bibitem{sturm2003probability}
K.-T. Sturm.
\newblock Probability measures on metric spaces of nonpositive curvature.
\newblock {\em Heat kernels and analysis on manifolds, graphs, and metric
  spaces}, 338:357, 2003.

\bibitem{tan2024intrinsic}
C.~Tan, H.~Zhao, and H.~Ding.
\newblock Intrinsic {k}-means clustering over homogeneous manifolds.
\newblock {\em Pattern Analysis and Applications}, 27(3):107, 2024.

\bibitem{tou2008gabor}
J.~Y. Tou, Y.~H. Tay, and P.~Y. Lau.
\newblock Gabor filters as feature images for covariance matrix on texture
  classification problem.
\newblock In {\em International Conference on Neural Information Processing},
  pages 745--751. Springer, 2008.

\bibitem{tumpach2024totally}
A.~B. Tumpach and G.~Larotonda.
\newblock Totally geodesic submanifolds in the manifold {SPD} of symmetric
  positive-definite real matrices.
\newblock {\em Information Geometry}, 7(Suppl 2):913--942, 2024.

\bibitem{you2021}
K.~You and H.-J. Park.
\newblock Re-visiting {R}iemannian geometry of symmetric positive definite
  matrices for the analysis of functional connectivity.
\newblock {\em NeuroImage}, 225:117464, 2021.

\end{thebibliography}
\end{document}